\setlist{nosep}
\apptocmd{\sloppy}{\hbadness 10000\relax}{}{}
\title{Mixtures of Gaussians are Privately Learnable \\ with a Polynomial Number of Samples}
\author{}
\numberwithin{equation}{section}
\begin{document}

\author{
    Mohammad Afzali\thanks{McMaster University, \texttt{afzalikm@mcmaster.ca}}
    \and 
    Hassan Ashtiani\thanks{McMaster University, \texttt{zokaeiam@mcmaster.ca}. Hassan Ashtiani is also a faculty affiliate at Vector Institute and supported by an NSERC Discovery Grant.}
     \and 
    Christopher Liaw\thanks{Google, \texttt{cvliaw@google.com}.}
}

\maketitle

\begin{abstract}%
We study the problem of estimating mixtures of Gaussians under the constraint of differential privacy (DP). Our main result is that $\text{poly}(k,d,1/\alpha,1/\eps,\log(1/\delta))$ samples are sufficient to estimate a mixture of $k$ Gaussians in $\bR^d$ up to total variation distance $\alpha$ while satisfying $(\eps, \delta)$-DP. This is the first finite sample complexity upper bound for the problem that does not make any structural assumptions on the GMMs.

To solve the problem, we devise a new framework which may be useful for other tasks. On a high level, we show that if a class of distributions (such as Gaussians) is \textit{(1)} list decodable and \textit{(2)} admits a ``locally small'' cover~\citep{bun2021private} with respect to total variation distance, then the class of \emph{its mixtures} is privately learnable. The proof circumvents a known barrier indicating that, unlike Gaussians, GMMs do not admit a locally small cover~\citep{aden2021privately}. 

\end{abstract}

\begin{section}{Introduction}
        Density estimation---also known as distribution learning---is the fundamental task of estimating a distribution given samples generated from it. In the most commonly studied setting, the data points are assumed to be generated independently from an unknown distribution $f$ and the goal is to find a distribution $\hat{f}$ that is close to $f$ with respect to the total variation (TV) distance. Assuming that $f$ belongs to (or is close to a member of) a class of distributions $\cF$, an important question is to characterize the number of samples that is required to guarantee that with high probability $\hat{f}$ is close to $f$ in TV distance.

        There is a large body of work on characterizing the optimal sample complexity (or the related minimax error rate) of learning various classes of distributions (see \citet{diakonikolas2016learning,devroye2001combinatorial, ashtiani2018some} for an overview). Nevertheless, determining the sample complexity (or even learnability) of a general class of distributions remains an important open problem (e.g., Open Problem 15.1 in~\citet{diakonikolas2016learning}). 

        We study the problem of density estimation under the constraint of differential privacy \citep{dwork2006calibrating}.            
        Intuitively, differential privacy (DP) requires that changing a single individual's data does not identifiably alter the outcome of the estimation. 
        There are various formulations of DP.            
        The original pure DP ($\eps$-DP) formulation can be somewhat restrictive, as learning some simple classes of distributions (such as univariate Gaussians with unbounded mean) is impossible in this model. An alternative formulation is approximate DP \citep{dwork2006our}, which is also known as $(\eps, \delta)$-DP. Interestingly, we are not aware of any class of distributions that is learnable in the non-private (agnostic\footnote{In the agnostic setting, we do not assume the true distribution belongs to the class that we are considering. The goal is then finding a distribution in the class that is (approximately) the closest to the true distribution. The previous arxiv version of this paper did not mention ``agnostic'' for the conjecture. However, we believe the conjecture is more likely to be true for the agnostic setting, given the recent developments on connections between robustness and privacy~\cite{asi2023robustness, hopkins2023robustness}.}) setting but not learnable in the $(\eps, \delta)$-DP setting. 
        This is in sharp contrast with other areas of learning such as classification\footnote{For example, initial segments over the real line are learnable in the non-private setting but not in the $(\eps, \delta)$-DP model; see~\cite{bun2015differentially, alon2019private,kaplan2020privately,bun2020equivalence, cohen2023optimal}.}.            
        In fact, we conjecture that every class of distributions that is learnable in the non-private (agnostic) setting is also learnable in the $(\eps, \delta)$-DP setting.

        Nevertheless, we are still far from resolving the above conjecture. Yet, we know that some important classes of distributions are learnable in $(\eps, \delta)$-DP setting. For example, finite hypothesis classes are learnable even in the pure DP setting~\citep{bun2021private,aden2021sample}. Earlier work of \citet{karwa2018finite} shows that unbounded univariate Gaussians are also learnable in this setting. 
        More generally, high-dimensional Gaussians with unbounded parameters are private learnable too~\citep{aden2021sample}, even with a polynomial time algorithm \citep{kamath2022private,kothari2022private,ashtiani2022private,alabi2023privately,hopkins2023robustness}. A next natural step is studying a richer class of distributions such as Gaussian Mixture Models (GMMs). 
        
        In the non-private setting, GMMs with $k$ components in $d$ dimensions are known to be learnable (with error/TV-distance at most $\alpha$) with a polynomial number of samples (in terms of $d,k,1/\alpha$). Perhaps surprisingly, nearly tight sample complexity bound of $\tilde{O}(d^2k/\alpha^2)$ was proved relatively recently using distribution sample compression schemes~\citep{ashtiani2018nearly,ashtiani2020near}. But are GMMs learnable in the $(\eps, \delta)$-DP setting with a polynomial number of samples?

        Indeed, \citet{aden2021privately} have shown that univariate GMMs are learnable in the $(\eps, \delta)$-DP setting. Namely, they use stability-based histograms~\citep{bun2016simultaneous} in the spirit of \citet{karwa2018finite} to come up with a set of candidate parameters for the mixture components, and then choose between these candidates using private hypothesis selection~\citep{bun2021private,aden2021sample}. While they generalize this idea to learning axis-aligned GMMs, their approach does not work for GMMs with general covariance matrices\footnote{In fact, it is not clear if it is possible to extend the histogram-based approach to handle arbitrary covariance matrices \emph{even for learning a single} (high-dimensional) Gaussian.}.
        
        Another idea to resolve the learnability of GMMs is to extend the result of~\citet{aden2021sample} for high-dimensional Gaussians. In particular, they show that Gaussians admit a ``locally small cover'' with respect to the total variation (TV) distance and therefore the class of Gaussians can be learned privately using the approach of \citet{bun2021private}. However, as \citet{aden2021privately} demonstrated, GMMs do not admit such a locally small cover with respect to TV distance.

        An alternative approach for private learning of GMMs would be using a sample-and-aggregate framework such as those proposed by~\citet{ashtiani2022private,tsfadia2022friendlycore}. In particular, \citet{ashtiani2022private} show how one can privately learn Gaussians by aggregating the outcomes of multiple non-private Gaussian estimators and then outputting a noisy version of those parameters. In fact, this is the basis of the work by~\citet{arbas2023polynomial} who showed how to reduce the problem of private \emph{parameter estimation} for GMMs into its non-private counterpart. However, while this reduction is (computationally and statistically) efficient, the non-private version of the problem itself requires an (unavoidable) exponential number of samples with respect to the number of components~\citep{moitra2010settling}.

        Can we avoid the (above mentioned) exponential dependence on $k$ if we opt for (private) density estimation rather than (private) parameter estimation? We know this is possible in the non-private setting~\citep{ashtiani2018nearly,ashtiani2020near, ashtiani2018sample} or when we have access to some ``public data''~\citep{ben2023private}. One idea is to use a sample-and-aggregate approach based on a non-private \emph{density estimator} for GMMs. This turns out to be problematic as GMMs are not uniquely parameterized: two GMMs may be close to each other in terms of total variation distance but with a completely different set of parameters. 
        Thus, it is challenging to use non-private algorithms for learning GMMs as a blackbox since one cannot guarantee that the outputs of these algorithms are ``stable''.

        In this paper, we bypass the above barriers and show that GMMs are privately learnable with a polynomial number of samples (in terms of $d,k,1/\alpha,1/\eps, \log (1/\delta)$). We do this via a fairly general reduction that shows if a class (such as Gaussians) admits a ``locally small cover'' and is ``list decodable'', then the class of \emph{its mixtures} is privately learnable. We find it useful to first define some notations in Section~\ref{sec:prelim} before stating our main results in Section~\ref{sec:results}. Given the subtleties in the proofs, we given an overview of our technical contributions in Section~\ref{sec:techniques} before delving into the core technical sections.

\end{section}

\begin{section}{Preliminaries}\label{sec:prelim}
For a set $\cF$, define $\cF^k= \cF  \times \dots \times \cF$ ($k$ times), and $\cF^*=\bigcup_{k=1}^{\infty} \cF^k$. We use $[k]$ to denote the set $\{1,2,\dots,k\}$. We use $S^d$ to denote the positive-definite cone in $\bR^{d\times d}$. Moreover, for two absolutely continuous densities $f_1(x),f_2(x)$ on $\bR^d$, the total variation (TV) distance is defined as $\dtv(f_1,f_2)=\frac{1}{2}\int_{\bR^d}|f_1(x)-f_2(x)|\, \dd x$. For a matrix $A$, let $\|A\|_F = \sqrt{\Tr(A^TA)}$ be the Frobenius norm and $\|A\|_2$ be the induced $\ell_2$ (spectral) norm. In this paper, if $\kappa$ is a metric on $\cF$, $f \in \cF$, and $Y \subseteq \cF$ then we define $\kappa(x, Y) = \inf_{y \in Y} \kappa(f, y)$.

\begin{definition} [$\kappa$-ball] \label{def:ball}
    Consider a metric space $(\cF,\kappa)$. For a fixed  $f \in \cF$, define $B_{\kappa}(r,f,\cF) \coloneqq \{f' \in \cF\,:\,\kappa(f',f) \leq r\}$ to be the $\kappa$-ball of radius r around $f$.
\end{definition}

\begin{definition} [$\alpha$-cover]
    A set $C_{\alpha} \subseteq \cF$ is said to be an $\alpha$-cover for a metric space $(\cF,\kappa)$, if for every $f \in \cF$, there exists an $f' \in C_\alpha$ such that $\kappa(f,f')\leq \alpha$.
\end{definition}

\begin{definition} [Locally small cover \citep{bun2021private}] \label{def:localsc}
    Consider an $\alpha$-cover $C_{\alpha}$ for a  metric space $(\cF,\kappa)$. For $\gamma\geq \alpha$, $C_{\alpha}$ is said to be $(t, \gamma)$-locally small if:
\begin{equation*}
    sup_{f \in \cF} |B_\kappa(\gamma, f, C_\alpha)| \leq t
\end{equation*}
Moreover, if such a $\alpha$-cover exists, we say $\cF$ admits a $(t, \gamma)$-locally small $\alpha$-cover.
\end{definition}

\begin{definition}[$k$-mixtures] \label{def:k-mixtures}
    Let $\cF$ be an arbitrary class of distributions. We denote the class of $k$-mixtures of $\cF$ by $\kmix(\cF)= \Delta_k \times \cF^k$ where $\Delta_k = \{ w \in \bR^k \,:\, w_i \geq 0, \sum_{i=1}^k w_i = 1\}$ is the $(k-1)$-dimensional probability simplex.
\end{definition}
In this paper, we abuse notation and simply write $f \in \kmix(\cF)$ to denote a $k$-mixture from the class $\cF$ where the representation of the weights is implicit.
Further, if $f \in \kmix[k](\cF)$ and $g \in \kmix[k'](\cF)$ then we write $\dtv(f, g)$ to denote the TV distance from the underlying distribution.
In other words, if $f = \sum_{i \in [k]} w_i f_i$ and $g = \sum_{i \in [k']} w_i' g_i$ then $\dtv(f, g) = \dtv(\sum_{i \in [k]} w_i f_i, \sum_{i \in [k']} w_i' g_i)$.
The purpose of defining mixture distributions in this way is that it turns out to be convenient to define a distance between different representations of a distribution (see Definition~\ref{def:comp-dist}).

\begin{definition} [Unbounded Gaussians] \label{def:general_gaussian}
    Let $\cG=\{\cN(\mu, \Sigma): \mu \in \mathbb{R}^d, \Sigma \in S^d\}$ be the class of d-dimensional Gaussians.
\end{definition}

\begin{subsection} {Distribution learning and list decodable learning}
\end{subsection}
Here, we define list decodable learning under Huber's contamination model \citep{huber1992robust}, where the samples are drawn from a corrupted version of the original distribution $f$ that we are interested in. The contamination is additive, that is, with probability $1-\gamma$ we receive samples from $f$, and with probability $\gamma$ we receive samples from an arbitrary distribution $h$. Upon receiving corrupted samples from $f$, the goal of a list decoding algorithm is to output a short list of distributions one of which is close to the original distribution $f$. In the next definition, we use a general metric to measure the closeness; this allow for choosing the metric based on the application.

\begin{definition} [List decodable learning under Huber's contamination]
    Let $\cF$ be a class of distributions and $\kappa:\cF\times\cF\rightarrow \bR_{\geq 0}$ be a metric defined on it.
    For $\alpha,\beta,\gamma\in (0,1)$, an algorithm $A_{\cF}$ is said to be an $(L,m,\alpha,\beta,\gamma)$-list-decoding algorithm for $\cF$ w.r.t.~$\kappa$ if the following holds:

    \begin{quote}
    For any $f \in \cF$ and arbitrary distribution $h$,
    given an i.i.d.~sample $S$ of size $m$ from $g=(1-\gamma)f+\gamma h$,
    $A_\cF$ outputs a list of distributions $\hat{\cF} \subseteq \cF$ of size no more than $L$ such that with probability at least $1-\beta$ (over the randomness of $S$ and $A_\cF$) we have $\min_{\hat{f}\in \hat{\cF}} \kappa(\hat{f},f)\leq \alpha$.
    \end{quote}
    
\end{definition}

A distribution learner is a, possibly randomized, algorithm that receives i.i.d.~samples from a distribution $f$, and outputs a distribution $\hat{f}$ which is close to $f$. 
\begin{definition} [PAC learning]
    An algorithm $A_\cF$ is said to $(\alpha,\beta)$-PAC-learn a class of distributions $\cF$ w.r.t.~metric $\kappa$ with $m(\alpha, \beta)$ samples, if for any $f \in \cF$, and any $\alpha,\beta \in (0,1)$, after receiving $m(\alpha,\beta)$ i.i.d.~samples from $f$, outputs a distribution $\hat{f} \in \cF$ such that $\kappa(f,\hat{f})\leq \alpha$ with probability at least $1-\beta$. Moreover, if such an algorithm exists, we call $\cF$ to be $(\alpha,\beta)$-PAC-learnable w.r.t.~$\kappa$. The sample complexity of learning $\cF$ is the minimum $m(\alpha, \beta)$ among all such $(\alpha,\beta)$-PAC-learners.
\end{definition}

\begin{remark} \label{remark:pac-list}
 Equivalently, an algorithm $A_\cF$ is said to $(\alpha,\beta)$-PAC-learn a class of distributions $\cF$ w.r.t.~metric $\kappa$ with $m(\alpha, \beta)$ samples, if for any $\alpha,\beta\in(0,1)$, $A_\cF$ is a $(1,m(\alpha,\beta),\alpha,\beta,0)$-list-decoding algorithm for $\cF$ w.r.t.~$\kappa$.
\end{remark}

The following result on learning a finite class of distributions is based on the Minimum Distance Estimator~\citep{yatracos1985rates}; see the excellent book by~\citet{devroye2001combinatorial} for details.
\begin{theorem} [Learning finite classes, Theorem 6.3 of \citet{devroye2001combinatorial}] \label{thm:mde}
    Let $\alpha,\beta \in (0,1)$. Given a finite class of distributions $\cF$, there is an algorithm that upon receiving $O(\frac{\log|\cF|+\log(1/\beta)}{\alpha^2})$ i.i.d. samples from a distribution $g$, it returns an $\hat{f}\in \cF$ such that $\dtv(\hat{f},g)\leq 3\cdot\min_{f\in\cF}\dtv(f,g)+\alpha$ with probability at least $1-\beta$.
\end{theorem}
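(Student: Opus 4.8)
The plan is to analyze the classical \emph{minimum distance (Scheff\'e/Yatracos) estimator}. First I would introduce the \emph{Yatracos class} of $\cF$: for each pair $\{f,f'\}\subseteq\cF$ put $A_{f,f'}=\{x:f(x)>f'(x)\}$ and let $\mathcal{A}=\{A_{f,f'}: f,f'\in\cF\}$, so that $|\mathcal{A}|\le\binom{|\cF|}{2}\le|\cF|^2$. The algorithm takes $n=O\!\big((\log|\cF|+\log(1/\beta))/\alpha^2\big)$ i.i.d.\ samples from $g$, forms the empirical measure $g_n$, and returns any $\hat f\in\arg\min_{f\in\cF}\sup_{A\in\mathcal{A}}|f(A)-g_n(A)|$. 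The single structural fact I would record up front is that the TV distance between two members of $\cF$ is \emph{exactly attained on $\mathcal{A}$}: for all $f,f'\in\cF$, $\dtv(f,f')=|f(A_{f,f'})-f'(A_{f,f'})|=\sup_{A\in\mathcal{A}}|f(A)-f'(A)|$ (the middle quantity is a lower bound since it is the measure of a specific set, and an upper bound since TV is the supremum over all measurable sets).

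Next I would isolate the one probabilistic ingredient. Let $\Delta_n:=\sup_{A\in\mathcal{A}}|g(A)-g_n(A)|$. For a fixed $A$, $g_n(A)$ is an average of $n$ i.i.d.\ $[0,1]$-valued random variables with mean $g(A)$, so Hoeffding's inequality gives $\Pr[\,|g_n(A)-g(A)|>\alpha/2\,]\le 2e^{-n\alpha^2/2}$; a union bound over the at most $|\cF|^2$ sets of $\mathcal{A}$ shows $\Pr[\Delta_n>\alpha/2]\le 2|\cF|^2 e^{-n\alpha^2/2}$, which is at most $\beta$ once $n\ge \frac{2}{\alpha^2}\big(2\ln|\cF|+\ln(2/\beta)\big)$ --- exactly the claimed order. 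It is precisely the \emph{finiteness} of $\mathcal{A}$ that lets one get away with $\log|\cF|$ samples here; uniform convergence over all measurable sets would be hopeless.

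Finally I would condition on the good event $\Delta_n\le\alpha/2$ and do the deterministic accounting. Write $\rho:=\min_{f\in\cF}\dtv(f,g)$ and let $f^\star\in\cF$ achieve it. Choosing $A\in\mathcal{A}$ with $\dtv(\hat f,f^\star)=|\hat f(A)-f^\star(A)|$ and inserting $g_n(A)$ and $g(A)$,
\[
\dtv(\hat f,f^\star)\le \sup_{B\in\mathcal{A}}|\hat f(B)-g_n(B)| + \Delta_n + \dtv(f^\star,g).
\]
By the defining minimality of $\hat f$, $\sup_B|\hat f(B)-g_n(B)|\le \sup_B|f^\star(B)-g_n(B)|\le \dtv(f^\star,g)+\Delta_n$, so $\dtv(\hat f,f^\star)\le 2\rho+2\Delta_n$; one more triangle inequality gives $\dtv(\hat f,g)\le\dtv(\hat f,f^\star)+\dtv(f^\star,g)\le 3\rho+2\Delta_n\le 3\rho+\alpha$, as desired. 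I do not expect a real obstacle: the only ``idea'' is the choice of the finite test class $\mathcal{A}$ together with the observation that $\dtv$ within $\cF$ is realized on it, and the factor $3$ is an unavoidable byproduct of the three triangle-inequality steps (and is tight for this estimator in the agnostic regime).
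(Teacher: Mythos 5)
Your proof is correct and is exactly the standard argument behind the cited result (Theorem 6.3 of Devroye--Lugosi), which the paper itself does not reprove: the Yatracos test class, Hoeffding plus a union bound over at most $|\cF|^2$ sets, and the three triangle-inequality steps yielding the factor $3$. Nothing is missing; the constants and the sample-complexity order all check out.
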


\begin{subsection}{Differential Privacy}
    Two datasets $D,D'\in \cX^n$ are called neighbouring datasets if they differ by one element. Informally, a differentially private algorithm is required to have close output distributions on neighbouring datasets.
    
    \begin{definition}[$(\eps,\delta)$-Indistinguishable]
        Two distribution $Y,Y'$ with support $\cY$ are said to be $(\eps,\delta)$-indistinguishable if for all measurable subsets $E \in \cY$, $\probs{X\sim Y}{X\in E} \leq e^\eps \probs{X\sim Y'}{X\in E} +\delta$ and $\probs{X\sim Y'}{X\in E} \leq e^\eps \probs{X\sim Y}{X\in E} +\delta$.
    \end{definition}
    \begin{definition}[$(\eps,\delta)$-Differential Privacy \citep{dwork2006calibrating,dwork2006our}]
        A randomized algorithm $\cM:\cX^n \to \cY$ is said to be $(\eps,\delta)$-differentially private if for every two neighbouring datasets $D,D'\in \cX^n$, the output distributions $\cM(D),\cM(D')$ are $(\eps,\delta)$-indistinguishable.
    \end{definition}
    
\end{subsection}
\end{section}

\section{Main Results}\label{sec:results}
In this section, we describe our main results. We introduce a general framework for privately learning mixture distributions, and as an application, we propose the first finite upper bound on the sample complexity of privately learning general GMMs. More specifically, we show that if we have \textit{(1)} a locally small cover (w.r.t.~$\dtv$), and \textit{(2)} a list decoding algorithm (w.r.t.~$\dtv$) for a class of distributions, then the class of its mixtures is privately learnable.

\begin{restatable}[Reduction]{theorem}{mainreduction}
\label{thm:main}
        For $\alpha,\beta \in (0,1)$, if a class of distributions $\cF$ admits a $(t,2\alpha/15)$-locally small $\frac{\alpha}{15}$-cover (w.r.t.~$\dtv$), and it is $\left(L, m,\alpha/15,\beta',0\right)$-list-decodable (w.r.t.~$\dtv$), where $\log(1/\beta') = \widetilde{\Theta}(\log(mk\log(tL/\alpha\delta) / \eps\beta))$, then $\kmix(\cF)$ is $(\eps,\delta)$-DP $(\alpha,\beta)$-PAC-learnable (w.r.t.~$\dtv$)
        with sample complexity
        \[
        \tilde{O}\left(\left(\frac{\log(1/\delta)+k\log(tL)}{\eps}+\frac{mk+k\log(1/\beta)}{\alpha\eps}\right)\cdot \left(\frac{k\log(L)}{\alpha^2}+\frac{mk+k\log(1/\beta)}{\alpha^3}\right)\right).
        \]
\end{restatable}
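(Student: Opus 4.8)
The plan is to reduce privately learning $\kmix(\cF)$ to two subtasks that \emph{can} be done privately: (1) produce a data-dependent but \emph{finite}, polynomially small list $\hat C\subseteq\cF$ of candidate components that, with high probability, contains a $\tfrac{\alpha}{15}$-approximation (in $\dtv$) to every ``heavy'' component of the unknown mixture $f=\sum_{i=1}^k w_i f_i$; and (2) privately select a mixture from the finite class $\cM$ spanned by $\hat C$ together with a fixed net of weight vectors. The detour is forced by the barrier of~\citep{aden2021privately}: the natural cover of $\kmix(\cF)$ obtained by mixing cover elements of $\cF$ is \emph{not} locally small, so it cannot be fed directly into the private hypothesis selection of~\citep{bun2021private}; but once $\hat C$ is pinned down, $\cM$ is a genuine finite class and selecting from it privately is routine. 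Privacy of the whole pipeline will follow by composing a private mechanism for step (1) with a private selector for step (2) on disjoint data splits, so the real content is making step (1) private.

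\emph{Step 1 (candidate generation --- the heart of the proof).} First discard components with $w_i\le\alpha/(15k)$; this moves $f$ by at most $\tfrac{\alpha}{15}$ in $\dtv$ (after renormalizing), so it suffices to recover each remaining (heavy) component up to $\tfrac{\alpha}{15}$ and each renormalized weight additively up to $O(\alpha/k)$, the latter meaning we may search over a fixed $(k/\alpha)^{O(k)}$-size $\ell_1$-net $\mathcal{W}$ of $\Delta_k$. Now split the sample into $B$ disjoint blocks of size $n$ (roughly the first and second factors of the claimed bound, respectively). On a block $S_b$: once $n\gtrsim mk/\alpha$, the subset of $S_b$ drawn from a heavy $f_i$ has size $\ge m$, and running the $(L,m,\tfrac{\alpha}{15},\beta',0)$-list-decoder on such a pure subset returns $\le L$ distributions, one within $\tfrac{\alpha}{15}$ of $f_i$. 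The \emph{$\gamma=0$} decoder suffices because the job of disentangling the components is done \emph{outside} it: we run the decoder over a family of subsamples / assignments of the $n$ points into $\le k$ groups (in the spirit of sample-compression reductions for mixtures, cf.~\citep{ashtiani2020near}), guaranteed to include, for each heavy $i$, one group that is a pure $m$-sample from $f_i$. Snapping every decoder output to its nearest element of the fixed $(t,\tfrac{2\alpha}{15})$-locally-small $\tfrac{\alpha}{15}$-cover $C$ yields a set $\hat C_b\subseteq C$ that, for each heavy $i$, contains an element of the ball $B_{\dtv}(\tfrac{2\alpha}{15},f_i,C)$, which has size $\le t$. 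Finally aggregate privately: for $c\in C$ let $N(c)=\#\{b:c\in\hat C_b\}$ (sensitivity $1$ in the block data) and output, via a stable-histogram / gap-based mechanism~\citep{bun2016simultaneous}, the set $\hat C$ of all $c$ with $N(c)$ above a suitable threshold $\approx\log(1/\delta\beta)/\eps$. By pigeonhole over the $\le t$ cover points near a heavy $f_i$, some fixed good $c$ lies in $\hat C_b$ for a $\ge 1/t$ fraction of the (successful) blocks, so every heavy component remains represented, while local smallness keeps the logarithm of the number of candidate mixtures $\kmix(\hat C)\times\mathcal{W}$ of order $\tilde O(k\log(tL)+k\log(k/\alpha))$.

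\emph{Step 2 (private selection and composition).} Conditioned on $\hat C$ containing a $\tfrac{\alpha}{15}$-approximation of each heavy component, the finite class $\cM=\{\sum_j u_j c_j:c_j\in\hat C,\ u\in\mathcal{W}\}$ contains a $g^\star$ with $\dtv(g^\star,f)\le\tfrac{4\alpha}{15}$ (a $\tfrac{2\alpha}{15}$ cover-plus-list error per component, plus $\tfrac{\alpha}{15}$ each for truncation and the weight net), which is exactly why the cover/list accuracies are set to $\tfrac{\alpha}{15}$ and the local-smallness radius to $\tfrac{2\alpha}{15}$. On a fresh data split, run a differentially private hypothesis selection over $\cM$ --- the private analogue of the minimum-distance estimator of Theorem~\ref{thm:mde}, as in~\citep{bun2021private,aden2021sample} --- which with $\tilde O\!\big(\tfrac{\log|\cM|}{\alpha\eps}+\tfrac{\log|\cM|}{\alpha^2}\big)$ samples returns $\hat f$ with $\dtv(\hat f,f)\le 3\dtv(g^\star,f)+\tfrac{\alpha}{5}=\alpha$. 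Since $\log|\cM|=\tilde O(k\log(tL)+k\log(k/\alpha))$, this yields the $\tfrac{k\log(tL)}{\eps}$- and $\tfrac{k\log L}{\alpha^2}$-type terms; the $mk$-terms and the extra $1/\alpha$-factors come from the per-block budget $n$ in Step 1 (which must additionally absorb a union bound over the subsamples/assignments tried per block, over the $\le k$ components, and over the renormalization slack) and from the block count $B$. Setting the per-block/per-call failure probability to $\beta'$ with $\log(1/\beta')=\widetilde\Theta(\log(mk\log(tL/\alpha\delta)/\eps\beta))$ pushes the total failure probability below $\beta$ through these union bounds, and $(\eps,\delta)$-DP follows by composing the stable histogram of Step 1 with the private selector of Step 2 on disjoint splits: since Step 1's output is already DP, the data-dependence of $\cM$ is harmless. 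Summing the two budgets gives the stated product expression.

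\emph{Main obstacle.} The delicate point is Step 1: the candidate generator must be \emph{simultaneously} differentially private, polynomially small, and guaranteed to hit every heavy component. List-decodability alone gives the ``hitting'' property (with component disentanglement pushed into the enumeration of assignments, which is why a $\gamma=0$ decoder is enough), but the decoder is run on data-dependent, possibly impure subsamples and can emit many junk outputs; local smallness of the $\cF$-cover is what lets us (i) collapse all \emph{good} outputs near a component into $\le t$ cover bins, so the relevant part of $\hat C$ --- hence $\log|\cM|$ --- stays small, and (ii) run a \emph{stable} selection revealing no individual sample. The quantitatively hard part is getting the aggregation to pay only $\log t$ rather than $t$ in the sample complexity --- plausibly via a coarse-to-fine scheme that first localizes each component and then runs an exponential-mechanism-style selection inside its local ball of size $\le t$ --- together with carefully threading the $\tfrac{\alpha}{15}$-scale approximation budgets and the failure probabilities through all the steps so that the final error is $\le\alpha$ with probability $\ge 1-\beta$.
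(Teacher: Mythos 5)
Your decomposition is genuinely different from the paper's: you privately extract a finite set of candidate \emph{components} (blockwise list decoding, snapping to the cover, stable-histogram aggregation) and then run private hypothesis selection over the induced finite mixture class, whereas the paper runs the list decoder on $T$ disjoint datasets to produce $T$ lists of candidate \emph{whole mixtures} (dense mixtures, measured in a component-wise metric $\kappa_{mix}$), filters each list non-privately with the minimum distance estimator, and then privately selects a single ``common member'' via GAP-MAX over a locally small cover of the mixture class w.r.t.\ $\kappa_{mix}$. Unfortunately, your version has a genuine gap exactly where you flag it, and it is not a minor quantitative loose end. In your aggregation you count $N(c)=\#\{b : c\in\hat C_b\}$ where $\hat C_b$ is obtained by snapping decoder outputs to nearest cover points. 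Different blocks produce different decoder outputs for the same component $f_i$, and these can snap to \emph{different} cover points inside the ball $B_{\dtv}(2\alpha/15,f_i,C)$ of size up to $t$. Pigeonhole then only guarantees some good $c$ with $N(c)\geq B/t$, so clearing a stable-histogram threshold of order $\log(1/\delta\beta)/\eps$ forces $B\gtrsim t\log(1/\delta\beta)/\eps$. Since $t$ is typically exponential (for Gaussians $t=4^{O(d^2)}$, and for the mixture cover $t_1=k!(tk/\alpha)^k$), this destroys the claimed polynomial bound; the ``coarse-to-fine scheme'' you invoke to pay only $\log t$ is precisely the missing idea, not a routine detail.

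The paper's fix is to change the score from exact membership to proximity: the score of a cover point $h$ is the number of lists containing \emph{some} element within $2\alpha$ of $h$ (sensitivity still $1$, since changing one data point changes one list). Under this scoring, a cover point near the true distribution receives the \emph{full} score $T$ in every successful run---no pigeonhole over the $t$ ball elements is needed---while local smallness still bounds the number of near-maximal-score candidates by $tQT$, which is exactly the hypothesis GAP-MAX needs (it pays only $\log(tQ/\beta)$ in $T$). Two further points where the paper's route differs materially from yours: (i) GAP-MAX outputs a single element, so the paper selects one whole mixture rather than a set of components, sidestepping the problem of privately releasing a set of above-threshold points from an infinite domain; and (ii) because $\kappa_{mix}$-common members need not be $\dtv$-close to the target, the paper must insert a non-private MDE filtering step on each list before the private selection---an issue your architecture avoids since your final MDE over $\cM$ tolerates junk candidates, which is a genuine (small) advantage of your Step 2. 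But without repairing the aggregation step, the proof as proposed does not yield the stated sample complexity.
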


Note that in Theorem~\ref{thm:main}, we can use a naive $(\alpha,\beta)$-PAC-learner that outputs \textit{a single} distribution as the list decoding algorithm (see Remark~\ref{remark:pac-list}). Therefore, if we have \textit{(1)} a locally small cover (w.r.t.~$\dtv$), and \textit{(2)} a (non-private) PAC learner (w.r.t.~$\dtv$) for a class of distributions, then the class of its mixtures is privately learnable\footnote{Later in Remark~\ref{remark:gmm-pac-instead}, we explain how Theorem~\ref{thm:main} can sometimes give us a better bound compared to Corollary~\ref{cor:pac-instead}.}. The next corollary states this result formally. 
\begin{corollary} \label{cor:pac-instead}
     For $\alpha,\beta \in (0,1)$, if a class of distributions $\cF$ admits a $(t,2\alpha/15)$-locally small $\frac{\alpha}{15}$-cover (w.r.t.~$\dtv$), and it is $(\alpha/15,\beta')$-PAC-learnable (w.r.t.~$\dtv$) using $m(\alpha/15,\beta')$ samples, where $\log(1/\beta') = \widetilde{\Theta}(\log(mk \log(t/\alpha \delta)/\eps \beta))$, then $\kmix(\cF)$ is $(\eps,\delta)$-DP $(\alpha,\beta)$-PAC-learnable (w.r.t.~$\dtv$)
     with sample complexity
    \[
\tilde{O}\left(\left(\frac{\log(1/\delta)}{\eps}+\frac{m(\alpha/15,\beta')k+k\log(1/\beta)}{\alpha\eps}\right)\cdot \left(\frac{m(\alpha/15,\beta')k+k\log(1/\beta)}{\alpha^3}\right)\right).    
    \]
\end{corollary}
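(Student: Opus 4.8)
The plan is to obtain Corollary~\ref{cor:pac-instead} as an immediate specialization of Theorem~\ref{thm:main}, exploiting the fact (Remark~\ref{remark:pac-list}) that a PAC learner is exactly a list-decoding algorithm whose output list always has size one and which tolerates no contamination. Concretely, if $A_\cF$ is an $(\alpha/15,\beta')$-PAC-learner for $\cF$ with respect to $\dtv$ using $m(\alpha/15,\beta')$ samples, then by Remark~\ref{remark:pac-list} it is a $\bigl(1,\,m(\alpha/15,\beta'),\,\alpha/15,\,\beta',\,0\bigr)$-list-decoding algorithm for $\cF$ with respect to $\dtv$.

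The next step is to apply Theorem~\ref{thm:main} with this list decoder, i.e.\ with $L=1$ and $m=m(\alpha/15,\beta')$. The locally small cover hypothesis of the corollary --- existence of a $(t,2\alpha/15)$-locally small $\tfrac{\alpha}{15}$-cover with respect to $\dtv$ --- is literally the cover hypothesis of Theorem~\ref{thm:main}. Moreover the failure-probability condition required by Theorem~\ref{thm:main}, $\log(1/\beta')=\widetilde\Theta\bigl(\log(mk\log(tL/\alpha\delta)/\eps\beta)\bigr)$, becomes $\log(1/\beta')=\widetilde\Theta\bigl(\log(mk\log(t/\alpha\delta)/\eps\beta)\bigr)$ once $L=1$ is substituted, which is exactly the hypothesis assumed in the corollary. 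Hence Theorem~\ref{thm:main} directly yields that $\kmix(\cF)$ is $(\eps,\delta)$-DP $(\alpha,\beta)$-PAC-learnable with respect to $\dtv$, with the privacy and accuracy guarantees inherited verbatim.

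Finally I would read off the sample complexity. Plugging $L=1$ into the bound of Theorem~\ref{thm:main}: the term $\tfrac{k\log L}{\alpha^2}$ vanishes, so the second factor collapses to $\tfrac{m(\alpha/15,\beta')k+k\log(1/\beta)}{\alpha^3}$, and $\log(tL)=\log t$ turns the first factor into $\tfrac{\log(1/\delta)+k\log t}{\eps}+\tfrac{m(\alpha/15,\beta')k+k\log(1/\beta)}{\alpha\eps}$. This is the stated bound together with an extra contribution of order $\tfrac{k\log t}{\eps}\cdot\tfrac{m(\alpha/15,\beta')k+k\log(1/\beta)}{\alpha^3}$; I would absorb this into the advertised $\tilde O(\cdot)$, since in any regime where a locally small cover of size $t$ is actually useful (e.g.\ $\log t$ polynomially bounded, as for Gaussians where $\log t=\widetilde O(d^2)$ while $m=\widetilde\Omega(d^2/\alpha^2)$) one has $\alpha\log t\lesssim m(\alpha/15,\beta')$, so $\tfrac{k\log t}{\eps}$ is dominated by $\tfrac{m(\alpha/15,\beta')k}{\alpha\eps}$. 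There is no genuine obstacle here --- the entire argument is a mechanical instantiation of Theorem~\ref{thm:main} --- and the only point needing a moment's care is this last absorption of the lower-order $\tfrac{k\log t}{\eps}$ term.
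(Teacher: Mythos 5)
Your proposal is correct and is exactly the paper's intended derivation: the corollary is obtained by viewing the PAC learner as a $(1,m,\alpha/15,\beta',0)$-list decoder via Remark~\ref{remark:pac-list} and instantiating Theorem~\ref{thm:main} with $L=1$. You are in fact slightly more careful than the paper, which silently drops the residual $\frac{k\log t}{\eps}$ term from the first factor; your observation that it is dominated in any regime where the cover is useful (e.g.\ Gaussians) is the right way to reconcile the stated bound with the instantiation.
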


As an application of the Theorem~\ref{thm:main}, we show that the class of GMMs is privately learnable. We need two ingredients to do so. We show that the class of unbounded Gaussians \textit{(1)} has a locally small cover, and \textit{(2)} is list decodable (using compression).

As a result, we prove the first sample complexity upper bound for privately learning general GMMs. Notably, the above upper bound is polynomial in all the parameters of interest. 

\begin{restatable}[Private Learning of GMMs]{theorem}{maingmm}\label{thm:main-gmm}
     Let $\alpha,\beta \in (0,1)$. The class $\kmix(\cG)$ is $(\eps,\delta)$-DP $(\alpha,\beta)$-PAC-learnable w.r.t.~$\dtv$
    with sample complexity
    \[
    \tilde{O}\left(\frac{kd^2\log(1/\delta)+k^2d^4}{\alpha^2\eps}+ \frac{kd\log(1/\delta)\log(1/\beta)+k^2d^3\log(1/\beta)}{\alpha^3\eps}+\frac{k^2d^2\log^2(1/\beta)}{\alpha^4\eps}\right).
    \]
\end{restatable}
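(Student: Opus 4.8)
The plan is to apply the reduction of Theorem~\ref{thm:main} with $\cF=\cG$, the class of all (unbounded) $d$-dimensional Gaussians. By that theorem it suffices to exhibit, for $\cG$ and with respect to $\dtv$, (i) a $(t,2\alpha/15)$-locally small $\frac{\alpha}{15}$-cover with $\log t = \tilde{O}(d^2)$, and (ii) an $(L,m,\alpha/15,\beta',0)$-list-decoding algorithm with $\log L = \tilde{O}(d^2)$ and $m = \tilde{O}(d)$, where $\beta'$ is as prescribed by Theorem~\ref{thm:main} (so that $\log(1/\beta')$ is polylogarithmic in all parameters).

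For ingredient (i), I would invoke the locally small cover for unbounded Gaussians constructed by~\citet{aden2021sample}, instantiated at net granularity $\alpha/15$ and local-smallness radius $2\alpha/15$. Their construction discretizes the parameters $(\mu,\Sigma)$ at a resolution proportional to the target accuracy and, using a decomposition of $\dtv(\cN(\mu_1,\Sigma_1),\cN(\mu_2,\Sigma_2))$ into a ``mean part'' and a ``covariance part'', shows that every TV-ball of radius $\Theta(\alpha)$ meets at most $2^{\tilde{O}(d^2)}$ net points; the argument is scale-free up to constants, so it applies at the specific scales we need, yielding $\log t = \tilde{O}(d^2)$.

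For ingredient (ii), the key point is that a plain $(\alpha/15,\beta')$-PAC learner for a \emph{single} Gaussian is too expensive to plug in --- it needs $\tilde\Theta(d^2/\alpha^2)$ samples --- whereas list decoding is cheap: draw $m = \tilde{O}(d)$ i.i.d.\ samples, form the crude empirical estimates $\hat\mu,\hat\Sigma$ (which with high probability pin down the true mean to Mahalanobis distance $\tilde{O}(1)$ and the true covariance to a constant spectral factor), and output the list of all $\cN(\mu',\Sigma')$ with $\mu'$ in an $\Theta(\alpha)$-net of the radius-$\tilde{O}(1)$ Mahalanobis ball around $\hat\mu$ and $\Sigma'$ in an $\Theta(\alpha)$-net (in the $\hat\Sigma$-normalized Frobenius metric) of the set of positive-definite matrices within a constant spectral factor of $\hat\Sigma$. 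A standard concentration argument shows that with probability $1-\beta'$ some list element is within $\alpha/15$ of the true Gaussian, and the list has size $L = 2^{\tilde{O}(d^2)}$ (here we use that $\log(1/\alpha)$ is logarithmic in the overall sample budget); equivalently one may phrase this as enumerating the decoder of the Gaussian compression scheme of~\citet{ashtiani2020near} over all of its inputs. Since we take contamination level $\gamma=0$, no robustness is needed.

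It remains to substitute $\log(tL)=\tilde{O}(d^2)$, $\log L = \tilde{O}(d^2)$, and $m = \tilde{O}(d)$ into the sample-complexity bound of Theorem~\ref{thm:main}, check that $\log(1/\beta')=\widetilde{\Theta}(\log(mk\log(tL/\alpha\delta)/\eps\beta))$ is polylogarithmic in all parameters (so that $m$ stays $\tilde{O}(d)$ and $\beta'$ is absorbed into the $\tilde{O}(\cdot)$), and simplify. Expanding $\bigl(\tfrac{\log(1/\delta)+k\log(tL)}{\eps}+\tfrac{mk+k\log(1/\beta)}{\alpha\eps}\bigr)\bigl(\tfrac{k\log L}{\alpha^2}+\tfrac{mk+k\log(1/\beta)}{\alpha^3}\bigr)$ and bounding $mk+k\log(1/\beta)=\tilde{O}(kd\log(1/\beta))$ reproduces the three groups of terms in the statement. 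I expect the main obstacle to be ingredient (i): establishing the locally small cover of $\cG$ at the required scale with $\log t=\tilde{O}(d^2)$ --- and in particular winning the $d^2$ (rather than a larger power of $d$) exponent --- is the delicate step, since it rests on controlling, simultaneously, how many discretized means and how many discretized covariances can be mutually TV-close. Ingredient (ii) is a routine covering-number argument (or a black-box use of a known compression scheme), and the final substitution is elementary.
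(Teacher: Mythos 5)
Your proposal follows essentially the same route as the paper: instantiate Theorem~\ref{thm:main} with $\cF=\cG$, supply a $2^{\tilde{O}(d^2)}$-locally-small cover and a compression-based list decoder with $m=\tilde{O}(d)$ and $\log L=\tilde{O}(d^2)$, and substitute. Ingredient (ii) and the final arithmetic match the paper (Lemma~\ref{lemma:list-decode-gaussians} is exactly the ``enumerate the decoder of the compression scheme'' construction you describe).

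The one point where you lean on something that is not actually available off the shelf is ingredient (i): \citet{aden2021sample} only construct locally small covers for \emph{location} Gaussians ($\Sigma=I_d$) and \emph{scale} Gaussians ($\mu=0$) separately, and the naive product of these two covers is \emph{not} a cover for unbounded Gaussians, because two Gaussians with nearby means in Euclidean distance can be far in $\dtv$ when the covariance is ill-conditioned. The paper closes this gap (Lemma~\ref{lemma:cover-tv-ball}) by covering a $\dtv$-ball around $N(0,I)$ with points of the form $N(\Sigma^{1/2}\mu,\Sigma)$ --- i.e.\ the mean cover is applied in the geometry induced by the covariance cover --- and verifying both validity and size via the two-sided $\dtv$ bounds of \citet{devroye2018total} and \citet{arbas2023polynomial}; a Zorn's-lemma argument (Lemma~\ref{lemma:extend-cover}) then globalizes the per-ball covers. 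You correctly flag this as the delicate step and sketch the right shape of argument, but as written your proof would need to actually carry out this composition rather than cite it.
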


\begin{remark} \label{remark:gmm-pac-instead}
Note that if we had used Corollary~\ref{cor:pac-instead} and a PAC learner as a naive list decoding algorithm for Gaussians, 
the resulting sample complexity would have become worse. To see this, note that $\cG$ is $(\alpha,\beta)$-PAC-learnable using $m(\alpha,\beta)=O(\frac{d^2\log(1/\beta)}{\alpha^2})$ samples. Using Corollary~\ref{cor:pac-instead} and the existence of a locally small cover for Guassians, we obtain a sample complexity upper bound of
    \[
    \tilde{O}\left(\frac{kd^2\log(1/\delta)\log(1/\beta)}{\alpha^5\eps}+\frac{k^2d^4\log^2(1/\beta)}{\alpha^8\eps}\right).
    \]
    This is a weaker result compared to Theorem~\ref{thm:main-gmm} in terms of $\alpha$, which was based on a more sophisticated (compression-based) list decoding algorithm for Gaussians. 
\end{remark}

It is worth mentioning that our approach does not yield a finite time algorithm for privately learning GMMs, due to the non-constructive cover that we use for Gaussians. Moreover, designing a computationally efficient algorithm (i.e.~with a running time that is polynomial in $k$ and $d$) for learning GMMs even in the non-private setting remains an open problem \citep{diakonikolas2017statistical}.

\section{Technical Challenges and Contributions}\label{sec:techniques}

{\bf Dense mixtures.} As a simple first step, we reduce the problem of learning mixture distributions to the problem of learning ``dense mixtures'' (i.e., those mixtures whose component weights are not too small). Working with dense mixtures is more convenient since a large enough sample from a dense mixture will include samples from \emph{every} component. 

{\bf Locally small cover for GMMs w.r.t.~$\dtv$?} One idea to privately learn (dense) GMMs is to create a locally small cover w.r.t.~$\dtv$ (see Definition~\ref{def:localsc}) for this class and then apply ``advanced'' private hypothesis selection~\citep{bun2021private}. However, as \citet{aden2021privately} showed, such a locally small cover (w.r.t.$\dtv$) does \emph{not} exists, even for a mixtures of two (dense) Gaussians.

{\bf A locally small cover for the component-wise distance.} An alternative measure for the distance between two mixtures is their component-wise distance, which we denote by $\kappamix$ (see Definition~\ref{def:comp-dist}).
Intuitively, given two mixtures, $\kappamix$ measures the distance between their farthest components. Therefore, if two GMMs are close in $\kappamix$ then they are close in $\dtv$ distance too. Interestingly, we prove that GMMs \textit{do} admit a locally small cover w.r.t.~$\kappamix$. To prove this, we first show that if a class of distributions admits a locally small cover w.r.t.~$\dtv$ then the class of its mixtures admits a locally small cover w.r.t.~$\kappamix$. Next, we argue that the class of Gaussians admits a locally small cover w.r.t.~$\dtv$. Building a locally small cover for the class of Gaussians is challenging due to the complex geometry of this class. We show the existence of such cover using the techniques of \citet{aden2021sample} and the recently proved lower bound for the $\dtv$ distance between two (high dimensional) Gaussians~\citep{arbas2023polynomial}.

{\bf Hardness of learning GMMs w.r.t.~$\kappamix$.} Given that we have a locally small cover for GMMs w.r.t.~$\kappamix$, one may hope to apply some ideas similar to private hypothesis selection for privately learning GMMs w.r.t.~$\kappamix$. Unfortunately, learning GMMs w.r.t.~$\kappamix$, even in the non-private setting, requires exponentially many samples in terms of the number of components~\citep{moitra2010settling}.

{\bf List decoding (dense mixtures) w.r.t.~$\kappamix$.} Interestingly, we show that unlike \emph{PAC learning}, \emph{list decoding} GMMs w.r.t.~$\kappamix$ can be done with a polynomial number of samples. To show this, first, we prove that if a class of distributions is list decodable (w.r.t.~$\dtv$), then class of its dense mixtures is list decodable (w.r.t.~$\kappamix$). Then for the class of Guassians, we use a compression-based~\citep{ashtiani2018nearly} list decoding method.

{\bf Privacy challenges of using the list decoder.} Unfortunately, the list decoding method we described is not private. Otherwise, we could have used Private Hypothesis Selection~\citep{bun2021private} to privately choose from the list of candidate GMMs. To alleviate this problem, we define and solve the ``private common member selection'' problem below.

{\bf Private common member selection.} Given $T$ lists of objects (e.g., distributions), we say an object is a common member if it is close (w.r.t.~some metric $\kappa$) to a member in each of the lists (we give a rigorous definition Section~\ref{sec:pcms}). The goal of a private common member selector (PCMS) is then to privately find a common member assuming at least one exists. We then show \textit{(1)} how to use a PCMS to learn GMMs privately and \textit{(2)} how to solve the PCMS itself. This will conclude the proof of Theorem~\ref{thm:main}.

{\bf Private learning of GMMs using PCMS.} Given a PCMS, we first run the (non-private) list decoding algorithm on $T$ disjoint datasets to generate $T$ lists of dense mixture distributions. At this point, we are guaranteed that with high probability, there exists a common member for these lists w.r.t.~$\kappamix$. Therefore, we can simply run the PCMS method to find such a common member. However, note that not all the common members are necessarily ``good'': there might be some other common members that are far from the true distribution w.r.t.~$\dtv$. To resolve this issue, in each list we filter out (non-privately) the distributions that are far from the true distribution. After filtering the lists, we are still guaranteed to have a ``good'' common member and therefore we can run PCMS to choose it privately.

{\bf Designing a PCMS for locally small spaces.} Finally, we give a recipe for designing a private common member selector for $T$ lists w.r.t. a generic metric~$\kappa$.
To do so, assume we have access to a locally small cover for the space w.r.t.~$\kappa$ (indeed, we had showed this is the case for the space of GMMs w.r.t.~$\kappamix$). We need to privately choose a member from this cover that represents a common member. We then design a score function such that: \textit{(1)} a common member gets a high score and \textit{(2)} the sensitivity of the score function is low (i.e., changing one of the input \emph{lists} does not change the score of any member drastically). Using this score function, we apply the GAP-MAX algorithm of \citet{bun2021private,bun2018composable} to privately select a member with a high score from the infinite (but locally small) cover.

\begin{section}{Common Member Selection}\label{sec:pcms}
In this section, we introduce the problem of Common Member Selection, which is used in our main reduction for privately learning mixture distributions.
At a high-level, given a set of lists, a common member is an element that is close to some element in most of the lists. The problem of common member selection is to find such an item.

\begin{definition}[Common Member]
     Let $(\cF,\kappa)$ be a metric space and $\alpha,\zeta \in (0,1]$. We say $f\in \cF$ is an $(\alpha,\zeta)$-common-member for $\cY =\{Y_1,Y_2,\ldots,Y_T\} \in (\cF^*)^T$, if there exists a subset $\cY' \subseteq \cY$ of size at least $\zeta T$, such that $\max_{Y\in \cY'}\kappa(f,Y)\leq \alpha$.
\end{definition}

\begin{definition}[Common Member Selector (CMS)] \label{def:cms}
     Let $(\cF,\kappa)$ be a metric space, and $\alpha,\zeta,\beta \in (0,1]$. An algorithm $\cA$ is said to be a $(T_0,Q,\alpha,\zeta,\beta)$-common-member selector w.r.t.~$\kappa$ if the following holds for all $T \geq T_0$:
    \begin{quote}
    Given any $\cY =\{Y_1,Y_2,...,Y_T\} \in (\cF^*)^T$ that satisfies $|Y_i|\leq Q$ for all $i\in [T]$, if there exists at least one $(\alpha,1)$-common-member for $\cY$, then $\cA$ outputs 
    a $(2 \alpha,\zeta)$-common-member with probability at least $1-\beta$.
    \end{quote}

\end{definition}

\begin{remark}
    Note that the CMS problem on its own is a trivial task and can be done using a simple brute force algorithm. However, we are interested in the non-trivial privatized version of this problem. The formal definition of the private CMS is give below.
\end{remark}

\begin{definition}[Private Common Member Selector (PCMS)]
      Let $(\cF,\kappa)$ be a metric space. Further, let $\alpha,\beta, \zeta, \delta \in (0,1]$ and $\eps \geq 0$ be parameters. An algorithm $\cA$ is an $(\eps,\delta)$-DP $(T_0,Q,\alpha,\zeta,\beta)$-common-member selector w.r.t.~$\kappa$ if (1) it is a $(T_0,Q,\alpha,\zeta,\beta)$-CMS and (2) for any $T \geq T_0$ and any two collections of lists $C_1=\{Y_1,Y_2,...,Y_T\}\in (\cF^*)^T$ and $C_2=\{Y'_1,Y_2,...,Y_T\}\in (\cF^*)^T$ that differ in only one list, the output distributions of $\cA(C_1)$ and $\cA(C_2)$ are $(\eps,\delta)$-indistinguishable. 
\end{definition}

In Algorithm~\ref{alg:pcms}, we describe an algorithm for privately fining a common member provided that one exists. We note that one requirement is that we have access to a locally small cover $\cC$ for $\cF$. At a high-level, given a family of sets $\{Y_1, \ldots, Y_T\}$, where each $Y_t$ is a set of elements, we can assign a score to each point $c \in \cC$ to be the number of $Y_t$'s that contain a element close to $c$. We observe that the sensitivity of this score function is 1, meaning that by changing one set, the score of each point will change by at most 1. We remark that this is because the score of a point c is defined to be the number of lists that contains an element close to it, not the total number of elements. Note that any point $c$ with a sufficiently high score is a common member. Further, since $\cC$ is locally small, this allows us to apply the GAP-MAX algorithm (Theorem~\ref{thm:gap-max}) to make this selection differentially private \footnote{This task can also be done using the Choosing Mechanism of \citet{bun2015differentially}.}.
In Theorem~\ref{thm:pcms}, we prove the correctness of this algorithm.

\begin{algorithm}
\caption{Private Common Member Selector (PCMS)}\label{alg:pcms}
\begin{algorithmic}[1]
\Require $D=\{Y_1,Y_2,...,Y_T\} \in (\cF^*)^T, $ metric $\kappa$ over $\cF$, $(t,2\alpha)$-locally-small $\alpha$-cover $C_{\alpha}$ for $\cF$ w.r.t.~$\kappa$.
\Ensure $(2\alpha,0.9)$-common-member of $D$ (assuming $D$ has an $(\alpha,1)$-common-member)
\State For all $h \in C_{\alpha}$, set $\score(h, D) \coloneqq |\{i \in [T]\,:\, \kappa(y, h) \leq 2\alpha \text{ for some $y \in Y_i$}\}|$.
\State \Return GAP-MAX($C_{\alpha}, D, \score, 0.1, \beta$)
\end{algorithmic}
\end{algorithm}

\begin{theorem}
\label{thm:pcms}
Let $(\cF,\kappa)$ be a metric space, $\alpha,\beta,\delta \in (0,1]$, $Q\in \bN$, $\eps > 0$, and $C_{\alpha}$ be a $(t,2\alpha)$-locally-small $\alpha$-cover  for $\cF$ (w.r.t.~$\kappa$).
Algorithm~\ref{alg:pcms} is an $(\eps,\delta)$-DP $(T,Q,\alpha,0.9,\beta)$-common-member selector w.r.t.~$\kappa$ for some $T = O\left(\frac{\log(1/\delta)+\log(tQ/\beta)}{\eps}\right)$.
\end{theorem}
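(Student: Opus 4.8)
\textbf{Proof plan for Theorem~\ref{thm:pcms}.}

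The plan is to analyze Algorithm~\ref{alg:pcms} by separating the privacy guarantee from the accuracy (utility) guarantee, with the GAP-MAX mechanism (Theorem~\ref{thm:gap-max}) doing the heavy lifting on both fronts once we have verified its hypotheses. First I would establish the sensitivity claim for the score function. Fix a cover point $h \in C_\alpha$. Changing one list $Y_i$ to $Y_i'$ can only change whether index $i$ is counted in $\score(h, \cdot)$, so $|\score(h, D) - \score(h, D')| \le 1$ for all $h$ simultaneously; this is the crucial point that the score counts \emph{lists} containing a nearby element, not elements. Combined with the fact that $C_\alpha$ is $(t, 2\alpha)$-locally small w.r.t.~$\kappa$ — so that for any fixed reference point the number of cover points within $\kappa$-distance $2\alpha$ is at most $t$ — the GAP-MAX preconditions are met with domain size bounded locally by $t$ and sensitivity $1$. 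Instantiating Theorem~\ref{thm:gap-max} with these parameters yields $(\eps, \delta)$-DP outright, and it tells us that with probability at least $1 - \beta$ the returned $h^\ast$ has score at least $\max_h \score(h, D) - \Gamma$, where $\Gamma = O\big((\log(t/\beta\delta) + \log(1/\delta))/\eps\big)$ is the additive GAP-MAX error (the precise form of $\Gamma$, including the $\log Q$ factor, comes from the cover/locally-small bound being stated in terms of $Q$).

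Next I would handle utility. Suppose $D = \{Y_1, \dots, Y_T\}$ has an $(\alpha, 1)$-common member $f$; then for every $i$ there is some $y \in Y_i$ with $\kappa(f, y) \le \alpha$. Let $h_0 \in C_\alpha$ satisfy $\kappa(f, h_0) \le \alpha$ (it exists since $C_\alpha$ is an $\alpha$-cover). By the triangle inequality, $\kappa(y, h_0) \le \kappa(y, f) + \kappa(f, h_0) \le 2\alpha$ for the witness $y$ in each list, so $\score(h_0, D) = T$, hence $\max_h \score(h, D) = T$. GAP-MAX then guarantees $\score(h^\ast, D) \ge T - \Gamma$. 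Choosing $T = O\big((\log(1/\delta) + \log(tQ/\beta))/\eps\big)$ large enough (which also absorbs the constant needed to make GAP-MAX's failure probability $\beta$) ensures $T - \Gamma \ge 0.9 T$, so at least $0.9T$ of the lists contain an element within $\kappa$-distance $2\alpha$ of $h^\ast$. That is precisely the statement that $h^\ast$ is a $(2\alpha, 0.9)$-common member, which is what the CMS definition requires.

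The step I expect to be the main obstacle is pinning down the GAP-MAX error term $\Gamma$ and threading its dependence on $t$, $Q$, $\delta$, and $\beta$ through to get the clean bound $T = O\big((\log(1/\delta) + \log(tQ/\beta))/\eps\big)$; in particular one must be careful that GAP-MAX applies to an \emph{infinite} domain $C_\alpha$ and only uses local smallness, and that the bound on the number of ``competitive'' cover points near any candidate is genuinely controlled by $t$ (possibly after noting each $Y_i$ has at most $Q$ elements so the union of relevant balls is finite). The remaining pieces — the sensitivity bound and the triangle-inequality argument for the existence of a high-scoring cover point — are routine. One should also double-check the edge condition $\gamma \ge \alpha$ implicit in the locally-small definition is satisfied here with $\gamma = 2\alpha$, which it is.
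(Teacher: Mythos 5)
Your plan follows the paper's proof essentially step for step: the sensitivity-one observation for the list-counting score, the triangle-inequality argument producing a cover point $h_0$ with $\score(h_0,D)=T$, and the invocation of GAP-MAX (with $\alpha'=0.1$) for both privacy and utility. The one step you leave informal---verifying the GAP-MAX precondition on the number of near-optimal candidates---is resolved exactly as you anticipate: the paper bounds the set of cover points with score at least $T/2$ by the set with nonzero score, which has size at most $tQT$ since each of the at most $QT$ list elements lies within $2\alpha$ of at most $t$ cover points by local smallness, and the extra $\log(QT)$ is absorbed into the stated $T = O\bigl((\log(1/\delta)+\log(tQ/\beta))/\eps\bigr)$.
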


A known tool for privately choosing a ``good'' item from a set of candidates is Exponential Mechanism \citep{mcsherry2007mechanism}, where the ``goodness'' of candidates is measured using a score function. However, Exponential Mechanism fails in the regimes where the candidate set is not finite. This leads us to use the GAP-MAX algorithm of \citep{bun2021private,bun2018composable} that has the advantage of compatibility with infinite candidate sets. GAP-MAX guarantees returning a ``good'' candidate as long as the number of ``near good'' candidates is small.

\begin{theorem}[GAP-MAX, Theorem IV.6. of \cite{bun2021private}] \label{thm:gap-max}
Let $(\cF,\kappa)$ be a metric space and $\cX$ be an arbitrary set. Let $\score\colon \cF \times \cX^T \to \bR_{\geq 0}$ be a function such that for any $f\in \cF$ and any two neighbouring sets $D\sim
D'\in \cX^T$, we have $|\score(f,D)-\score(f,D')|\leq 1$. Then there is an algorithm, called the GAP-MAX algorithm, that is $(\eps,\delta)$-DP with the following property. For every $D \in \cX^T$ and $\alpha' \in (0,1)$, if
\begin{equation*}
    \left|\left\{f \in \cF: \score(f,D) \geq \sup_{f' \in \cF} \score(f',D) - 5\alpha' T\right\}\right| \leq t
\end{equation*}
then
\begin{align*}
\prob{\score(\text{GAP-MAX}(\cF,D,\score,\alpha',\beta),D) \geq \sup_{f' \in \cF}\score(f',D) - \alpha' T} \geq 1-\beta
\end{align*}
provided $T = \Omega\left(\frac{\min\{\log|\cF|,\log(1/\delta)\}+\log(t/\beta)}{\alpha'\eps}\right)$.
\end{theorem}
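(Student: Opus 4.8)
The round bound carries a $\min\{\log|\cF|,\log(1/\delta)\}$, so I would prove the two branches by two different constructions and take whichever the instance allows, expecting the locally-small hypothesis to be unnecessary for the $\log|\cF|$ branch. For that branch ($\cF$ finite), take GAP-MAX to be the plain exponential mechanism with quality $\score(\cdot,D)$ and privacy parameter $\eps$: since $\score$ has sensitivity $1$ in $D$ this is $(\eps,0)$-DP, and the standard exponential-mechanism utility bound gives, with probability $\ge 1-\beta$, an output $f$ with $\score(f,D)\ge\sup_{f'}\score(f',D)-\tfrac2\eps(\ln|\cF|+\ln(1/\beta))$. With $T=\Omega(\tfrac{\ln|\cF|+\ln(t/\beta)}{\alpha'\eps})$ and $\ln(t/\beta)\ge\ln(1/\beta)$ this additive loss is at most $\alpha'T$, which is the required bound; $\delta$ and local smallness are unused here.

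\textbf{Infinite domain (the $\log(1/\delta)$ branch).}
An exponential mechanism cannot be normalised over an infinite $\cF$, so the plan is to privately localise to the (data-dependent) set of near-optimal candidates and run the exponential mechanism only there, paying the $\delta$ slack for the data-dependence. Concretely: (i) release $\widehat M=\sup_f\score(f,D)+\mathrm{Lap}(3/\eps)$ --- the supremum of $1$-sensitive functions is $1$-sensitive, so this is $(\eps/3)$-DP and $|\widehat M-\sup_f\score(f,D)|\le\tfrac3\eps\ln(3/\beta)$ with probability $\ge 1-\beta/3$; (ii) release a noisy version of the order statistic $\score_{(t+1)}(D)$ (the $(t{+}1)$-st largest quality, also $1$-sensitive) with budget $\eps/3$ and from the two noisy numbers certify a ``gap'' of size $\ge 3\alpha'T$ --- when the hypothesis holds, $\score_{(t+1)}(D)<\sup_f\score(f,D)-5\alpha'T$, so this certification succeeds except on a Laplace-tail event of probability $\le\delta$ once $T=\Omega(\tfrac{\log(1/\delta)}{\alpha'\eps})$; (iii) form $S:=\{f:\score(f,D)\ge\widehat M-\alpha'T\}$, which --- outside the tail events, because it then lies inside $\{f:\score(f,D)\ge\sup_{f'}\score(f',D)-5\alpha'T\}$ --- has $|S|\le t$, and run the exponential mechanism with quality $\score(\cdot,D)$ and budget $\eps/3$ over the finite set $S$. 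Accuracy follows by composing the $\mathrm{Lap}$ error from (i) with the $\tfrac{6}{\eps}(\ln t+\ln(3/\beta))\le\alpha'T$ error of the exponential mechanism over $S$; the factor $5$ in the hypothesis is exactly the budget for the gap threshold, the noise in $\widehat M$, the exponential-mechanism loss, and the boundary margin used in the privacy argument.

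\textbf{Privacy, and the main obstacle.}
Fix neighbours $D\sim D'$. Releases (i) and (ii) cost $\eps/3$ each by the Laplace mechanism, and selection (iii) is analysed as adaptive composition given the two released values. When both $D$ and $D'$ satisfy the hypothesis, $S(D)\triangle S(D')$ consists only of candidates whose quality under $D$ lies in a width-$O(1)$ band just above the threshold $\widehat M-\alpha'T$; all of these lie in the size-$\le t$ near-max set, so $|S(D)\triangle S(D')|\le t$, every such candidate has quality at least $\Omega(\alpha'T)$ below $\sup_f\score(f,D)$ while the exponential-mechanism partition function is at least $\exp(\tfrac\eps6\sup_f\score(f,D))$, hence the mass the exponential mechanism places on $S(D)\triangle S(D')$ is at most $t\exp(-\Omega(\eps\alpha'T))\le\delta/3$; the quality shift of the common candidates contributes the usual $e^{\eps/3}$ factor, and composing the three steps with the $\delta/3$ tail events gives $(\eps,\delta)$-DP. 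The genuinely delicate point I expect is neighbours on which the hypothesis \emph{fails}: then the near-max set, and hence $S$, can be unbounded, so the mechanism must not fall back on something (such as a fixed default output) that a DP adversary could distinguish from the exponential-mechanism output on the satisfying neighbour. The remedy is to make the gap-certification in (ii) a genuine propose--test--release step on the $2$-sensitive quantity $\sup_f\score(f,D)-\score_{(t+1)}(D)$ --- in the style of the choosing mechanism of \cite{bun2015differentially} --- and to smooth the mechanism's behaviour across the certification boundary, so that the ``restricted exponential mechanism'' branch is entered consistently up to probability $\delta$ whether or not local smallness holds; carrying this out cleanly is the technical heart of the result.
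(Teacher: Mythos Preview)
The paper does not prove this theorem: it is quoted verbatim as Theorem~IV.6 of \cite{bun2021private} and used as a black box in the proof of Theorem~\ref{thm:pcms}. So there is no ``paper's own proof'' to compare against; your write-up is a sketch of a result the authors import wholesale.

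That said, your sketch is in the right spirit and tracks the construction in \cite{bun2021private} fairly closely: an exponential mechanism for the finite-$\cF$ branch, and for the infinite branch a propose--test--release step that certifies a gap (via a noisy $\sup$ and a noisy order statistic) and then restricts the exponential mechanism to the finitely many near-max candidates. You also correctly identify the delicate point: privacy must hold even on inputs where the local-smallness premise fails, which is why the gap certification has to be a genuine PTR step rather than a data-dependent branch. One place to be careful if you ever flesh this out: your step (ii) uses $\score_{(t+1)}(D)$, but the mechanism is not told $t$, and over an infinite $\cF$ the $(t{+}1)$-st largest value need not be attained; in \cite{bun2021private} this is handled by comparing the noisy max to a noisy threshold derived from the \emph{gap} itself (a $2$-sensitive statistic, as you note at the end) rather than from a fixed-rank order statistic.
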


\begin{proof}[Proof of Theorem~\ref{thm:pcms}]
We first prove the utility of the algorithm.

\textbf{Utility.}
We show that the output of the Algorithm~\ref{alg:pcms} is an $(2\alpha,0.9)$-common-member provided that there exists an $(\alpha,1)$-common-member (recall Definition~\ref{def:cms}).
Let the score function be defined as in Algorithm~\ref{alg:pcms}. Since $C_{\alpha}$ is $(t,2\alpha)$-locally-small, we have
\begin{equation*}
\left|\left\{h \in C_{\alpha}: \score(h,D) \geq 1\right\}\right| \leq tQT   
\end{equation*}
since for any $i\in[T]$ and any $y\in Y_i$, $y$ contributes to at most $t$ candidates' scores. As a result, there are at most $tQT$ candidates with non-zero scores. Assuming that there exists an $(\alpha,1)$-common-member for $\{Y_1,Y_2,...,Y_T\}$, we have $\sup_{h \in C_{\alpha}} \score(h,D)=T$. Thus,
\begin{align*}
&\left|\left\{h \in C_{\alpha}: \score(h,D) \geq \sup_{h \in C_{\alpha}} \score(h,D) - T/2\right\}\right| \\
=~&\left|\{h \in C_{\alpha}: \score(h,D) \geq T - T/2\}\right| \\
=~&\left|\{h \in C_{\alpha}: \score(h,D) \geq T/2\}\right| \\
\leq~&\left|\{h \in C_{\alpha}: \score(h,D) \geq 1\}\right| \leq tQT.
\end{align*}
Using this bound, we can apply GAP-MAX algorithm in Theorem~\ref{thm:gap-max} with $\alpha' = 0.1$, and $T=O\left(\frac{\log(1/\delta)+\log(tQ/\beta)}{\eps}\right)$. In particular, if $\hat{h}=\text{GAP-MAX}(C_\alpha,D,\score,0.1,\beta)$ then
\begin{align*}
    \prob{\score(\hat{h},D) \geq 0.9T}
    =\prob{\left|\left\{i\in[T]\,:\,\exists y \in Y_i \text{ such that } \kappa(y,\hat{h})\leq 2\alpha \right\}\right| \geq 0.9T}
    \geq 1-\beta.
\end{align*}

\textbf{Privacy.}
    Note that for any $h\in C_{\alpha}$ and any two neighbouring sets $D\sim D'$, we have $|\score(h,D)-\score(h,D')|\leq 1$ since each list $y\in D$ contributes to any $h$'s score by at most 1. Thus, Theorem~\ref{thm:gap-max} implies that GAP-MAX is $(\eps,\delta)$-DP.
\end{proof}

\end{section}

\section{Mixtures and Their Properties}
In this section, we study some general properties of mixture distributions. First, we introduce a component-wise distance between two mixture distributions which will be useful for constructing locally small covers. Generally, if we have a locally small cover for a class of distributions w.r.t.~$\dtv$, then there exists a locally small cover w.r.t.~component-wise distance for mixtures of that class. Later, we define dense mixtures and will show that if a class of distributions is list decodable w.r.t.~$\dtv$, then the dense mixtures of that class are list decodable w.r.t.~component-wise distance.

\subsection{Component-wise distance between mixtures}
Here, we define the class of general mixtures which are the mixtures with arbitrary number of components, as opposed to Definition~\ref{def:k-mixtures}, where the number of components is fixed.
\begin{definition} [general mixtures]
    Let $\cF$ be an arbitrary class of distributions. We denote the class of mixtures of $\cF$ by $\mix(\cF)=\bigcup_{k=1}^{\infty}$ $\kmix(\cF)$.
\end{definition}

Below, we define the component-wise distance between two mixture distributions with arbitrary number of components. The definition is inspired by \citet{moitra2010settling}. We set the distance between two mixtures with different number of components to be $\infty$. Otherwise, the distance between two mixtures is the distance between their farthest components.

\begin{definition} [Component-wise distance between two mixtures] \label{def:comp-dist}
    For a class $\cF$ and every $g_1=\sum_{i\in[k_1]}w_if_i \in \kmix[k_1](\cF)$, $g_2=\sum_{i\in[k_2]}w'_if'_i \in \kmix[k_2](\cF)$, we define the distance $\kappa_{mix}\colon \mix(\cF) \times \mix(\cF) \to \bR_{\geq 0}$ as
    \begin{equation}
        \kappa_{mix}(g_1,g_2) = \begin{cases}
            \min_\pi \max_{i\in [k_1]} \max \{k_1.|w_i-w'_{\pi(i)}|, \dtv(f_i,f'_{\pi(i)})\} & k_1=k_2\\
            \infty & k_1\neq k_2
        \end{cases}
    \end{equation}
    where $\pi$ is chosen from all permutations over $[k_1]$.
\end{definition}

The next lemma states that if two mixture distributions are close w.r.t.~$\kappa_{mix}$, then they are also close w.r.t.~$\dtv$.
\begin{lemma} \label{lemma:kmix-to-tv}
    Let $\alpha \in [0,1]$ and $f = \sum_{i\in[k]}w_if_i,f'=\sum_{i\in[k]}w_i'f_i' \in \kmix(\cF)$. If $\kappa_{mix}(f,f')\leq \alpha$, then $\dtv(f,f')\leq 3\alpha/2$.
\end{lemma}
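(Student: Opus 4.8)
The plan is to bound $\dtv(f, f')$ by splitting it into two contributions: one coming from the mismatch in the mixing weights, and one coming from the TV distances between the matched components. First I would observe that $\kappa_{mix}(f,f') \le \alpha$ means there is a permutation $\pi$ over $[k]$ such that for every $i \in [k]$ we have both $k\cdot|w_i - w'_{\pi(i)}| \le \alpha$ (i.e. $|w_i - w'_{\pi(i)}| \le \alpha/k$) and $\dtv(f_i, f'_{\pi(i)}) \le \alpha$. Relabeling the components of $f'$ according to $\pi$, we may assume WLOG that $\pi$ is the identity, so $f = \sum_i w_i f_i$ and $f' = \sum_i w_i' f_i'$ with $|w_i - w_i'| \le \alpha/k$ and $\dtv(f_i, f_i') \le \alpha$ for all $i$.

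Next I would use the triangle inequality through the intermediate mixture $g = \sum_i w_i f_i'$ (same weights as $f$, same components as $f'$), giving $\dtv(f, f') \le \dtv(f, g) + \dtv(g, f')$. For the first term, convexity of TV distance (or the integral definition directly) yields $\dtv(f, g) = \dtv(\sum_i w_i f_i, \sum_i w_i f_i') \le \sum_i w_i \dtv(f_i, f_i') \le \sum_i w_i \cdot \alpha = \alpha$. For the second term, $\dtv(g, f') = \dtv(\sum_i w_i f_i', \sum_i w_i' f_i') \le \frac12 \sum_i |w_i - w_i'| \int |f_i'| \le \frac12 \sum_i |w_i - w_i'| \le \frac12 \cdot k \cdot \frac{\alpha}{k} = \frac{\alpha}{2}$. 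Adding these gives $\dtv(f, f') \le \alpha + \alpha/2 = 3\alpha/2$, as claimed.

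The argument is essentially routine; the only mild subtlety is being careful that the permutation $\pi$ can be absorbed by relabeling without loss of generality (the TV distance between the underlying mixture distributions is invariant under permuting the component-weight pairs, per the convention fixed after Definition~\ref{def:k-mixtures}), and that the bound on the weight term correctly uses $\sum_i |w_i - w_i'| \le k \cdot \max_i |w_i - w_i'| \le \alpha$ — this is exactly why the factor $k$ appears in front of the weight term in Definition~\ref{def:comp-dist}, so that the weight contribution stays $O(\alpha)$ rather than blowing up with $k$. I do not anticipate a real obstacle here; the main thing to get right is the bookkeeping of constants so that the final bound is $3\alpha/2$ and not something larger.
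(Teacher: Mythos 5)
Your proof is correct and follows essentially the same route as the paper's: both decompose the difference into a weight-mismatch term (contributing $\alpha/2$) and a component-mismatch term (contributing $\alpha$) via an intermediate hybrid of the form $\sum_i w_i f_i'$ (the paper uses the symmetric hybrid $\sum_i w_i' f_i$ term-by-term inside the $L^1$ norm, which is the same calculation). Your explicit handling of the permutation $\pi$ is a small point of extra care that the paper leaves implicit.
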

\begin{proof}
    Using the definition of $\kappa_{mix}$, we get that for every $i\in[k]$, $|w_i-w_i'|\leq \alpha/k$ and $\dtv(f_i,f_i')\leq \alpha$. Therefore,
    \begin{align*}
        \dtv(f,f')= \frac{1}{2} ||f-f'||_1 &= \frac{1}{2} ||\sum_{i\in[k]}w_if_i - \sum_{i\in[k]}w_i'f_i'||_1 \\
        &\leq
        \frac{1}{2} \sum_{i\in[k]} ||w_if_i-w_i'f_i'||_1 \\
        &\leq
        \frac{1}{2} \sum_{i\in[k]} ||w_if_i-w_i'f_i||_1+ ||w_i'f_i-w_i'f_i'||_1 \\
        &\leq 
        \frac{1}{2} \sum_{i\in[k]} ||w_if_i-w_i'f_i||_1+ ||w_i'f_i-w_i'f_i'||_1 \\
        &\leq
        \frac{1}{2} \sum_{i\in[k]} \frac{\alpha}{k}+ \frac{1}{2} \sum_{i\in[k]} 2\alpha w_i' =3\alpha/2. \qedhere
    \end{align*}
\end{proof}

The following simple proposition gives a locally small cover for weight vectors used to construct a mixture. 
\begin{proposition}\label{lemma:weightcover}
    Let $\alpha \in (0,1]$. There is an $\alpha$-cover for $\Delta_k = \{(w_1,w_2,...,w_k)\in \bR_{\geq 0}^k\,:\,\sum_{i\in [k]}w_i=1\}$ w.r.t.~$\ell_\infty$ of size at most $(1/\alpha)^k$.
\end{proposition}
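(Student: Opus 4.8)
The plan is to construct the cover explicitly by discretizing the simplex along a grid and then bounding the number of grid points that land inside $\Delta_k$. Concretely, I would let $N = \lceil 1/\alpha \rceil$ and consider the set of all points $w = (w_1, \dots, w_k)$ whose first $k-1$ coordinates lie in $\{0, 1/N, 2/N, \dots, 1\}$ and whose last coordinate is $w_k = 1 - \sum_{i=1}^{k-1} w_i$, keeping only those tuples for which this forces $w_k \in [0,1]$ (equivalently $\sum_{i=1}^{k-1} w_i \le 1$). Call this set $C_\alpha$; it is a subset of $\Delta_k$ by construction. The size bound is immediate: there are at most $(N+1)^{k-1}$ choices for the first $k-1$ coordinates, and a slightly more careful count (using that the feasible tuples are exactly the lattice points of a scaled simplex) gives at most $\binom{N+k-1}{k-1} \le (1/\alpha)^k$ after absorbing constants; I would present whichever of these bookkeeping steps yields the clean $(1/\alpha)^k$ form, since the statement only asks for ``at most $(1/\alpha)^k$''.

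Next I would verify the covering property. Given an arbitrary $w \in \Delta_k$, round each of its first $k-1$ coordinates \emph{down} to the nearest multiple of $1/N$ to obtain $\tilde w_i$, so that $0 \le w_i - \tilde w_i < 1/N \le \alpha$ for $i < k$. Then set $\tilde w_k = 1 - \sum_{i<k} \tilde w_i$; since each $\tilde w_i \le w_i$ we have $\sum_{i<k}\tilde w_i \le \sum_{i<k} w_i \le 1$, so the resulting point is a legitimate member of $C_\alpha$. Finally $\tilde w_k - w_k = \sum_{i<k}(w_i - \tilde w_i)$, which could be as large as $(k-1)/N$ — this is the one place where the naive argument is lossy, and it is the step I expect to require the most care.

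To fix the last-coordinate blow-up, I would instead round \emph{to the nearest} multiple of $1/N$ rather than rounding down, and handle feasibility by a small correction: round $w_1, \dots, w_{k-1}$ to their nearest grid values $\tilde w_i$ with $|w_i - \tilde w_i| \le 1/(2N)$, and if the rounded partial sum exceeds $1$, decrement one coordinate by $1/N$. One then needs $k/(2N) \le \alpha$ in the worst case, i.e.\ $N = \lceil k/(2\alpha)\rceil$, which would inflate the size bound to $(k/\alpha)^{O(k)}$ — acceptable for the paper's purposes but not matching the stated $(1/\alpha)^k$. Since the claimed bound is exactly $(1/\alpha)^k$, I believe the intended argument is the direct lattice-point count: the feasible tuples $(\lfloor N w_1 \rfloor/N, \dots)$ form the integer points of $N\Delta_k$, of which there are $\binom{N+k-1}{k-1}$, and one checks $\binom{\lceil 1/\alpha\rceil + k - 1}{k-1} \le (1/\alpha)^k$ directly, while the covering guarantee in $\ell_\infty$ follows because rounding the first $k-1$ coordinates down changes each by less than $\alpha$ and the last coordinate's change equals the (signed) sum of those small changes but is itself still bounded by choosing the grid fine enough relative to the target radius. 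The main obstacle, then, is reconciling the clean $(1/\alpha)^k$ cardinality with an $\ell_\infty$ radius of exactly $\alpha$; I would resolve it by observing that it suffices to cover in $\ell_\infty$ and that the proposition is only invoked downstream with $\alpha$ replaced by a constant fraction of the target accuracy, so any polynomial-in-$1/\alpha$-to-the-$k$ bound with the right leading behavior is in fact what is used — but I would first attempt the tight lattice count and only fall back to the $N = \Theta(k/\alpha)$ version if the constants genuinely do not close.
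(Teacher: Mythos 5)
There is a genuine gap: your construction does not actually achieve $\ell_\infty$ radius $\alpha$ with $(1/\alpha)^k$ points, and you never close it. You correctly identify the problem yourself --- rounding the first $k-1$ coordinates to a grid of spacing $1/N$ and letting $w_k$ absorb the slack shifts the last coordinate by up to $(k-1)/N$, so you need $N = \Theta(k/\alpha)$ and the size becomes $(k/\alpha)^{O(k)}$ --- but your two proposed escapes do not work. The ``tight lattice count'' $\binom{N+k-1}{k-1}$ only counts the candidate points; it does nothing to repair the covering radius, which is the actual obstruction. And the fallback ``any polynomial-in-$1/\alpha$-to-the-$k$ bound is what is used downstream'' is a claim about how the proposition is invoked, not a proof of the proposition as stated. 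As written, your argument proves a weaker statement.

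The paper's proof sidesteps the last-coordinate blow-up entirely by not forcing the cover points onto a product grid at all. Partition the ambient cube $[0,1]^k$ into $(1/\alpha)^k$ axis-aligned cells of side length $\alpha$ (all $k$ coordinates are gridded, with no constraint that the representative's coordinates be grid multiples), and for each cell that intersects $\Delta_k$, place one \emph{arbitrary} point of that intersection into the cover. Any $w \in \Delta_k$ lies in some cell, that cell therefore has a designated representative $\tilde w \in \Delta_k$, and since $w$ and $\tilde w$ lie in the same cell of side $\alpha$, \emph{every} coordinate --- including the $k$-th --- differs by at most $\alpha$. The size bound is just the number of cells. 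The key idea you are missing is that the representative need not be obtained by rounding $w$; it only needs to exist, one per nonempty cell, and membership in the same cell already controls all $k$ coordinates simultaneously.
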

\begin{proof}
    Partition the cube $[0,1]^k$ into small cubes of side-length $1/\alpha$. If for a cube $c$, we have $c \cap \Delta_k \neq \emptyset$, put one arbitrary point from $c \cap \Delta_k$ into the cover. The size of the constructed cover is no more than $(1/\alpha)^k$ which is the total number of small cubes.
\end{proof}

The next lemma states that if a class of distributions has a locally small cover w.r.t.~$\dtv$, then the mixtures of that class admit a locally small cover w.r.t.~$\kappa_{mix}$. Note that the choice of the metric is important as the next theorem is false if we consider the $\dtv$ metric for mixtures. In other words, there is a class of distributions (e.g.~Gaussians) that admits a locally small cover w.r.t.~$\dtv$ but there is no locally small cover for the mixtures of that class w.r.t.~$\dtv$ (Proposition 1.3 of \citet{aden2021privately}).

\begin{theorem} \label{thm:kimix-cover}
For any $0< \alpha < \gamma < 1$, if a class of distributions $\cF$ has a $(t,\gamma)$-locally-small $\alpha$-cover w.r.t.~$\dtv$, then the class $\kmix(\cF)$ has a $(k!(tk/\alpha)^k,\gamma)$-locally-small $\alpha$-cover w.r.t.~$\kappa_{mix}$.
\end{theorem}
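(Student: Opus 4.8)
The plan is to build a cover for $\kmix(\cF)$ by combining the given $\dtv$-cover of $\cF$ componentwise with a sufficiently fine cover of the weight simplex, and then to establish local smallness by a union bound over component matchings. Let $C_\alpha$ be a $(t,\gamma)$-locally-small $\alpha$-cover for $(\cF,\dtv)$, and let $W$ be an $(\alpha/k)$-cover for $\Delta_k$ w.r.t.~$\ell_\infty$ of size at most $(k/\alpha)^k$, which exists by Proposition~\ref{lemma:weightcover}. I would take the candidate cover to be
\[
C' \coloneqq \bigl\{\, (v,(c_1,\dots,c_k)) \;:\; v\in W,\ c_1,\dots,c_k\in C_\alpha \,\bigr\} \;\subseteq\; \Delta_k\times\cF^k = \kmix(\cF),
\]
that is, all mixture \emph{representations} whose weight vector lies in $W$ and whose components lie in $C_\alpha$, and then show that $C'$ is a $(k!(tk/\alpha)^k,\gamma)$-locally-small $\alpha$-cover w.r.t.~$\kappa_{mix}$.

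First I would verify the covering property. Given $f=\sum_{i\in[k]}w_if_i\in\kmix(\cF)$, choose $v\in W$ with $\|w-v\|_\infty\le\alpha/k$ and, for each $i\in[k]$, choose $c_i\in C_\alpha$ with $\dtv(f_i,c_i)\le\alpha$. Then $g\coloneqq(v,(c_1,\dots,c_k))\in C'$, and taking the identity permutation in Definition~\ref{def:comp-dist} gives $\kappa_{mix}(f,g)\le\max_{i\in[k]}\max\{k|w_i-v_i|,\dtv(f_i,c_i)\}\le\max\{k\cdot(\alpha/k),\alpha\}=\alpha$.

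The main step is the local smallness bound. Fix $f=\sum_{i\in[k]}w_if_i\in\kmix(\cF)$ and consider any $g=(v,(c_1,\dots,c_k))\in C'$ with $\kappa_{mix}(f,g)\le\gamma$. By Definition~\ref{def:comp-dist} there is a permutation $\pi$ of $[k]$ with $\dtv(f_i,c_{\pi(i)})\le\gamma$ for all $i$, equivalently $c_j\in B_{\dtv}(\gamma,f_{\pi^{-1}(j)},C_\alpha)$ for all $j\in[k]$. Assign to each such $g$ one witnessing permutation $\pi$. For a fixed $\pi$, the number of admissible $g\in C'$ is at most $|W|\cdot\prod_{j\in[k]}|B_{\dtv}(\gamma,f_{\pi^{-1}(j)},C_\alpha)|\le (k/\alpha)^k\cdot t^k$, using $|W|\le(k/\alpha)^k$ and the $(t,\gamma)$-local smallness of $C_\alpha$. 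Summing over the $k!$ permutations yields $|B_{\kappa_{mix}}(\gamma,f,C')|\le k!\,(tk/\alpha)^k$, as claimed.

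I do not expect a genuine obstacle — the argument is careful bookkeeping — but the point requiring attention is the permutation in the definition of $\kappa_{mix}$: a single representation in $C'$ may be $\gamma$-close to $f$ through several different matchings of components, so one must union-bound over all $k!$ matchings rather than fix a canonical one, and for each fixed matching invoke local smallness of $C_\alpha$ coordinate-by-coordinate together with the (merely global, finite) size bound on the weight cover $W$. It is also worth keeping in mind that $C'$, $\kappa_{mix}$, and Definitions~\ref{def:ball}--\ref{def:localsc} all operate on mixture representations — a weight vector plus a tuple of components — rather than on the induced distributions, so no identification of distinct representations is needed.
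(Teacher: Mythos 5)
Your proposal is correct and follows essentially the same route as the paper: the cover is the product of an $(\alpha/k)$-$\ell_\infty$-cover of the simplex with $k$ copies of the given $\dtv$-cover, and local smallness follows by union-bounding over the $k!$ component matchings and applying the $(t,\gamma)$-local smallness of $C_\alpha$ coordinatewise, giving $k!\,(k/\alpha)^k t^k$. Your write-up merely spells out the counting (fixing a witnessing permutation per cover element) that the paper states in one line.
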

\begin{proof}
Let $C_\alpha$ be the $(t,\gamma)$-locally small $\alpha$-cover for $\cF$, and $\hat{\Delta}_k$ be an $\frac{\alpha}{k}$-cover for the probability simplex $\Delta_k$ from Proposition~\ref{lemma:weightcover}. Construct the set $\cJ=\{\sum_{i\in[k]}\hat{w_i}\hat{f_i}: \hat{w}\in \hat{\Delta}_k, \hat{f_i}\in C_\alpha\}$. Note that $\cJ$ is an $\alpha$-cover for $\kmix(\cF)$ w.r.t.~$\kappa_{mix}$ since for any $g=\sum_{i\in[k]}w_if_i \in$ $\kmix(\cF)$, by construction, there exists an $g'\in \cJ$ such that $\kappa_{mix}(g,g')\leq \alpha$. Moreover, we have $|B_{\kappa_{mix}}(\gamma,g,\cJ)|\leq|B_{\dtv}(\gamma,f_i,C_\alpha)|^k \cdot|\hat{\Delta}_k|\cdot k!= t^k \cdot (k/\alpha)^k \cdot k!$, where the first term is because of composing the cover for a single component $k$ times. The term $|\hat{\Delta}_k|$ comes from the size of cover for mixing weights of $k$ components, and the $k!$ term is the result of permutation over $k$ unordered components in the mixture. 
\end{proof}

\subsection{Dense mixtures}
Dense mixtures are mixture distributions where each component has a non-negligible weight. 
Intuitively, a dense mixture is technically easier to deal with since given a large enough sample from the dense mixture, one would get samples from \emph{all} of the components.
This will allow us to show that if a class of distribution is list decodable w.r.t.~$\dtv$, then the class of its dense mixtures is list decodable w.r.t.~$\kappa_{mix}$.
Later, we reduce the problem of learning mixture distributions to the problem of learning dense mixtures.

\begin{definition}[Dense mixtures]
    Let $\cF$ be an arbitrary class of distributions, $k\in \bN$, and $\eta \in [0,1/k]$. We denote the class of k-mixtures of $\cF$ without negligible components by $\densemix{(k, \eta)}(\cF)=\{\sum_{i=1}^{s} w_i f_i: s\leq k, w_i\geq \eta, \sum_{i=1}^{s} w_i =1, f_i \in \cF\}$.
\end{definition}

The next lemma states that every mixture distribution can be approximated using a dense mixture. 
\begin{lemma} \label{lemma:dense}
     For every $k\in \bN$, $g\in \kmix(\cF)$ and $\alpha \in [0,1)$, there exists $\gamma \in [0,\alpha)$, $g' \in \densemix{(k, \alpha/k)}(\cF)$, and a distribution $h$ such that $g=\gamma h + (1-\gamma)g' $.
\end{lemma}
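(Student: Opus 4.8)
The plan is to build $g'$ by discarding from $g$ exactly those components whose mixing weight falls below the density threshold $\alpha/k$, renormalizing the survivors, and sweeping the discarded mass into the contamination term $h$. Concretely, write $g=\sum_{i=1}^{k}w_if_i$ with $f_i\in\cF$ and $w\in\Delta_k$, and let $I=\{i\in[k]\,:\,w_i<\alpha/k\}$ be the set of ``light'' components. I would then set $\gamma=\sum_{i\in I}w_i$, the total weight to be moved into $h$.

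First I would check that $\gamma\in[0,\alpha)$: nonnegativity is clear, and if $I\neq\emptyset$ then $\gamma=\sum_{i\in I}w_i<|I|\cdot(\alpha/k)\le\alpha$, while if $I=\emptyset$ then $\gamma=0<\alpha$ (taking $\alpha>0$; the case $\alpha=0$ is degenerate). Note also that $I$ cannot be all of $[k]$, since that would force $1=\sum_{i\in[k]}w_i<\alpha<1$; hence $1-\gamma=\sum_{i\notin I}w_i>0$, so the renormalization below is well-defined. In the trivial case $I=\emptyset$, every component already has weight at least $\alpha/k$, so $g\in\densemix{(k,\alpha/k)}(\cF)$ and we may take $g'=g$, $\gamma=0$, and $h$ an arbitrary distribution. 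Otherwise $\gamma>0$, and I would define
\[
h=\frac{1}{\gamma}\sum_{i\in I}w_if_i,\qquad g'=\frac{1}{1-\gamma}\sum_{i\notin I}w_if_i ,
\]
both of which are genuine distributions, being convex combinations of members of $\cF$. Expanding gives $\gamma h+(1-\gamma)g'=\sum_{i\in I}w_if_i+\sum_{i\notin I}w_if_i=g$, as required.

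The last step is to verify $g'\in\densemix{(k,\alpha/k)}(\cF)$: it is a mixture of $|[k]\setminus I|\le k$ components of $\cF$ with nonnegative weights $\{w_i/(1-\gamma)\}_{i\notin I}$ summing to $1$, and each such weight satisfies $w_i/(1-\gamma)\ge w_i\ge\alpha/k$ because $1-\gamma\le1$ and $i\notin I$. This last observation is the only point worth emphasizing, and it is essentially the whole content of the lemma: renormalization only inflates the surviving weights, so the density lower bound is preserved automatically. There is no substantive obstacle; the only care required is the bookkeeping of the boundary cases ($I$ empty, and the fact that $I=[k]$ is impossible so that $g'$ and $h$ are both legitimate).
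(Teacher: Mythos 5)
Your proof is correct and follows exactly the same route as the paper's: split off the components with weight below $\alpha/k$, let $\gamma$ be their total mass, and renormalize the two parts. The only difference is that you spell out the boundary cases ($I=\emptyset$ and the impossibility of $I=[k]$), which the paper leaves implicit.
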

\begin{proof}
     For any $g=\sum_{i\in[k]}w_if_i\in \kmix(\cF)$, let $N=\{i\in[k]: w_i < \alpha/k\}$ be the set of negligible weights, and $\gamma = \sum_{i\in N} w_i < \alpha$. Then $g$ can be written as $g=(1-\gamma) \sum_{i\in [k]\setminus N} \dfrac{w_i}{1-\gamma}f_i + \gamma \sum_{i\in  N} \dfrac{w_i}{\gamma}f_i$.
     Note that $\sum_{i\in [k]\setminus N} \dfrac{w_i}{1-\gamma}f_i \in \densemix{(k,\frac{\alpha}{k})}(\cF)$.
\end{proof}

Theorem~\ref{thm:kimix-cover} shows that if a class of distributions admits a locally small cover (w.r.t.~$\dtv$) then the class of its mixtures admits a locally small cover (w.r.t.~$\kappa_{mix}$). In the next lemma, we see that this is also the case for dense mixtures, i.e.~if a class of distributions admits a locally small cover (w.r.t.~$\dtv$) then the class of its dense mixtures admits a locally small cover (w.r.t.~$\kappa_{mix}$).
\begin{lemma}\label{lemma:densemix-cover}
    For any $0< \alpha < \gamma < 1$, and $\alpha'\in(0,1]$, if a class of distributions $\cF$ has a $(t,\gamma)$-locally-small $\alpha$-cover w.r.t.~$\dtv$, then the class $\densemix{(k,\frac{\alpha'}{k})}(\cF)$ has a $(k!(tk/\alpha)^k,\gamma)$-locally-small $\alpha$-cover w.r.t.~$\kappa_{mix}$.
\end{lemma}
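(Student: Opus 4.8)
The plan is to essentially repeat the argument of Theorem~\ref{thm:kimix-cover}, observing that restricting attention to the subclass $\densemix{(k,\alpha'/k)}(\cF) \subseteq \mix(\cF)$ of dense mixtures only shrinks the cover and the local balls. First I would take the $(t,\gamma)$-locally-small $\alpha$-cover $C_\alpha$ for $\cF$ w.r.t.~$\dtv$, and for each $s \le k$ take an $\frac{\alpha}{s}$-cover $\hat\Delta_s$ of the simplex $\Delta_s$ from Proposition~\ref{lemma:weightcover} (we can also just use the coarser $\frac{\alpha}{k}$-cover uniformly, since $\frac{\alpha}{k} \le \frac{\alpha}{s}$). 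Then I would form the candidate cover
\[
\cJ = \Bigl\{ \textstyle\sum_{i\in[s]} \hat w_i \hat f_i \;:\; s \le k,\ \hat w \in \hat\Delta_s,\ \hat w_i \ge \alpha'/k,\ \hat f_i \in C_\alpha \Bigr\},
\]
i.e.~the same construction as before but keeping only those covering mixtures that are themselves dense. (A minor point to check: one should make sure the cover elements can be taken to lie inside $\densemix{(k,\alpha'/k)}(\cF)$; if $g$ is dense with weights $w_i \ge \alpha'/k$ and we approximate $w$ by $\hat w \in \hat\Delta_s$ within $\ell_\infty$-distance $\alpha/k$, the approximant need not be dense. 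The clean fix is to allow $\cJ$ to contain the full (not-necessarily-dense) mixture cover from Theorem~\ref{thm:kimix-cover}; since $\densemix{(k,\alpha'/k)}(\cF) \subseteq \kmix(\cF)$, that set is still an $\alpha$-cover of the subclass w.r.t.~$\kappa_{mix}$, and local balls can only be smaller. I would phrase the proof this way to avoid the density-preservation headache.)

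Concretely, the cleanest route: for each fixed $s \le k$, Theorem~\ref{thm:kimix-cover} gives a $\bigl(s!(ts/\alpha)^s,\gamma\bigr)$-locally-small $\alpha$-cover $\cJ_s$ for $\kmix[s](\cF)$ w.r.t.~$\kappa_{mix}$. Set $\cJ = \bigcup_{s=1}^{k} \cJ_s$. Since $\densemix{(k,\alpha'/k)}(\cF) \subseteq \bigcup_{s=1}^k \kmix[s](\cF) \subseteq \mix(\cF)$, the set $\cJ$ is an $\alpha$-cover of $\densemix{(k,\alpha'/k)}(\cF)$ w.r.t.~$\kappa_{mix}$. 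For local smallness, fix any $g \in \densemix{(k,\alpha'/k)}(\cF)$; it lies in $\kmix[s](\cF)$ for exactly one $s$, and because $\kappa_{mix}(g,g') = \infty$ whenever $g'$ has a different number of components, the ball $B_{\kappa_{mix}}(\gamma, g, \cJ)$ equals $B_{\kappa_{mix}}(\gamma, g, \cJ_s)$, whose size is at most $s!(ts/\alpha)^s \le k!(tk/\alpha)^k$ (monotonicity in $s$ for $t,1/\alpha \ge 1$, which we may assume). This gives the claimed $\bigl(k!(tk/\alpha)^k,\gamma\bigr)$ bound.

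I expect no serious obstacle here — the statement is a near-immediate corollary of Theorem~\ref{thm:kimix-cover} together with the observation that $\kappa_{mix}$ assigns infinite distance across different component counts, so the union over $s$ does not inflate the local ball size. The only thing that requires a moment's care is the bookkeeping described above: deciding whether $\cJ$ should be forced into $\densemix{(k,\alpha'/k)}(\cF)$ (requiring a density-rounding argument on the weight cover) or simply be a subset of $\mix(\cF)$ that covers the dense subclass (requiring only the trivial observation that passing to a subclass shrinks covers and balls). I would go with the latter, noting that the definition of locally small cover only asks $C_\alpha \subseteq \cF$ for the relevant ambient class, and here it is harmless — and arguably more natural — to let the cover live in $\mix(\cF)$; if the stricter reading is wanted, I would add one sentence rounding each $\hat w_i$ up to at least $\alpha'/k$ and renormalizing, which perturbs weights by $O(\alpha)$ and only worsens constants. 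Either way the size and local-smallness bounds are exactly those inherited from Theorem~\ref{thm:kimix-cover}.
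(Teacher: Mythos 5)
Your proposal is correct and follows essentially the same route as the paper: the paper likewise applies Theorem~\ref{thm:kimix-cover} for each component count $i \in [k]$, takes the union $\cJ = \bigcup_i C_i$ of the resulting covers, uses the containment $\densemix{(k,\alpha'/k)}(\cF) \subseteq \bigcup_{i\in[k]} \kmix[i](\cF)$ for the covering property, and invokes the fact that $\kappa_{mix}$ is infinite between mixtures with different numbers of components to bound the local ball size. Your side remarks about whether the cover must live inside the dense subclass are a reasonable observation, but the paper takes the same permissive reading you settle on.
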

\begin{proof}
Using Theorem~\ref{thm:kimix-cover} we know that if $\cF$ has a $(t,\gamma)$-locally-small $\alpha$-cover w.r.t.~$\dtv$, then for every $i \in [k]$, there exists an $(i! (ti/\alpha)^i,\gamma)$-locally-small $\alpha$-cover $C_i$ for $\kmix[i](\cF)$ w.r.t.~$\kappa_{mix}$. Since $\densemix{(k,\frac{\alpha'}{k})}$($\cF$) $\subseteq \bigcup_{i \in [k]} \kmix[i](\cF)$, we get that $\cJ = \bigcup_{i\in[k]}C_i$ is an $\alpha$-cover for $\densemix{(k,\frac{\alpha'}{k})}(\cF)$. Moreover, $\cJ$ is $(k! (tk/\alpha)^k,\gamma)$-locally-small since the $\kappa_{mix}$ distance is $\infty$ for two mixtures with different number of components. 
\end{proof}

The following theorem is one of the main ingredients used for reducing the problem of privately learning mixtures to common member selection. It states that if a class of distributions is list decodable (w.r.t.~$\dtv$), then the class of its dense mixtures is list decodable (w.r.t.~$\kappa_{mix}$).

\begin{theorem} \label{thm:list-mix}
For any $\alpha,\beta,\gamma\in(0,1)$, if a class of distributions $\cF$ is $(L,m,\alpha,\beta,1-\alpha/k)$-list-decodable w.r.t.~$\dtv$, then the class $\densemix{(k,\frac{\alpha}{k})}(\cF)$ is $\left(L', m',\alpha,2k\beta,\gamma\right)$-list-decodable w.r.t.~$\kappa_{mix}$, where $L'=(\frac{kL}{\alpha})^{k+1}\cdot (\frac{10e\log(1/k\beta)}{1-\gamma})^m$, and $m'=\frac{2m+8\log(1/k\beta)}{1-\gamma}$.
\end{theorem}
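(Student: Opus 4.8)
\emph{Proof plan.} The plan is to list-decode each mixture \emph{component} separately with the given list decoder $A_\cF$ for $\cF$, and then reassemble components into candidate mixtures by also enumerating over a cover of the mixing weights. Two obstacles arise: (i) from a contaminated sample of $g'=(1-\gamma)g+\gamma h$ we cannot identify which points actually came from $g$; and (ii) mixing weights cannot be estimated from samples (this is exactly the source of the Moitra--Valiant lower bound cited in the paper). Both are handled by brute-force enumeration, which is what inflates the output list by the factors $\bigl(\tfrac{10e\log(1/k\beta)}{1-\gamma}\bigr)^m$ and $(kL/\alpha)^{k+1}$, respectively; this blow-up is affordable precisely because $A_\cF$ is itself merely a list decoder.

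Concretely, write $g=\sum_{i=1}^s w_i f_i$ with $s\le k$ and each $w_i\ge\alpha/k$. The algorithm draws $N=m'=\tfrac{2m+8\log(1/k\beta)}{1-\gamma}$ i.i.d.\ samples from $g'$. For every size-$m$ subset $T$ of these samples it runs $A_\cF(T)$ to get a list $\hat\cF_T\subseteq\cF$ with $|\hat\cF_T|\le L$; then, for every $s'\in[k]$, every ordered $s'$-tuple $(\hat f_1,\dots,\hat f_{s'})$ of elements of $\hat\cF_T$ (repetitions allowed), and every weight vector $\hat w$ in an $\tfrac{\alpha}{k}$-cover of $\Delta_{s'}$ w.r.t.\ $\ell_\infty$ (Proposition~\ref{lemma:weightcover}), it adds $\sum_{l=1}^{s'}\hat w_l\hat f_l$ to the output list.

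For utility, a multiplicative Chernoff bound shows that at least $m$ of the $N$ samples come from the $g$-part except with probability $e^{-(2m+8\log(1/k\beta))/8}\le k\beta$. Condition on this event and fix $T^\star$ to be $m$ of those ``clean'' indices; conditioned on which samples are clean, the points indexed by $T^\star$ are i.i.d.\ from $g$, hence this remains true after conditioning only on the above event. For each $i\in[s]$ write $g=w_if_i+(1-w_i)h_i$ with $h_i=\tfrac{1}{1-w_i}\sum_{j\ne i}w_jf_j$; since $w_i\ge\alpha/k$, an i.i.d.\ sample of size $m$ from $g$ is a Huber-contaminated sample of $f_i$ with contamination fraction $1-w_i\le 1-\alpha/k$, so $A_\cF(T^\star)$ gives $\min_{\hat f\in\hat\cF_{T^\star}}\dtv(\hat f,f_i)\le\alpha$ with probability $\ge 1-\beta$. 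A union bound over $i\in[s]$ (using $s\le k$) shows that, with probability $\ge 1-k\beta$, the list $\hat\cF_{T^\star}$ contains a $\dtv$-$\alpha$-approximation of every $f_i$; then, taking $s'=s$, those approximations, and the cover point $\hat w$ with $\|\hat w-(w_1,\dots,w_s)\|_\infty\le\alpha/k$, the resulting candidate $\hat g$ satisfies $\kappa_{mix}(\hat g,g)\le\max\{s\cdot\tfrac{\alpha}{k},\alpha\}=\alpha$. Hence the output list misses $g$ with probability at most $k\beta+k\beta=2k\beta$.

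For the size bound, $\binom{N}{m}\le (eN/m)^m$ and $eN/m=\tfrac{e(2m+8\log(1/k\beta))}{(1-\gamma)m}\le\tfrac{10e\log(1/k\beta)}{1-\gamma}$ (valid since $\log(1/k\beta)\ge1$, which is WLOG as otherwise $2k\beta\ge1$); for each subset the number of reassembled candidates is $\sum_{s'=1}^{k}L^{s'}(k/\alpha)^{s'}\le (kL/\alpha)^{k+1}$ (using $L\ge1>\alpha$). Multiplying gives $L'\le (kL/\alpha)^{k+1}\bigl(\tfrac{10e\log(1/k\beta)}{1-\gamma}\bigr)^m$, as claimed. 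The one technical point requiring care is that each reassembled candidate must actually lie in $\densemix{(k,\alpha/k)}(\cF)$, i.e.\ the enumerated weight vectors should have all coordinates $\ge\alpha/k$ while still $\tfrac{\alpha}{k}$-covering every weight vector with coordinates $\ge\alpha/k$; this is arranged by a routine modification of the net of Proposition~\ref{lemma:weightcover}, costing only an absolute constant. The main obstacle is conceptual rather than computational: recognizing that although \emph{learning} a single component from the mixed contaminated sample is impossible (and learning the weights is impossible even without contamination), \emph{list decoding} the mixture survives because one may guess both the clean subsample and the weights at only an $\exp(O(m))$ overhead, which is exactly what the list-decoding conclusion can absorb.
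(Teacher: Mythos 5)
Your proof is correct and follows essentially the same route as the paper's: view the mixture as a Huber contamination of each component at level $1-w_i\le 1-\alpha/k$, assemble candidates from the component lists together with an $\ell_\infty$-cover of the weight simplex, and absorb the outer contamination $\gamma$ by a Chernoff bound plus enumeration over all size-$m$ subsamples. The only (cosmetic) difference is that the paper factors the subset-enumeration step into a separate boosting lemma (Lemma~\ref{lemma:list-decode-inc}) applied \emph{after} establishing $\gamma=0$ list-decodability of dense mixtures, whereas you inline it first; your extra remark that the weight net must be restricted so the reassembled candidates actually lie in $\densemix{(k,\alpha/k)}(\cF)$ is a valid small refinement the paper glosses over.
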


In order to prove Theorem~\ref{thm:list-mix}, we need the following lemma, which states that if a class of distributions is list decodable with contamination level $\gamma=0$, it is also list decodable with $\gamma > 0$, at the cost of additional number of samples \emph{and an increased list size}.
\begin{lemma}\label{lemma:list-decode-inc}
    For any $\alpha,\beta,\gamma \in (0,1)$, if a class of distributions $\cF$ is $(L,m,\alpha,\beta,0)$-list-decodable w.r.t.~$\kappa$, then it is $\left(L(\frac{10e\log(1/\beta)}{1-\gamma})^m,\frac{2m+8\log(1/\beta)}{1-\gamma},\alpha,2\beta,\gamma\right)$-list-decodable w.r.t.~$\kappa$. 
\end{lemma}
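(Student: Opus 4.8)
The plan is to run the $\gamma=0$ list-decoder $A_\cF$ on a subsample consisting only of the "clean" points. Given $m' = \frac{2m+8\log(1/\beta)}{1-\gamma}$ i.i.d.\ samples from $g = (1-\gamma)f + \gamma h$, each point is independently "clean" (drawn from $f$) with probability $1-\gamma$. By a Chernoff bound, with probability at least $1-\beta$ the number of clean points is at least $(1-\gamma)m'/2 \geq m + 4\log(1/\beta) \geq m$. The difficulty is that the algorithm does not \emph{know} which points are clean. So first I would have the algorithm guess: enumerate over all subsets of the $m'$ input points of size exactly $m$ (there are $\binom{m'}{m} \leq \binom{m'}{m'/2}$-ish of them, which I will bound below), run $A_\cF$ on each such subset, and take the union of all the output lists. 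One of these subsets consists entirely of clean points (it is a size-$m$ subset of the clean points, which exist with probability $\geq 1-\beta$), and on that run $A_\cF$ succeeds with probability $\geq 1-\beta$; a union bound gives overall success probability $\geq 1-2\beta$, matching the claimed failure parameter.

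\textbf{Bounding the list size.} The output list has size at most $L \cdot \binom{m'}{m}$. I would bound $\binom{m'}{m} \leq (em'/m)^m$, and since $m'/m = \frac{2 + 8\log(1/\beta)/m}{1-\gamma} \leq \frac{10\log(1/\beta)}{1-\gamma}$ (using $\log(1/\beta)\geq 1$, or more carefully $2 + 8\log(1/\beta)/m \le 10\log(1/\beta)$ when $m \ge 1$ and $\log(1/\beta)\ge 1$), we get $\binom{m'}{m} \leq \left(\frac{10e\log(1/\beta)}{1-\gamma}\right)^m$, so the final list size is at most $L\left(\frac{10e\log(1/\beta)}{1-\gamma}\right)^m$, exactly as claimed. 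I should double-check the edge case where $m'$ is not an integer — one takes $\lceil m' \rceil$ samples, which only helps, and the same arithmetic goes through with negligible slack absorbed into the constant $10$.

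\textbf{Assembling the proof.} Concretely: (i) draw $S$ of size $m'$ from $g$; (ii) by Chernoff, $\Pr[\#\{\text{clean points in } S\} \geq m] \geq 1-\beta$; (iii) condition on this event, fix a size-$m$ subset $S_0$ of the clean points — it is an i.i.d.\ sample of size $m$ from $f$, so $A_\cF(S_0)$ outputs a list $\hat{\cF}_0$ of size $\leq L$ with $\min_{\hat f \in \hat{\cF}_0}\kappa(\hat f, f) \leq \alpha$ with probability $\geq 1-\beta$; (iv) the algorithm outputs $\bigcup_{S' \subseteq S, |S'|=m} A_\cF(S')$, which contains $\hat{\cF}_0$; (v) union bound over the two bad events (Chernoff failure, $A_\cF$ failure) gives total failure $\leq 2\beta$; (vi) size bound as above. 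The main obstacle — or rather the only subtle point — is the brute-force enumeration over clean-point guesses and verifying that the resulting combinatorial factor $\binom{m'}{m}$ telescopes into precisely the stated bound $\left(\frac{10e\log(1/\beta)}{1-\gamma}\right)^m$; everything else is a routine Chernoff-plus-union-bound argument. No privacy or metric-specific structure is needed here, so the proof works verbatim for the general metric $\kappa$.
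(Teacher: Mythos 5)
Your proposal is correct and follows essentially the same route as the paper's proof: a Chernoff bound to guarantee at least $m$ clean samples among the $m'$ draws, brute-force enumeration over all size-$m$ subsets with the list decoder run on each, a union bound over the two failure events, and a $\binom{m'}{m}\leq (em'/m)^m$ estimate to obtain the stated list size. The only shared (and minor) caveat, which you already flag, is that the arithmetic $2+8\log(1/\beta)/m\leq 10\log(1/\beta)$ implicitly uses $\log(1/\beta)\geq 1$, exactly as in the paper's Stirling step.
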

\begin{proof}
    Let $f \in \cF$ and $h$ be an arbitrary distribution. Consider $g=(1-\gamma)f+\gamma h$, where $\gamma \in (0,1)$. Upon drawing $N$ samples from $g$, let $X_N$ be the random variable indicating the number of samples coming from $f$. Note that $X_N$ has binomial distribution. Setting $N\geq\frac{2m+8\log(1/\beta)}{1-\gamma}$, results in $\expect{X_N}/2 \geq m$, $\expect{X_N} \geq 8 \log(1/\beta)$. Using the Chernoff bound (Theorem 4.5(2) of \citep{mitzenmacher2005probability}), we have $\prob{X_N \leq m} \leq \prob{X_N \leq \expect{X_N}/2} \leq \exp(-\expect{X_N} / 8) \leq \beta$.
    Meaning that after drawing $N\geq\frac{2m+8\log(1/\beta)}{1-\gamma}$ samples from $g$, with probability at least $1-\beta$, we will have $m$ samples coming from $f$, which is enough for list decoding $\cF$. Let $S_1,\cdots,S_{K}$ be all subsets of these $N$ samples with size $m$, where $K = {N\choose m}$. Now, run the list decoding algorithm on these subsets and let $\cL_i$ be the outputted list. Let $\cL = \cup_{i\in[K]} \cL_i$. Using the fact that among $N$ samples there are $m$ samples from $f$, we get that there exists $\hat{f} \in \cL$ such that with probability at least $1-\beta$, we have $\kappa(f,\hat{f})\leq \alpha$. Note that using Stirling's approximation we have $|\cL|=L\cdot{N\choose m} \leq L\cdot(\frac{2em+8e\log(1/\beta)}{(1-\gamma)m})^m \leq L\cdot(\frac{10e\log(1/\beta)}{1-\gamma})^m$.
    Finally, using a union bound we will get that $\cF$ is $\left(L(\frac{10e\log(1/\beta)}{1-\gamma})^m,\frac{2m+8\log(1/\beta)}{1-\gamma},\alpha,2\beta,\gamma\right)$-list-decodable w.r.t.~$\kappa$.
\end{proof}

\begin{proof}[Proof of Theorem~\ref{thm:list-mix}]
 Consider the algorithm $\cA$ to be an $(L,m,\alpha,\beta,1-\alpha/k)$-list-decodable learner for $\cF$.
Fix any distribution $g=\sum_{i\in[s]}w_if_i \in \densemix{(k,\frac{\alpha}{k})}(\cF)$, where $s\leq k$. Note that for any $i \in [s]$, $g$  can be written as $g = w_if_i + (1-w_i)\sum_{j\neq i}\frac{w_jf_j}{1-w_j}=w_if_i + (1-w_i)h$. Knowing that $w_i\geq \frac{\alpha}{k}$, allows us to apply the algorithm $\cA$ on the $m$ samples generated from $g$ and get a list of distributions $\cL_s$ such that with probability at least $1-\beta$ we have $\min_{f'\in \cL_s} \dtv(f',f_i)\leq \alpha$. Let $\hat{\Delta}_s$ be an $\frac{\alpha}{s}$-cover for $\Delta_s$ from Proposition~\ref{lemma:weightcover}. Now construct a set $\cJ=\bigcup_{s\in[k]}\{\sum_{i\in[s]}\hat{w_i}\hat{f_i}: \hat{w}\in \hat{\Delta}_s, \hat{f_i}\in \cL_s\}$. Note that with probability at least $1-k\beta$ we have $\min_{g'\in \cJ}\kappa_{mix}(g,g')\leq \max\{\alpha, \alpha\} = \alpha$. Moreover, we have $|\cJ|=\sum_{s\in[k]}|\cL_s|^s |\hat{\Delta}_s| = \sum_{s\in[k]} L^s(\frac{s}{\alpha})^s \leq (\frac{kL}{\alpha})^{k+1}$. Thus, the class of $\densemix{(k,\frac{\alpha}{k})}(\cF)$ is $\left((\frac{kL}{\alpha})^{k+1}, m,\alpha,k\beta,0\right)$-list-decodable w.r.t.~$\kappa_{mix}$. Using Lemma~\ref{lemma:list-decode-inc}, we get that $\densemix{(k,\frac{\alpha}{k})}(\cF)$ is $\left((\frac{kL}{\alpha})^{k+1}\cdot (\frac{10e\log(1/k\beta)}{1-\gamma})^m, \frac{2m+8\log(1/k\beta)}{1-\gamma},\alpha,2k\beta,\gamma\right)$-list-decodable w.r.t.~$\kappa_{mix}$.
\end{proof}
\begin{section}{Proof of the Main Reduction}

In this section, we prove our main reduction which states that if a class of distributions admits a locally small cover and is list decodable, then its \emph{mixture class} can be learned privately. Let us first state the theorem again.

\mainreduction*

The high level idea of the proof is based on a connection to the private common member selection problem.
To see this, assume class $\cF$ is list decodable and admits a locally small cover. Then we show that given some samples from any $f^*\in$ $\kmix(\cF)$, one can generate $T$ lists of dense mixtures (i.e., member of $(k,\frac{\alpha}{k})$-dense-mix($\cF$)) 
such that they have a common member (w.r.t.~$\kappa_{mix}$). However, there could be multiple common members w.r.t.~$\kappa_{mix}$ that are far from $f^*$ w.r.t.~$\dtv$. Therefore, we filter out all distributions that have a bad $\dtv$ distance with the original distribution $f^*$ (This can be done using Minimum Distance Estimator). Afterwards, we are guaranteed that all common members are also good w.r.t.~$\dtv$. Also, note that changing a data point can change at most one of the lists. Therefore, by using the private common member selector, we can choose a common member from these filtered lists while maintaining privacy. The formal proof is given below.

\begin{proof}
 Let $\alpha',\beta'\in(0,1)$. If $\cF$ is $\left(L, m,\alpha',\beta',0\right)$-list-decodable w.r.t.~$\dtv$ then using Lemma~\ref{lemma:list-decode-inc}, we get that $\cF$ is $(L_1, m_1,\alpha',2\beta',1-\alpha/k)$-list-decodable w.r.t.~$\dtv$, where $L_1=L\cdot (\frac{10ek\log(1/\beta')}{\alpha'})^m$, and $m_1 = \frac{2mk+8k\log(1/\beta')}{\alpha'}$.
   
   Let $f^*\in$ $\kmix(\cF)$ be the true distribution. Using Lemma~\ref{lemma:dense}, we can write $f^*=\gamma h + (1-~\gamma)f $, where $f \in (k,\frac{\alpha'}{k})$-dense-mix($\cF$), and $\gamma \in [0,\alpha')$. 
   Let $L_2=(\frac{kL_1}{\alpha'})^{k+1}\cdot (\frac{10e\log(1/2k\beta')}{1-\alpha'})^{m_1}$, and $m_2 =~ \frac{2m_1+8\log(1/2k\beta')}{1-\alpha'}$. 
   By Theorem~\ref{thm:list-mix}, we know that $(k,\frac{\alpha'}{k})$-dense-mix($\cF$) is $\left(L_2, m_2,\alpha',4k\beta',\alpha'\right)$-list-decodable w.r.t.~$\kappa_{mix}$. 

    Let $t_1 = k!(tk/\alpha')^k$, and $T=O\left(\frac{\log(1/\delta)+\log(t_1L_2/\beta')}{\eps}\right)$. For $i\in [T]$, draw $T$ disjoint datasets each of size $m_3=O\left(\frac{\log(L_2)+\log(1/\beta')}{\alpha'^2}\right) + m_2$. For each dataset, run the list decoding algorithm using $m_2$ samples from that dataset, and let $\cL_i$ denote the outputted list.

    As mentioned above, we know that $\densemix{(k, \frac{\alpha'}{k})}(\cF)$ is $(L_2, m_2, \alpha', 4k\beta', \alpha')$-list-decodable w.r.t.~$\kappamix$. Thus,
    for each $i\in [T]$, with probability at least $1-4k\beta'$, we have $\kappa_{mix}(f,\cL_i) \leq \alpha'$. 
    To convert this bound back to total variation distance, we use Lemma~\ref{lemma:kmix-to-tv} to get that $\dtv(f,\cL_i)\leq 3\alpha'/2$.

    Note that the size of each $\cL_i$ is at most $L_2$. By making use of the Minimum Distance Estimator (Theorem~\ref{thm:mde}), we can use the other $O(\frac{\log(L_2)+\log(1/\beta')}{\alpha'^2})$ samples from each datasets to find $\hat{f}_i \in \cL_i$ such that with probability at least $1-\beta'$ we have $\dtv(\hat{f}_i,f)\leq 3\cdot\dtv(f,\cL_i) + \alpha' \leq 11\alpha'/2$. 

    We then proceed with a filtering step. For each $i \in [T]$, we define $\cL_i' = \{ f' \in \cL_i\,:\, \dtv(f', \hat{f}_i) < 11\alpha'/2 \}$ to be the elements in $\cL_i$ that are close to $\hat{f}_i$.
 
    Using a union bound and a triangle inequality, we get that with probability at least $1-(4k+1)\beta'$ we have $\max_{f' \in \cL_i'} \dtv(f,f')\leq 11\alpha'$.
    
    Applying a union bound over all $T$ datasets, we conclude that with probability at least $1-(4k+1)\beta' T$, $f$ is a $(\alpha',1)$-common-member for $D=\{\cL_1',\cdots,\cL_T'\}$ w.r.t.~$\kappa_{mix}$, and $\max_{f'\in \cL_i'}\dtv(f,f')\leq 11\alpha'$ for all $i\in[T]$.

   The fact that $\cF$ admits a $(t,2\alpha')$-locally small $\alpha'$-cover, along with Lemma~\ref{lemma:densemix-cover}, implies that there exists an $(t_1,2\alpha')$-locally small $\alpha'$-cover $\cC$ for $\densemix{(k,\frac{\alpha'}{k})}(\cF)$.

     Note that $|\cL_i'| \leq |\cL_i| \leq L_2$. Now, we run the private common member selector (Algorithm~\ref{alg:pcms}) on $(D,\kappa_{mix}, \cC)$ to obtain  $\hat{f}$. Using Theorem~\ref{thm:pcms} and a union bound, we get that with probability at least $1-((4k+1)T+1)\beta'$, $\hat{f}$ is a $(2\alpha',0.9)$-common-member w.r.t.~$\kappa_{mix}$. 
    Therefore, Lemma~\ref{lemma:kmix-to-tv} implies that $\hat{f}$ is a $(3\alpha',0.9)$-common-member w.r.t.~$\dtv$. 

    Using the fact that for every $i\in [T]$, $\max_{f'\in \cL_i'} \dtv(f',f) \leq 11\alpha'$, we get that $\dtv(\hat{f},f)\leq 14 \alpha'$. Finally, triangle inequality implies that $\dtv(\hat{f},f^*) \leq \dtv(\hat{f},f)+\dtv(f,f^*) \leq 15 \alpha'$ with probability at least $1-((4k+1)T+1)\beta' \geq 1-6kT\beta'$. The total sample complexity is:
    \begin{align*}
       & T\cdot m_3 =  O\left(\frac{\log(1/\delta)+\log(t_1L_2/\beta')}{\eps}\cdot \left(\frac{\log(L_2)+\log(1/\beta')}{\alpha'^2}+m_2\right)\right) \\    &=\tilde{O}\left(\left(\frac{\log(1/\delta)+k\log(tL)}{\eps}+\frac{mk+k\log(1/\beta')}{\alpha'\eps}\right)\cdot \left(\frac{k\log(L)}{\alpha'^2}+\frac{mk+k\log(1/\beta')}{\alpha'^3}\right)\right).
    \end{align*}
    Finally, we substitute $\alpha'=\alpha/15$ and $\beta'= \frac{\beta\eps}{12ek\log(6ekt_1L_2/\eps\beta\delta)} < 1$.
    Applying Claim~\ref{claim: failure-prob} with $c_1=~6k/\eps, c_2=~t_1L_2/\delta$, we get that $\dtv(\hat{f},f^*)\leq \alpha$ with probability at least $1-\beta$.
    Moreover the order of the sample complexity remains unchanged.
  The given approach is private since changing one data point alters only one of the $T$ datasets, and therefore affects at most one 
  $\cL_i'$. The privacy guarantee then follows from Theorem~\ref{thm:pcms}.
\end{proof}

\end{section}

\begin{section}{Privately Learning GMMs}
Prior to this work, there was no a finite sample complexity upper bound for privately learning mixtures of unbounded Gaussians.
As an application of our general framework, we study the problem of privately learning GMMs.
This section provides the necessary ingredients for using our proposed framework. First, we show that the class of Gaussians is list decodable using sample compression schemes \citep{ashtiani2018nearly}. Second, we show that this class admits a locally small cover.  Putting together, we prove the first sample complexity upper bound for privately learning GMMs.

\subsection{List-decoding Gaussians using compression}
\label{subsec:compression}
As we stated in Remark~\ref{remark:gmm-pac-instead}, it is possible to use a PAC learner as a naive list decoding algorithm that outputs \textit{a single} Gaussian. However, doing so, results in a poor sample complexity for privately learning GMMs.
In this section, we provide a carefully designed list decoding algorithm for the class of Gaussians which results in a much better sample complexity for privately learning GMMs due to its mild dependence on the accuracy parameter ($1/\alpha$).

We reduce the problem of list decoding a class of Gaussians to the problem of compressing this class. Next, we use the result of \citet{ashtiani2018nearly} that the class of Gaussians is compressible. Finally, we conclude that this class is list decodable.  

\begin{remark}
    Our reduction is fairly general and works for any class of distributions. Informally, if a class of distributions is compressible, then it is list decodable. However, for the sake of simplicity we state this result only for the class of Gaussians in Lemma~\ref{lemma:list-decode-gaussians}.
\end{remark}

The method of sample compression schemes introduced by \citet{ashtiani2018nearly} is used for distribution learning. At a high level, given $m$ samples from a distribution and $t$ additional bits, if there exists an algorithm (i.e., decoder) that can approximately recover the original distribution given a small subset of (i.e., $\tau$ many) samples and $t$ bits, then one can create a list of all possible combinations of choosing $\tau$ samples and $t$ bits. Then one can pick a ``good'' distribution from the generated list of candidates using Minimum Distance Estimator \citep{yatracos1985rates}. 
Below, we provide the formal definition of sample compression schemes for learning distributions.

\begin{definition}[Compression schemes for distributions \citep{ashtiani2018nearly}] \label{def:compression}
     For a set of functions $\tau,m,t:(0,1)\rightarrow \bZ_{\geq 0}$, the class of distributions $\cF$ is said to be $(\tau,t,m)$-compressible if there exists a decoder $\cD$ such that for any $f \in \cF$ and any $\alpha \in (0,1)$, after receiving an i.i.d sample $\cS$ of size $m(\alpha)\log(1/\beta)$ from $f$, with probability at least $1-\beta$, there exists a sequence L of at most $\tau(\alpha)$ members of $\cS$ and a sequence $B$ of at most $t(\alpha)$ bits, such that $\dtv(D(L,B),f)\leq \alpha$.
\end{definition}

The following result states that the class of Gaussians is compressible.
\begin{lemma}[Lemma 4.1 of \cite{ashtiani2018nearly}] \label{lemma:gaussian-compression}
    Let $\alpha\in(0,1)$. The class $\cG$ is $(O(d),\tilde{O}(d^2\log(1/\alpha)),O(d))$-compressible.
\end{lemma}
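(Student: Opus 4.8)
This is Lemma~4.1 of \cite{ashtiani2018nearly}; the plan is to recall the explicit compression scheme for Gaussians on which it rests. Throughout, the encoder may use knowledge of the target $f=\cN(\mu,\Sigma)$ when selecting the subsample $L$ and the bit string $B$, since Definition~\ref{def:compression} only requires the \emph{existence} of a good pair $(L,B)$. The first reduction is to the centered case: given i.i.d.\ samples $X_1,\dots,X_m$, the scaled pairwise differences $(X_{2i-1}-X_{2i})/\sqrt 2$ are i.i.d.\ $\cN(0,\Sigma)$, so a scheme that recovers $\Sigma$ from samples of $\cN(0,\Sigma)$ can be run on a subset of the original sample, and the mean can then be encoded on top of it.

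\textbf{Encoding the covariance (coarse, then fine).} In the coarse stage, the encoder uses $O(d)$ centered samples and $\emph{zero}$ bits: by covariance concentration for Gaussians, the empirical second moment of $O(d)$ samples lies within a constant multiplicative factor of $\Sigma$ with at least constant probability, so among $O(\log(1/\beta))$ disjoint blocks of $O(d)$ samples at least one block gives such a matrix $\Sigma_0$ with probability $\ge 1-\beta/3$, and the encoder (knowing $\Sigma$) points $L$ at a good block. Passing to whitened coordinates $y\mapsto \Sigma_0^{-1/2}y$, the residual covariance $\Sigma'=\Sigma_0^{-1/2}\Sigma\Sigma_0^{-1/2}$ has all eigenvalues in $[\tfrac12,2]$. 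In the fine stage the encoder simply writes down the symmetric matrix $\widehat{\Sigma'}$ obtained by rounding each entry of $\Sigma'$ to additive precision $\Theta(\alpha/d)$; since there are $O(d^2)$ entries, each of magnitude $O(1)$, this costs $O(d^2\log(d/\alpha))=\widetilde{O}(d^2\log(1/\alpha))$ bits and no further samples, and one checks $\|(\widehat{\Sigma'})^{-1/2}\Sigma'(\widehat{\Sigma'})^{-1/2}-I\|_F=O(\alpha)$.

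\textbf{Encoding the mean and combining.} After whitening by the recoverable matrix $\Sigma_0^{1/2}\widehat{\Sigma'}^{1/2}$, it remains to pin down $\mu$ to error $O(\alpha)$ in the corresponding Mahalanobis norm. Here one uses that $\mu$ lies in the affine span of $d+1$ of the samples (which are in general position almost surely), so $\mu$ is an exact affine combination of $O(d)$ chosen samples; rounding the $O(d)$ coefficients to precision $\mathrm{poly}(\alpha/d)$ (their magnitudes are $\mathrm{poly}(d)$ with probability $\ge 1-\beta/3$) costs a further $\widetilde{O}(d)$ bits and reuses $O(d)$ samples. Finally, the standard estimate $\dtv(\cN(\mu_1,\Sigma_1),\cN(\mu_2,\Sigma_2))=O\!\left(\|\Sigma_1^{-1/2}(\mu_1-\mu_2)\|_2+\|\Sigma_1^{-1/2}\Sigma_2\Sigma_1^{-1/2}-I\|_F\right)$ turns the covariance and mean guarantees into $\dtv(\cD(L,B),f)=O(\alpha)$; rescaling $\alpha$ by a constant and taking a union bound over the three failure events yields the claim with $\tau=O(d)$, $t=\widetilde{O}(d^2\log(1/\alpha))$, and base sample size $m=O(d)$ (so $m(\alpha)\log(1/\beta)=O(d\log(1/\beta))$).

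\textbf{Main obstacle.} The two delicate points are: (i) extracting a constant-factor covariance approximation from only $O(d)$ samples placed in $L$ --- a genuine random-matrix concentration statement, together with the block trick that converts ``constant success probability'' into an existential guarantee with probability $1-\beta$; and (ii) converting the resulting Frobenius/spectral closeness of covariances, combined with the affine-span argument for the mean, into a bound on $\dtv$, which is where the $\widetilde{O}(\cdot)$ factors are incurred. The remaining steps are bookkeeping of the number of bits and samples used.
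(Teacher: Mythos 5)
The paper offers no proof of this lemma: it is imported verbatim as Lemma~4.1 of \cite{ashtiani2018nearly}, so there is no in-paper argument to compare against. Your sketch is a faithful reconstruction of the compression scheme underlying that result, with the same accounting ($O(d)$ retained samples, $\tilde O(d^2\log(1/\alpha))$ bits, base sample size $O(d)$ amplified by $\log(1/\beta)$ via disjoint blocks); the one architectural difference is that \cite{ashtiani2018nearly} encodes the columns of $\Sigma^{1/2}$ as rounded linear combinations of the retained samples, whereas you encode a coarse empirical covariance with zero bits and then write the whitened residual directly into the bit string, which is an equally valid route to the same bit count.

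Two details in your mean-encoding step need tightening. First, if you round the $d+1$ affine coefficients independently, their sum drifts from $1$ by up to $(d+1)\eta$, and the resulting error term $(\sum_i \hat c_i - 1)\mu$ scales with $\|\hat\Sigma^{-1/2}\mu\|$, which is unbounded for unbounded Gaussians; you must force the rounded coefficients to sum to exactly one (round $d$ of them and solve for the last), after which the error is $\sum_i(\hat c_i - c_i)(X_i-\mu)$ and is controlled by $\mathrm{poly}(d)\,\eta$ in Mahalanobis norm. Second, the claim that the barycentric coordinates of $\mu$ with respect to a single random Gaussian simplex are $\mathrm{poly}(d)$ holds only with constant probability (the inverse simplex volume has heavy tails), so this stage needs the same block-amplification device you used for the coarse covariance, with the encoder selecting a well-conditioned block; as written the ``$\geq 1-\beta/3$'' is asserted rather than derived. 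Neither point changes the stated parameters, and since the lemma is cited as external, these are refinements of a correct outline rather than gaps in the paper's own argument.
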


Finally, the next lemma 
uses the above result to show that Gaussians are list decodable. 

\begin{lemma}\label{lemma:list-decode-gaussians}
    For any $\alpha,\beta \in (0,1)$, the class $\cG$ is $(L,m,\alpha,\beta,0)$-list-decodable w.r.t.~$\dtv$ for $L=~(d\log(1/\beta))^{\tilde{O}(d^2\log(1/\alpha))}$, and $m=O\left(d\log(1/\beta)\right)$.
\end{lemma}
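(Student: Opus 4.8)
The plan is to convert the compression scheme for Gaussians (Lemma~\ref{lemma:gaussian-compression}) into a list decoder in the zero-contamination regime. First I would recall the mechanics of Definition~\ref{def:compression}: a $(\tau, t, m)$-compressible class has a decoder $\cD$ such that, given $m(\alpha)\log(1/\beta)$ i.i.d.\ samples from $f \in \cG$, with probability at least $1-\beta$ there is a subsequence $L$ of at most $\tau(\alpha)$ of those samples and a bit-string $B$ of length at most $t(\alpha)$ with $\dtv(\cD(L,B), f) \leq \alpha$. Plugging in Lemma~\ref{lemma:gaussian-compression}, we have $\tau(\alpha) = O(d)$, $t(\alpha) = \widetilde{O}(d^2 \log(1/\alpha))$, and $m(\alpha) = O(d)$, so the sample size is $N = O(d\log(1/\beta))$, matching the claimed $m$.

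Next I would construct the list: given the $N$ samples $S$, enumerate \emph{all} pairs $(L, B)$ where $L$ is an ordered subsequence of $S$ of length at most $\tau(\alpha)$ and $B \in \{0,1\}^{\leq t(\alpha)}$, and output $\hat{\cG} = \{\cD(L,B) : (L,B)\}$. Since the compression guarantee asserts that one such pair recovers $f$ up to $\alpha$ in TV, we immediately get $\min_{\hat f \in \hat{\cG}} \dtv(\hat f, f) \leq \alpha$ with probability at least $1-\beta$, which is exactly the list-decoding requirement with $\gamma = 0$. The only remaining task is to bound $|\hat{\cG}|$. The number of ordered subsequences of length at most $\tau$ from $N$ points is at most $\sum_{j=0}^{\tau} N^j \leq (N+1)^{\tau}$, and the number of bit-strings of length at most $t$ is at most $2^{t+1}$. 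Hence
\[
|\hat{\cG}| \leq (N+1)^{\tau} \cdot 2^{t+1} = \left(O(d\log(1/\beta))\right)^{O(d)} \cdot 2^{\widetilde{O}(d^2\log(1/\alpha))}.
\]
Absorbing the $2^{\widetilde{O}(d^2\log(1/\alpha))}$ factor, which is $(2)^{\widetilde{O}(d^2\log(1/\alpha))}$, into a term of the form $(d\log(1/\beta))^{\widetilde{O}(d^2\log(1/\alpha))}$ (using that $d\log(1/\beta) \geq 2$), gives $L = (d\log(1/\beta))^{\widetilde{O}(d^2\log(1/\alpha))}$ as claimed.

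I do not expect a serious obstacle here; the lemma is essentially a bookkeeping repackaging of the compression scheme. The one point requiring mild care is the exponent arithmetic when folding $2^{\widetilde{O}(d^2\log(1/\alpha))}$ and $(N+1)^{O(d)}$ into a single clean expression $(d\log(1/\beta))^{\widetilde{O}(d^2\log(1/\alpha))}$ — one should check that $\widetilde{O}(d^2\log(1/\alpha)) \geq O(d)$ in the relevant regime so that the subsequence-count factor is dominated, and that the base $d\log(1/\beta)$ is at least $2$ so that $2^{\widetilde{O}(d^2\log(1/\alpha))} \le (d\log(1/\beta))^{\widetilde{O}(d^2\log(1/\alpha))}$. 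A second minor subtlety is that the compression decoder's sample requirement is stated as $m(\alpha)\log(1/\beta)$ samples for failure probability $\beta$, so one should match this with the list-decoder's failure parameter $\beta$ directly — no union bound is needed since we take the list over all $(L,B)$ simultaneously and the guaranteed good pair is among them.
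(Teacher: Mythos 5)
Your proposal is correct and matches the paper's proof essentially verbatim: the paper likewise enumerates all pairs $(L,B)$ of small subsets of the sample and bit-strings, applies the compression decoder to each, and bounds the list size by $O((s\log(1/\beta))^{t+\tau})$ before plugging in Lemma~\ref{lemma:gaussian-compression}. Your slightly more careful $(N+1)^{\tau}\cdot 2^{t+1}$ accounting and the absorption of the $2^{t}$ factor into the base $d\log(1/\beta)$ is just a finer-grained version of the same bookkeeping.
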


An important feature of the above lemma (that is inherited from the compression result) is the mild dependence of $m$ and $L$ on $1/\alpha$: $m$ does not depend on $1/\alpha$ and $L$ has a mild polynomial dependence on it.
\begin{proof}
    We will prove that if a class of distributions $\cF$ is $(\tau,t,s)$-compressible, then it is \\$\left(O((s \log(1/\beta))^{t+\tau}),s\log(1/\beta),\alpha,\beta,0\right)$-list-decodable w.r.t.~$\dtv$. 
    Let $f\in \cF$, and $\cS$ be a set of $s\log(1/\beta)$ i.i.d.~samples drawn from $f$.
    Now using the decoder $D$, construct the list $\cL =\{D(L,B): L\subseteq \cS , |L|\leq \tau, B\in \{0,1\}^{t}\}$.
    Using the Definition~\ref{def:compression}, with probability at least $1-\beta$ there exists $\hat{f}\in \cL$ such that $\dtv(\hat{f},f)\leq \alpha$. Note that $|\cL|= O((s\log(1/\beta))^{(t+\tau)})$. Putting together with Lemma~\ref{lemma:gaussian-compression} concludes the result. 
\end{proof}

\subsection{A locally small cover for Gaussians}
In this section, we construct a locally small cover for the class of Gaussians using the techniques of \citet{aden2021sample}.
Explicitly constructing a locally small cover for Gaussians is a challenging task due to the complex geometry of this class.
Previously, \citet{aden2021sample} constructed locally small covers for location Gaussians (Gaussians with $I_d$ covariance matrix) and scale Gaussians (zero mean Gaussians). One might think by taking the product of these two covers we can simply construct a locally small cover for unbounded Gaussians (see Definition~\ref{def:general_gaussian}). However, the product of these two covers is not a valid cover for unbounded Gaussians as the $\dtv$ between two Gaussians can be very large even if their means are close to each other in euclidean distance.

To resolve this issue, we take a small cover for a small $\dtv$-ball of location gaussians around $N(0,I_d)$ and scale it using a small cover for the $\dtv$-ball of scale gaussians around $N(0,I_d)$. Showing that this is a valid and small cover for a small $\dtv$-ball of unbounded Gaussians around $N(0,I_d)$ is a delicate matter. To argue that this is a valid cover, we use the upper bound of \citet{devroye2018total} for the $\dtv$ distance between high-dimensional Gaussians. Showing that this is a small cover requires the recently proved lower bound on the $\dtv$ distance of high-dimensional Gaussians \citep{arbas2023polynomial}.

Once we created a small cover for a $\dtv$-ball of unbounded Gaussians around $N(0,I_d)$, we can transform it to a small cover for a  $\dtv$-ball of unbounded Gaussians around an arbitrary $N(\mu,\Sigma)$. Finally, we use Lemma~\ref{lemma:extend-cover} to show that there exists a global locally small cover for the whole space of Guassians.

\begin{definition} [Location Gaussians]
    Let $\cG^L=\{\cN(\mu,I_d): \mu \in \mathbb{R}^d\}$ be the class of d-dimensional location Gaussians.
\end{definition}
\begin{definition} [Scale Gaussians]
    Let $\cG^S=\{\cN(0,\Sigma): \Sigma \in S^d\}$ be the class of d-dimensional scale Gaussians.
\end{definition}

The next two lemmas propose small covers for location and scale Gaussians near $N(0,I)$. Recall that $B_{\dtv}(r,f,\cF)$ stands for the $\dtv$-ball of radius $r$ around $f$ (See Definition~\ref{def:ball}).
\begin{lemma} [Lemma 30 of \cite{aden2021sample}]\label{lemma:loc-cover}
     For any $0 < \alpha < \gamma <c$, where $c$ is a universal constant,  there exists an $\alpha$-cover $C^L$ for the set of distributions $B_{\dtv}(\gamma,N(0,I),\cG^L)$ of size $(\dfrac{\gamma}{\alpha})^{O(d)}$ w.r.t.~$\dtv$.
\end{lemma}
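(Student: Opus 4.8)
The plan is to construct the cover for the $\dtv$-ball of location Gaussians around $\cN(0,I)$ by transporting the problem to a Euclidean covering problem for the mean vector $\mu$, and then invoking standard volumetric bounds on the size of a Euclidean $\epsilon$-net. The key observation is that for location Gaussians $\cN(\mu_1,I)$ and $\cN(\mu_2,I)$, the total variation distance is a monotone function of $\|\mu_1-\mu_2\|_2$: indeed, $\dtv(\cN(\mu_1,I),\cN(\mu_2,I)) = 2\Phi(\|\mu_1-\mu_2\|_2/2) - 1$, where $\Phi$ is the standard normal CDF. Hence the map $\mu \mapsto \cN(\mu,I)$ is a bi-Lipschitz homeomorphism between a Euclidean ball (in $\mu$-space) and the $\dtv$-ball $B_{\dtv}(\gamma,\cN(0,I),\cG^L)$, with Lipschitz constants that are universal constants as long as $\gamma$ is bounded away from $1$ (say $\gamma < c$ for a suitable universal constant $c$).

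First I would make the two-sided quantitative comparison precise: there exist universal constants $c_1, c_2 > 0$ such that for all $\mu_1, \mu_2$ with $\|\mu_1 - \mu_2\|_2$ small enough, $c_1 \|\mu_1-\mu_2\|_2 \le \dtv(\cN(\mu_1,I),\cN(\mu_2,I)) \le c_2 \|\mu_1-\mu_2\|_2$. The upper bound is immediate from Pinsker's inequality (or a direct computation), and the lower bound follows because near $0$ the function $r \mapsto 2\Phi(r/2)-1$ has derivative bounded below by a positive constant on any bounded interval. Consequently, $B_{\dtv}(\gamma,\cN(0,I),\cG^L)$ corresponds, under $\mu \mapsto \cN(\mu,I)$, to a set of means contained in the Euclidean ball of radius $R := \gamma/c_1$ (and containing the ball of radius $\gamma/c_2$). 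An $\alpha/c_2$-cover of this Euclidean ball in $\ell_2$ pulls back to an $\alpha$-cover of the $\dtv$-ball: if $\|\mu - \hat\mu\|_2 \le \alpha/c_2$ then $\dtv(\cN(\mu,I),\cN(\hat\mu,I)) \le \alpha$.

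Next I would invoke the standard fact that a Euclidean ball of radius $R$ in $\bR^d$ admits an $\epsilon$-net of size at most $(3R/\epsilon)^d$. Taking $\epsilon = \alpha/c_2$ and $R = \gamma/c_1$ gives a cover of size $(3 c_2 \gamma / (c_1 \alpha))^d = (\gamma/\alpha)^{O(d)}$, exactly matching the claimed bound. One technical point: the net is built for the full ball of radius $R$, which may slightly overshoot the exact image of the $\dtv$-ball, but this is harmless — we simply intersect the net with $\cG^L$ restricted to means producing $\dtv$-distance $\le \gamma$ from $\cN(0,I)$, or equivalently keep only the net points whose corresponding Gaussian lies in the ball; either way the covering property is preserved for every point of $B_{\dtv}(\gamma,\cN(0,I),\cG^L)$ and the size only decreases.

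**The main obstacle** is not conceptual but one of bookkeeping the constants: one must verify that the bi-Lipschitz constants $c_1, c_2$ relating $\dtv$ to $\|\cdot\|_2$ can be taken to be absolute, which forces the restriction $\gamma < c$ for a universal $c < 1$ (the comparison degrades as $\gamma \to 1$ since $\dtv$ saturates at $1$ while $\|\mu\|_2 \to \infty$). Since the statement already assumes $\gamma < c$, this is exactly the regime we need, and the argument goes through cleanly. The rest is a routine application of the volumetric net bound, which is why this lemma is attributed to prior work rather than proved from scratch here.
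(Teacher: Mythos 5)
This lemma is imported verbatim from \citet{aden2021sample} (their Lemma 30) and is not proved in this paper; your argument is correct and is essentially the standard proof from the cited source: reduce $\dtv$ between location Gaussians to the exact one-dimensional formula $2\Phi(\|\mu_1-\mu_2\|_2/2)-1$, note that this is bi-Lipschitz equivalent to $\|\mu_1-\mu_2\|_2$ with universal constants once $\gamma<c$, and apply the volumetric bound $(3R/\epsilon)^d$ for Euclidean nets. One small repair: discarding net points that land outside the ball does not by itself preserve coverage, so instead take a maximal $(\alpha/c_2)$-separated subset of the Euclidean ball of radius $r_\gamma$ itself (an internal net of the same size), which makes $C^L\subseteq B_{\dtv}(\gamma,N(0,I),\cG^L)$ as the cover definition requires.
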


\begin{lemma}[Corollary 33 of \cite{aden2021sample}]\label{lemma:scale-cover}
     For any $0 < \alpha < \gamma <c$, and $\Sigma \in S^d$, where $c$ is a universal constant, there exists an $\alpha$-cover $C^S$ for the set of distributions $B_{\dtv}(\gamma,N(0,\Sigma),\cG^S)$ of size $(\dfrac{\gamma}{\alpha})^{O(d^2)}$ w.r.t.~$\dtv$.
\end{lemma}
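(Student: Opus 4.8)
The plan is to reduce to the case $\Sigma = I_d$ by an affine change of variables, and then to build a Frobenius net on the set of covariance matrices lying in the TV-ball, using quantitative two-sided bounds on the TV distance between centered Gaussians to translate between $\dtv$ and Frobenius distance. \textbf{Step 1 (reduction to $I_d$).} The linear map $x \mapsto \Sigma^{-1/2}x$ is a bijection of $\cG^S$ onto itself, it sends $N(0,M)$ to $N(0,\Sigma^{-1/2}M\Sigma^{-1/2})$, and it preserves $\dtv$ (TV is invariant under invertible affine maps). So it suffices to produce an $\alpha$-cover $C_0$ of $B_{\dtv}(\gamma, N(0,I_d), \cG^S)$ of size $(\gamma/\alpha)^{O(d^2)}$; the desired cover is then $\{N(0, \Sigma^{1/2} M \Sigma^{1/2}) : N(0,M) \in C_0\}$, which has the same cardinality.

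\textbf{Step 2 (the TV-ball lies in a small Frobenius ball).} By the lower bound of \citet{arbas2023polynomial} on the TV distance between centered Gaussians, there is a universal constant $c_1 > 0$ with $\dtv(N(0,M), N(0,I_d)) \ge c_1 \min\{1, \|M - I_d\|_F\}$. Choosing the universal constant $c$ in the statement small enough, any $N(0,M) \in B_{\dtv}(\gamma, N(0,I_d), \cG^S)$ satisfies $\|M - I_d\|_F \le \gamma/c_1 =: R \le 1/2$; in particular every such $M$ is positive definite with all eigenvalues in $[1/2, 3/2]$. \textbf{Step 3 (a Frobenius net is a TV net).} By the upper bound of \citet{devroye2018total}, $\dtv(N(0,M), N(0,M')) \le c_2 \|M^{-1/2} M' M^{-1/2} - I_d\|_F$ for positive definite $M, M'$ and a universal constant $c_2$. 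When $M$ has eigenvalues in $[1/2, 3/2]$ as in Step 2, $\|M^{-1/2}M'M^{-1/2} - I_d\|_F = \|M^{-1/2}(M'-M)M^{-1/2}\|_F \le \|M^{-1/2}\|_2^2 \|M'-M\|_F \le 2\|M'-M\|_F$, so $\|M - M'\|_F \le \alpha/(2c_2)$ implies $\dtv(N(0,M), N(0,M')) \le \alpha$.

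\textbf{Step 4 (counting).} The symmetric $d \times d$ matrices form a Euclidean space of dimension $d(d+1)/2 = O(d^2)$ under $\|\cdot\|_F$. By the standard volumetric bound, the Frobenius ball of radius $R$ around $I_d$ admits an $\big(\alpha/(2c_2)\big)$-net that is itself a subset of that ball (greedy construction), of size $(1 + O(R/\alpha))^{O(d^2)} = (O(\gamma/\alpha))^{O(d^2)} = (\gamma/\alpha)^{O(d^2)}$; by Step 2 all its points are positive definite, so the corresponding centered Gaussians form a valid subset $C_0$ of $\cG^S$. For any $N(0,M)$ in the TV-ball, Step 2 puts $M$ in the radius-$R$ Frobenius ball, hence some net point $M'$ has $\|M - M'\|_F \le \alpha/(2c_2)$, and Step 3 gives $\dtv(N(0,M),N(0,M')) \le \alpha$; so $C_0$ is an $\alpha$-cover of $B_{\dtv}(\gamma, N(0,I_d), \cG^S)$. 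Combining with Step 1 completes the proof.

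\textbf{Main obstacle.} The delicate part is Steps 2–3: we need \emph{both} a sharp lower bound on $\dtv$ in terms of a Frobenius-type quantity (to confine the TV-ball — this is exactly where the recent bound of \citet{arbas2023polynomial} is essential, since the classical estimates are not tight enough for ill-conditioned covariances) \emph{and} a matching upper bound, and we must reconcile the fact that the natural parametrization uses the relative deviation $M^{-1/2}M'M^{-1/2} - I_d$ rather than the additive difference $M' - M$. Restricting first to a regime where $M$ is well-conditioned (eigenvalues bounded away from $0$ and $\infty$, forced by taking $\gamma$ below a small universal constant) is what makes these two quantities comparable up to constants; the general $N(0,\Sigma)$ case and arbitrarily large $\Sigma$ are then handled for free by the affine invariance in Step 1.
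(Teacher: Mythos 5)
The paper does not prove this lemma; it is imported verbatim as Corollary~33 of \citet{aden2021sample}. Your argument is correct and follows essentially the same route as that source: reduce to $\Sigma=I_d$ by affine invariance, use the two-sided TV--Frobenius comparison (the \citet{arbas2023polynomial} lower bound to trap the TV-ball inside a Frobenius ball of radius $O(\gamma)$ where all covariances are well-conditioned, and the upper bound to show a Frobenius $\Theta(\alpha)$-net is a TV $\alpha$-net), then a volumetric count in the $d(d+1)/2$-dimensional space of symmetric matrices. The only point worth flagging is definitional: your net points lie in $\cG^S$ and in the Frobenius ball, but not necessarily inside $B_{\dtv}(\gamma,N(0,I),\cG^S)$ itself, so under a strict reading of Definition~2 (which asks $C_\alpha\subseteq\cF$ for the covered space $\cF$) one should either project the net into the ball or note, as the downstream uses in Lemma~\ref{lemma:cover-tv-ball} and Lemma~\ref{lemma:extend-cover} confirm, that membership in $\cG^S$ suffices.
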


The next theorem provides upper and lower bounds for $\dtv$ between two Gaussians, which will be used for constructing a small cover for unbounded Guassians near $N(0,I)$. 
\begin{theorem}[Theorem 1.8 of \cite{arbas2023polynomial}]  \label{thm:tv-bounds}
    Let $N(\mu_1,\Sigma_1),N(\mu_2,\Sigma_2)\in \cG$, and $\Delta = \max\{||\Sigma_1^{-1/2}\Sigma_2\Sigma_1^{-1/2}-I_d||_F, ||\Sigma_1^{-1/2}(\mu_1-\mu_2)||_2\}$. Then
    \begin{align*}
    \dtv(N(\mu_1,\Sigma_1),N(\mu_2,\Sigma_2)) \leq \frac{1}{\sqrt{2}}\Delta.
    \end{align*}
    Also, if $\dtv(N(\mu_1,\Sigma_1),N(\mu_2,\Sigma_2))\leq \frac{1}{600}$, we have:
    \begin{align*}
        \frac{1}{200}\Delta \leq \dtv(N(\mu_1,\Sigma_1),N(\mu_2,\Sigma_2)).
    \end{align*}
\end{theorem}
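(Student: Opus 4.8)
The plan is to prove the two bounds separately, after a common normalization. Since total variation distance is invariant under any invertible affine map, applying $x \mapsto \Sigma_1^{-1/2}(x-\mu_1)$ reduces the problem to comparing $N(0, I_d)$ with $N(m, S)$, where $m = \Sigma_1^{-1/2}(\mu_2 - \mu_1)$ and $S = \Sigma_1^{-1/2}\Sigma_2\Sigma_1^{-1/2}$, so that $\Delta = \max\{\|S - I_d\|_F, \|m\|_2\}$; a further orthogonal change of basis lets us assume $S$ is diagonal with positive eigenvalues $\lambda_1, \dots, \lambda_d$ (this preserves both $\|\cdot\|_F$ and $\|\cdot\|_2$), and then $m$ is an arbitrary vector.

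For the upper bound I would use Pinsker's inequality, $\dtv(P, Q) \le \sqrt{\tfrac12\, \mathrm{KL}(P \,\|\, Q)}$, applied to $P = N(m,S)$ and $Q = N(0,I_d)$ (using symmetry of $\dtv$), together with the closed form
\[
  \mathrm{KL}\big(N(m,S)\,\big\|\,N(0,I_d)\big) = \frac12\Big(\sum_{i=1}^d (\lambda_i - 1 - \ln\lambda_i) + \|m\|_2^2\Big).
\]
If $\Delta$ is below a small absolute constant, every $\lambda_i$ lies in a fixed neighbourhood of $1$, so $\lambda_i - 1 - \ln\lambda_i \le (\lambda_i - 1)^2$, whence $\mathrm{KL} \le \tfrac12(\|S - I_d\|_F^2 + \|m\|_2^2) \le \Delta^2$ and therefore $\dtv \le \Delta/\sqrt{2}$. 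If $\Delta$ is not small the claim is essentially trivial since $\dtv \le 1$; the only delicate regime is the intermediate one, where I would split $\dtv(N(0,I_d),N(m,S)) \le \dtv(N(0,I_d),N(0,S)) + \dtv(N(0,S),N(m,S))$ and bound the covariance term using the sharper one-dimensional estimates of \citet{devroye2018total} instead of naive Pinsker, since $\lambda_i - 1 - \ln\lambda_i$ ceases to be controlled by $(\lambda_i-1)^2$ once some $\lambda_i$ approaches $0$.

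For the lower bound, assume $\dtv \le 1/600$. The first observation is that the already-established upper bound forces $\Delta$ to be small, which in particular pins the spectrum of $S$ into a bounded, well-conditioned range --- this a priori control is what makes the rest work. For the mean component, project onto the unit vector $u = m/\|m\|_2$: the pushforwards are $N(0,1)$ and $N(\|m\|_2, u^\top S u)$ with $u^\top S u$ close to $1$, so the data-processing inequality and the standard one-dimensional Gaussian bound give $\dtv \gtrsim \|m\|_2$. For the covariance component --- the genuinely hard part, and the content of the lower bound of \citet{arbas2023polynomial} referred to in the text --- I would test with the quadratic statistic $\phi(x) = x^\top (S - I_d) x$: under $N(0,I_d)$ its mean is $\Tr(S - I_d)$, under $N(0,S)$ its mean is $\Tr\big((S-I_d)S\big)$, and the gap $\Tr\big((S-I_d)(S-I_d)\big) = \|S - I_d\|_F^2$ is a clean signal, while the variances of $\phi$ under both laws are $O(\|S - I_d\|_F^2)$ precisely because the spectrum of $S$ is well-conditioned; a Chebyshev/Paley--Zygmund argument then produces an event on which the two densities are separated, yielding $\dtv \gtrsim \|S - I_d\|_F$. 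Combining the two components and tracking constants gives $\dtv \ge \Delta/200$.

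The main obstacle is the covariance lower bound: converting the mean-gap $\|S - I_d\|_F^2$ of the quadratic form into an honest total-variation gap of order $\|S - I_d\|_F$ requires simultaneous control of the fourth moments (hence variances) of $\phi$ under both Gaussians, which is only available because smallness of $\dtv$ has already forced $S$ close to $I_d$ in operator norm --- so the upper bound must be invoked before the lower bound. When $S - I_d$ has a dominant eigenvalue the statistic $\phi$ degenerates to a single scaled $\chi^2_1$ and one compares $\chi^2_1$ against $\lambda\chi^2_1$ directly; in general a case split (or a unified second-moment argument) is needed, and squeezing the resulting anti-concentration estimate down to the constant $1/200$ is where most of the bookkeeping goes.
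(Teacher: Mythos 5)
First, a point of comparison: the paper does not prove this statement at all --- it is imported verbatim as Theorem 1.8 of \citet{arbas2023polynomial} (with the upper bound essentially going back to \citet{devroye2018total}) --- so there is no in-paper argument to measure your reconstruction against. Judged on its own terms, your normalization to $N(0,I_d)$ versus $N(m,S)$ and the Pinsker/KL route for the upper bound are the standard opening moves, and the mean part of the lower bound (project onto $m/\|m\|_2$, invoke the one-dimensional formula) is fine. Two caveats on the upper bound, though: the inequality $\lambda - 1 - \ln\lambda \le (\lambda-1)^2$ only holds for $\lambda$ bounded away from $0$ (roughly $\lambda \gtrsim 0.32$), and the regime you call ``delicate'' is more than delicate --- in $d=1$ with $\Sigma_1 = 1$ and $\Sigma_2 = \sigma^2 \to 0$ one has $\Delta \to 1$ while $\dtv \to 1 > 1/\sqrt{2}$, so the displayed upper bound cannot hold verbatim for all $\Delta$; the cited source carries the appropriate truncation, and the present paper only ever applies the bound with $\Delta$ tiny, where Pinsker does suffice.

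The genuine gaps are in the lower bound. \emph{(i)} Your a priori control is backwards: $\dtv \le \Delta/\sqrt{2}$ yields $\Delta \ge \sqrt{2}\,\dtv$, a \emph{lower} bound on $\Delta$; it does not force $\Delta$ to be small when $\dtv$ is small. The needed spectral control (all $\lambda_i$ bounded away from $0$ and $\infty$ once $\dtv \le 1/600$) must instead come from data processing applied to the one-dimensional eigendirection marginals. \emph{(ii)} More seriously, the quadratic-statistic argument cannot deliver the linear rate. With $A = S - I_d$, the mean gap of $\phi(x) = x^\top A x$ between the two laws is $\|A\|_F^2$, while its standard deviation under each law is $\Theta(\|A\|_F)$; when $\|A\|_F$ is small the gap is far below the noise level, so Chebyshev/Paley--Zygmund (or the standard truncated first-moment argument) yields only $\dtv \gtrsim \|A\|_F^2$, the wrong power. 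The known proof instead diagonalizes $S$, establishes the one-dimensional linear bound $\dtv(N(0,1),N(0,\lambda_i)) \ge c\,|\lambda_i - 1|$ by direct computation, and aggregates via the tensorization $1 - \dtv(\otimes_i P_i, \otimes_i Q_i) \le \prod_i \bigl(1 - \dtv(P_i,Q_i)\bigr)$, which gives $\dtv \ge 1 - e^{-c\sum_i |\lambda_i - 1|} \ge 1 - e^{-c\|S-I_d\|_F} \gtrsim \|S - I_d\|_F$ in the relevant regime. Without replacing your second-moment step by something of this kind, the covariance half of the lower bound does not close.
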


The next lemma proposes a small cover for the unbounded Guassians near $N(\mu,\Sigma)$ for any given $\mu$ and $\Sigma$. To do so, we combine the small covers from Lemma~\ref{lemma:loc-cover} and Lemma~\ref{lemma:scale-cover} in a way that it approximates any Gaussian near $N(0,I)$.
\begin{lemma}\label{lemma:cover-tv-ball}
    Let $0 < \alpha < \gamma \leq \frac{1}{600}$, $\mu \in \bR^d$, and $\Sigma \in S^d$. There exists an $\alpha$-cover for $B_{\dtv}(\gamma, N(\mu,\Sigma), \cG)$ of size at most $(\dfrac{\gamma}{\alpha})^{O(d^2)}$.
\end{lemma}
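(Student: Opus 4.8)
The plan is to reduce to the case $\mu = 0$, $\Sigma = I_d$ by an affine change of variables, then build the cover near $N(0, I_d)$ by combining the location cover of Lemma~\ref{lemma:loc-cover} and the scale cover of Lemma~\ref{lemma:scale-cover}, using the two-sided bound of Theorem~\ref{thm:tv-bounds} to control $\dtv$. First I would observe that the map $x \mapsto \Sigma^{-1/2}(x - \mu)$ pushes $N(\mu', \Sigma')$ to $N(\Sigma^{-1/2}(\mu' - \mu), \Sigma^{-1/2}\Sigma'\Sigma^{-1/2})$ and preserves $\dtv$; hence it suffices to produce an $\alpha$-cover of size $(\gamma/\alpha)^{O(d^2)}$ for $B_{\dtv}(\gamma, N(0, I_d), \cG)$, and then push it forward. (One should note the cover elements may fall outside the ball, but that is fine — Definition of $\alpha$-cover only requires them in $\cF = \cG$.)

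Next, for the ball around $N(0, I_d)$: take any $N(\mu', \Sigma') \in B_{\dtv}(\gamma, N(0, I_d), \cG)$. Since $\gamma \le \tfrac{1}{600}$, Theorem~\ref{thm:tv-bounds} gives $\|\Sigma' - I_d\|_F \le 200\gamma$ and $\|\mu'\|_2 \le \|\Sigma'^{-1/2}\mu'\|_2 \cdot \|\Sigma'^{1/2}\|_2 \le 200\gamma \cdot (1 + O(\gamma))$, so both the mean and covariance are within $O(\gamma)$ of those of $N(0,I_d)$; in particular $N(0, \Sigma') \in B_{\dtv}(O(\gamma), N(0, I_d), \cG^S)$ and $N(\Sigma'^{-1/2}\mu', I_d)$ is a location Gaussian within $O(\gamma)$ of $N(0, I_d)$. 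Apply Lemma~\ref{lemma:scale-cover} (with $\Sigma = I_d$) to get an $\alpha'$-cover $C^S$ of the scale ball of size $(\gamma/\alpha')^{O(d^2)}$, and Lemma~\ref{lemma:loc-cover} to get an $\alpha'$-cover $C^L$ of the location ball of size $(\gamma/\alpha')^{O(d)}$, for a suitable $\alpha' = \Theta(\alpha)$. Form the candidate set
\[
C = \bigl\{\, N\bigl(\widehat{\Sigma}^{1/2}\widehat{\mu},\ \widehat{\Sigma}\bigr) \;:\; N(\widehat\mu, I_d) \in C^L,\ N(0, \widehat\Sigma) \in C^S \,\bigr\},
\]
whose size is $(\gamma/\alpha)^{O(d^2)}$ as required.

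It then remains to verify that $C$ is an $\alpha$-cover. Given $N(\mu', \Sigma')$ in the ball, pick $N(\widehat\mu, I_d) \in C^L$ with $\dtv(N(\widehat\mu, I_d), N(\Sigma'^{-1/2}\mu', I_d)) \le \alpha'$ and $N(0, \widehat\Sigma) \in C^S$ with $\dtv(N(0, \widehat\Sigma), N(0, \Sigma')) \le \alpha'$; I claim $N(\widehat\Sigma^{1/2}\widehat\mu, \widehat\Sigma)$ is within $\alpha$ of $N(\mu', \Sigma')$. To see this, apply the upper bound of Theorem~\ref{thm:tv-bounds} with $\Sigma_1 = \Sigma'$: the relevant quantity is $\max\{\|\Sigma'^{-1/2}\widehat\Sigma\Sigma'^{-1/2} - I_d\|_F,\ \|\Sigma'^{-1/2}(\mu' - \widehat\Sigma^{1/2}\widehat\mu)\|_2\}$. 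The first term is controlled by $\dtv(N(0,\widehat\Sigma), N(0,\Sigma')) \le \alpha'$ via the lower bound of Theorem~\ref{thm:tv-bounds} (noting $\alpha' \le \tfrac{1}{600}$), after absorbing the $\|\Sigma'^{\pm 1/2}\|_2 = 1 + O(\gamma)$ factors; the second term is bounded by writing $\mu' - \widehat\Sigma^{1/2}\widehat\mu = \Sigma'^{1/2}(\Sigma'^{-1/2}\mu' - \widehat\mu) + (\Sigma'^{1/2} - \widehat\Sigma^{1/2})\widehat\mu$ and bounding the first piece by $\dtv(N(\Sigma'^{-1/2}\mu', I_d), N(\widehat\mu, I_d)) = \tfrac12\|\Sigma'^{-1/2}\mu' - \widehat\mu\|_2 \le \alpha'$ and the second piece using $\|\Sigma'^{1/2} - \widehat\Sigma^{1/2}\|_2 = O(\|\Sigma' - \widehat\Sigma\|_F) = O(\gamma)$ together with $\|\widehat\mu\|_2 = O(\gamma)$, so this cross term is $O(\gamma^2)$ and negligible next to $\alpha'$. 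Choosing the constant in $\alpha' = \Theta(\alpha)$ appropriately makes the total at most $\alpha$.

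The main obstacle is the last verification: because we scale a location cover by a covariance, the errors in the mean and the covariance interact, and one must carefully track the perturbation of $\Sigma'^{1/2}$ (Lipschitzness of the matrix square root on a neighborhood of $I_d$) and confirm the cross terms are genuinely lower order than $\alpha$. The scaling trick — covering $N(\Sigma'^{-1/2}\mu', I_d)$ rather than $N(\mu', I_d)$ directly — is exactly what is needed to keep the location piece inside a \emph{small} $\dtv$-ball around $N(0, I_d)$, which is where Lemma~\ref{lemma:loc-cover} applies and gives the small $(\gamma/\alpha)^{O(d)}$ count; covering $\mu'$ directly would require a location cover over an unbounded region and fail.
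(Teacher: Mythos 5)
Your construction is essentially the paper's: reduce to $N(0,I_d)$ by an affine map, take the product of the location cover (Lemma~\ref{lemma:loc-cover}) and the scale cover (Lemma~\ref{lemma:scale-cover}), form candidates of the shape $N(\widehat\Sigma^{1/2}\widehat\mu,\widehat\Sigma)$, and use the two-sided bounds of Theorem~\ref{thm:tv-bounds} to verify coverage. The size bound and the key idea of covering the \emph{whitened} mean rather than $\mu'$ itself are exactly right.

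There is, however, one genuine quantitative gap in your verification. You bound the cross term $\|(\Sigma'^{1/2}-\widehat\Sigma^{1/2})\widehat\mu\|_2$ by $O(\|\Sigma'-\widehat\Sigma\|_F)\cdot O(\gamma)=O(\gamma^2)$ and declare it ``negligible next to $\alpha'$.'' This does not follow: the lemma allows $\alpha$ arbitrarily smaller than $\gamma$ (e.g.\ $\gamma=1/600$, $\alpha=\gamma^{10}$), so $\gamma^2$ can vastly exceed $\alpha'$. The fix is to bound $\|\Sigma'-\widehat\Sigma\|_F$ not by going through $I_d$ (which only gives $O(\gamma)$) but by using that $N(0,\Sigma')$ and $N(0,\widehat\Sigma)$ are $\alpha'$-close in $\dtv$: the lower bound of Theorem~\ref{thm:tv-bounds} gives $\|\widehat\Sigma^{-1/2}\Sigma'\widehat\Sigma^{-1/2}-I\|_F\le 200\alpha'$, hence $\|\Sigma'-\widehat\Sigma\|_F=O(\alpha')$ and the cross term is $O(\alpha'\gamma)$, which genuinely is lower order. (A second, minor slip: $\dtv(N(\mu_1,I),N(\mu_2,I))$ is not equal to $\tfrac12\|\mu_1-\mu_2\|_2$; you should again invoke the lower bound of Theorem~\ref{thm:tv-bounds}, losing a constant factor $200$, which is why the paper takes the location/scale covers at resolution $\tfrac{\sqrt2}{200}\alpha$.) It is worth noting that the paper avoids the cross term altogether by whitening the target mean with the \emph{cover} covariance $\widetilde\Sigma$ rather than the target's own $\Sigma'$: it covers $\widetilde\Sigma^{-1/2}\widehat\mu$ by some $\widetilde\mu\in C^L$ and outputs $N(\widetilde\Sigma^{1/2}\widetilde\mu,\widetilde\Sigma)$, so the final application of the upper bound with $\Sigma_1=\widetilde\Sigma$ requires exactly $\|\widetilde\Sigma^{-1/2}(\widehat\mu-\widetilde\Sigma^{1/2}\widetilde\mu)\|_2=\|\widetilde\Sigma^{-1/2}\widehat\mu-\widetilde\mu\|_2$, which the location cover controls directly with no matrix–square-root Lipschitz estimate needed.
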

\begin{proof}
First, we construct a cover for $B_{\dtv}(\gamma, N(0,I), \cG)$, then we extend it to a cover for $B_{\dtv}(\gamma, N(\mu,\Sigma), \cG)$ using a linear transformation.

Let $\gamma \in(\alpha,\frac{1}{600})$ and consider the ball $\cB:=B_{\dtv}(\gamma,N(0,I),\cG)$.
      Let $\gamma_1 = 200\gamma$ and $C^L$ be an $\frac{\sqrt{2}}{200}\alpha$-cover for $B_{\dtv}(\gamma_1,N(0,I),\cG^L)$ from Lemma~\ref{lemma:loc-cover}. Also, let $\gamma_2 =200\gamma$ and $C^S$ be an $\frac{\sqrt{2}}{200}\alpha$-cover for $B_{\dtv}(\gamma_2,N(0,I),\cG^S)$ from Lemma~\ref{lemma:scale-cover}.
     
     We claim that $C^B=\{N(\Sigma^{1/2}\mu,\Sigma): N(\mu,I) \in C^L, N(0,\Sigma)\in C^S\}$ is an $\alpha$-cover for $\cB$.
     Let $N(\hat{\mu},\hat{\Sigma}) \in \cB$ so that
     $\dtv(N(0,I),N(\hat{\mu},\hat{\Sigma})) \leq \gamma \leq \frac{1}{600}$. Applying the lower bound of Theorem~\ref{thm:tv-bounds} with $\Sigma_1=I, \Sigma_2=\hat{\Sigma}, \mu_1=0,\mu_2=\hat{\mu}$ gives that
    \begin{align}
       & ||\hat{\Sigma}-I||_F \leq 200 \dtv(N(0,I),N(\hat{\mu},\hat{\Sigma})) \leq 200\gamma \quad \text{and} \\
       & ||\hat{\mu}||_2 \leq 200 \dtv(N(0,I),N(\hat{\mu},\hat{\Sigma})) \leq 200\gamma. \label{proof:norm2}
    \end{align}
    Moreover, applying Theorem~\ref{thm:tv-bounds} with $\Sigma_1=\hat{\Sigma}, \Sigma_2=I, \mu_1=\hat{\mu},\mu_2=0$ gives that
    \begin{align*}
       & ||\hat{\Sigma}^{-1/2}\hat{\mu}||_2 \leq 200 \dtv(N(0,I),N(\hat{\mu},\hat{\Sigma})) \leq 200\gamma.
    \end{align*}
    Next, applying the upper bound of Theorem~\ref{thm:tv-bounds} with $\mu_1=\mu_2=0,\Sigma_1=I,\Sigma_2=\hat{\Sigma}$ gives $\dtv(N(0,I),N(0,\hat{\Sigma})) \leq \frac{1}{\sqrt{2}} ||\hat{\Sigma}-I||_F \leq \frac{200}{\sqrt{2}}\gamma < 200\gamma = \gamma_2$.
     Therefore $N(0,\hat{\Sigma})\in B_{\dtv}(\gamma_2,N(0,1),\cG^S)$. Recall that $C^S$ is an $\frac{\sqrt{2}}{200}\alpha$-cover for $B_{\dtv}(\gamma_2,N(0,I),\cG^S)$. Thus, there exists $N(0,\tilde{\Sigma}) \in C^S$ such that $\dtv(N(0,\hat{\Sigma}),N(0,\tilde{\Sigma}))\leq \frac{\sqrt{2}}{200}\alpha$. Using the 
     lower bound of Theorem \ref{thm:tv-bounds} with $\Sigma_1=\tilde{\Sigma}, \Sigma_2=\hat{\Sigma}, \mu_1=0,\mu_2=0$ results in 
     \begin{equation}
     \label{proof:forb-norm}
     ||\tilde{\Sigma}^{-1/2}\hat{\Sigma}\tilde{\Sigma}^{-1/2}-I||_2 \leq ||\tilde{\Sigma}^{-1/2}\hat{\Sigma}\tilde{\Sigma}^{-1/2}-I||_F\leq 200\dtv(N(0,\hat{\Sigma}),N(0,\tilde{\Sigma})) \leq \sqrt{2} \alpha.
     \end{equation}
     Therefore $||\tilde{\Sigma}^{-1/2}\hat{\Sigma}\tilde{\Sigma}^{-1/2}||_2 = ||(\tilde{\Sigma}^{-1/2}\hat{\Sigma}^{1/2})(\tilde{\Sigma}^{-1/2}\hat{\Sigma}^{1/2})^T||_2 \leq 1+\sqrt{2}\alpha$. Finally, we get $||\tilde{\Sigma}^{-1/2}\hat{\Sigma}^{1/2}||_2 \leq \sqrt{1+\sqrt{2}\alpha} \leq \sqrt{2}$.

     Now let $\hat{v}=\hat{\Sigma}^{-1/2}\hat{\mu}$. From \ref{proof:norm2} we know that $||\hat{v}||_2\leq 200\gamma$.
     Therefore we have $||\tilde{\Sigma}^{-1/2}\hat{\mu}||_2 = ||\tilde{\Sigma}^{-1/2}\hat{\Sigma}^{1/2}\hat{v}||_2 \leq ||\tilde{\Sigma}^{-1/2}\hat{\Sigma}^{1/2}||_2 ||\hat{v}||_2 \leq 200\gamma\sqrt{2}$.
    Using the upper bound of Theorem~\ref{thm:tv-bounds} with $\Sigma_1=I$, $\Sigma_2=I$, $\mu_1=\tilde{\Sigma}^{-1/2}\hat{\mu}$, and $\mu_2=0$
    gives $\dtv(N(\tilde{\Sigma}^{-1/2}\hat{\mu},I),N(0,I))\leq \frac{1}{\sqrt{2}}||\tilde{\Sigma}^{-1/2}\hat{\mu}||_2 \leq \frac{200\gamma\sqrt{2}}{\sqrt{2}}= \gamma_1$.
    Thus $N(\tilde{\Sigma}^{-1/2}\hat{\mu},I) \in B_{\dtv}(\gamma_1,N(0,1),\cG^L)$. Recall that $C^L$ is an $\frac{\sqrt{2}}{200}\alpha$-cover for $B_{\dtv}(\gamma_1,N(0,I),\cG^L)$. Therefore, there exists $N(\tilde{\mu},I)\in C^L$ such that $\dtv(N(\tilde{\mu},I),N(\tilde{\Sigma}^{-1/2}\hat{\mu},I))\leq \frac{\sqrt{2}}{200}\alpha$. Using the
    lower bound of Theorem~\ref{thm:tv-bounds} with 
    $\Sigma_1=I, \Sigma_2=I, \mu_1=\tilde{\Sigma}^{-1/2}\hat{\mu},\mu_2=\tilde{\mu}$, we can write $||\tilde{\Sigma}^{-1/2}\hat{\mu}-\tilde{\mu}||_2 = ||\tilde{\Sigma}^{-1/2}(\tilde{\Sigma}^{1/2}\tilde{\mu}-\hat{\mu})||_2 \leq \sqrt{2} \alpha$. Putting together with  $\ref{proof:forb-norm}$, 
    we can use the upper bound in Theorem~\ref{thm:tv-bounds} with $\Sigma_1=\tilde{\Sigma}$, $\Sigma_2=\hat{\Sigma}$, $\mu_1=\tilde{\Sigma}^{1/2}\tilde{\mu}$, and $\mu_2=\hat{\mu}$ to get that $\dtv(N(\tilde{\Sigma}^{1/2}\tilde{\mu},\tilde{\Sigma}),N(\hat{\mu},\hat{\Sigma})) \leq \alpha$. Note that $N(\tilde{\Sigma}^{1/2}\tilde{\mu},\tilde{\Sigma}) \in C^B$. Hence, $C^B$ is an $\alpha$-cover for $\cB$. Moreover, we have $|C^B|=|C^L||C^S|\leq (\frac{\gamma_1}{\alpha})^{O(d)}(\frac{\gamma_2}{\alpha})^{O(d^2)} = (\frac{\gamma}{\alpha})^{O(d^2)}$.

    Now, we propose a cover for $B_{\dtv}(\gamma,N(\mu,\Sigma),\cG)$. Note that using Lemma~\ref{lemma:tv-transorm}, for any $\Sigma, \Sigma_1, \Sigma_2 \in S^d$, we have:
    \begin{align*}
\dtv(N(0,\Sigma^{1/2}\Sigma_1\Sigma^{1/2}),N(0,\Sigma^{1/2}\Sigma_2\Sigma^{1/2})) = \dtv(N(0,\Sigma_1),N(0,\Sigma_2)).
    \end{align*}
    Note that equality holds since the mapping is bijection. Next, create the set $C^{B_\Sigma}=\{N(\mu,\Sigma^{1/2}\Sigma'\Sigma^{1/2}):N(\mu,\Sigma')\in C^B\}$. Note that $C^{B_\Sigma}$ is an $\alpha$-cover for $B_{\dtv}(\gamma,N(0,\Sigma),\cG)$ since the $\dtv$ distance between every two distributions in $C^B$ remains same (i.e. does not increase) after this transformation. Finally, the set $C^{B_{\mu,\Sigma}}=\{N(\mu+\mu',\Sigma'): N(\mu',\Sigma')\in C^{B_\Sigma}\}$ is the desired $\alpha$-cover for $B_{\dtv}(\gamma,N(\mu,\Sigma),\cG)$ since it is the shifted version of $C^{B_\Sigma}$. Also, we have $|C^{B_{\mu,\Sigma}}|=|C^{B_\Sigma}|=|C^B|\leq (\frac{\gamma}{\alpha})^{O(d^2)}$.
\end{proof}

The next lemma provides a useful tool for creating (global) locally small covers. Informally, given a class of distributions, if there exists a small cover for a small ball around each distribution in the class, then there exists a (global) locally small cover for the whole class. 
\begin{lemma}[Lemma 29 of \cite{aden2021sample}] \label{lemma:extend-cover}
    Consider a class of distributions $\cF$ and let $0< \alpha < \gamma < 1$. If for every $f \in \cF$ the $B_{\dtv}(\gamma,f,\cF)$ has an $\alpha$-cover of size no more than t, then there exists a $(t,\gamma)$-locally small $2\alpha$-cover for $\cF$ w.r.t.~$\dtv$.
\end{lemma}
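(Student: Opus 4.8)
The plan is to produce the cover as a maximal $2\alpha$-packing of $\cF$ and then verify the two required properties. First I would consider the collection of all $S \subseteq \cF$ such that $\dtv(f,f') > 2\alpha$ for every pair of distinct $f,f' \in S$, partially ordered by inclusion. The union of a chain of such sets is again such a set (any two of its points already lie in a single member of the chain), so Zorn's lemma — equivalently, a transfinite greedy construction — yields a maximal element, which I will call $C$. The claim is that $C$ is a $(t,\gamma)$-locally small $2\alpha$-cover for $\cF$ w.r.t.~$\dtv$.

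To see that $C$ is a $2\alpha$-cover, I would argue by maximality: if some $f \in \cF$ had $\dtv(f,c) > 2\alpha$ for every $c \in C$, then $C \cup \{f\}$ would be a strictly larger set in the collection, contradicting maximality of $C$. Hence $\dtv(f,C) \le 2\alpha$ for all $f \in \cF$, so $C$ is a $2\alpha$-cover (and $C \subseteq \cF$ by construction).

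For local smallness, I would fix an arbitrary $f \in \cF$ and bound $|B_{\dtv}(\gamma,f,C)|$. By hypothesis, the ball $B_{\dtv}(\gamma,f,\cF)$ admits an $\alpha$-cover $C_f$ with $|C_f| \le t$. Every $c \in B_{\dtv}(\gamma,f,C)$ lies in $B_{\dtv}(\gamma,f,\cF)$, so I can choose $\phi(c) \in C_f$ with $\dtv(c,\phi(c)) \le \alpha$. If $\phi(c_1) = \phi(c_2)$ for distinct $c_1,c_2 \in C$, then the triangle inequality gives $\dtv(c_1,c_2) \le \dtv(c_1,\phi(c_1)) + \dtv(\phi(c_2),c_2) \le 2\alpha$, contradicting the fact that distinct elements of $C$ are more than $2\alpha$ apart. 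Thus $\phi$ restricted to $B_{\dtv}(\gamma,f,C)$ is injective, whence $|B_{\dtv}(\gamma,f,C)| \le |C_f| \le t$. Taking the supremum over $f \in \cF$ then gives $\sup_{f \in \cF}|B_{\dtv}(\gamma,f,C)| \le t$, which is exactly the locally small property.

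I expect the only real subtlety to be bookkeeping of constants: the packing radius must be taken to be exactly $2\alpha$ and the separation must be strict, so that the output cover has radius $2\alpha$ while two distinct packing points sharing an $\alpha$-cover representative are forced to lie within $2\alpha$ of one another — precisely the situation excluded by strict $2\alpha$-separation. The one place where set-theoretic care is needed is the existence of a maximal packing when $\cF$ is infinite and possibly non-separable, which Zorn's lemma handles cleanly; everything else is a single application of the triangle inequality.
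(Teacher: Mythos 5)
Your proof is correct and is essentially the argument the paper relies on (it cites Lemma 29 of \cite{aden2021sample} and notes that the proof is non-constructive via Zorn's lemma): a maximal strictly-$2\alpha$-separated packing is a $2\alpha$-cover by maximality, and local smallness follows because two distinct packing points cannot share an $\alpha$-cover representative of $B_{\dtv}(\gamma,f,\cF)$, giving the injection into a set of size at most $t$. No gaps.
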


The proof of the above lemma is non-constructive and uses Zorn's lemma. An immediate implication of Lemma~\ref{lemma:cover-tv-ball} and Lemma~\ref{lemma:extend-cover} is the existence of a locally small cover for unbounded Gaussians.

\begin{lemma}
    \label{cor:gaussians_locally_small_cover}
    For any $0< \alpha < \gamma \leq \frac{1}{600}$, there exists a $((2\gamma/\alpha)^{O(d^2)}, \gamma)$-locally small $\alpha$-cover for the class $\cG$ w.r.t.~$\dtv$.
\end{lemma}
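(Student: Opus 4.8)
The plan is to combine Lemma~\ref{lemma:cover-tv-ball} with Lemma~\ref{lemma:extend-cover} in the obvious way, being careful only about the bookkeeping of the ball radii. First I would fix $0 < \alpha < \gamma \le \tfrac{1}{600}$ and set $\alpha' = \alpha/2$ so that $\alpha' < \gamma \le \tfrac{1}{600}$. Then, for every $f = N(\mu,\Sigma) \in \cG$, Lemma~\ref{lemma:cover-tv-ball} (applied with the pair $(\alpha',\gamma)$ and this $\mu,\Sigma$) gives an $\alpha'$-cover for the ball $B_{\dtv}(\gamma, N(\mu,\Sigma), \cG)$ of size at most $(\gamma/\alpha')^{O(d^2)} = (2\gamma/\alpha)^{O(d^2)}$. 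Crucially this bound $t \coloneqq (2\gamma/\alpha)^{O(d^2)}$ is uniform over $f \in \cG$, which is exactly the hypothesis that Lemma~\ref{lemma:extend-cover} requires.

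Next I would invoke Lemma~\ref{lemma:extend-cover} with this class $\cF = \cG$, the radii $\alpha' < \gamma < 1$, and the uniform cover-size bound $t$. Its conclusion is the existence of a $(t,\gamma)$-locally small $2\alpha'$-cover for $\cG$ with respect to $\dtv$. Since $2\alpha' = \alpha$, this is precisely a $\big((2\gamma/\alpha)^{O(d^2)},\,\gamma\big)$-locally small $\alpha$-cover for $\cG$, which is the claimed statement. The argument is therefore a two-line composition once the radius substitution $\alpha' = \alpha/2$ is in place.

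There is essentially no hard step here: the substantive work was already done in Lemma~\ref{lemma:cover-tv-ball} (constructing the local cover from the location and scale covers and verifying validity via the $\dtv$ bounds of Theorem~\ref{thm:tv-bounds}) and in the cited Lemma~\ref{lemma:extend-cover} of~\cite{aden2021sample} (the Zorn's-lemma packing argument that globalizes local covers). The only point needing mild care is the factor-of-two loss in accuracy introduced by Lemma~\ref{lemma:extend-cover}, which one must anticipate by starting from an $\alpha/2$-cover; and checking that $\alpha/2 < \gamma$ still holds, which is immediate from $\alpha < \gamma$. One should also note that the $O(d^2)$ in the exponent absorbs the additive constant from writing $\gamma/\alpha' = 2\gamma/\alpha$, so the stated size bound is unchanged in form.
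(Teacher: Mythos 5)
Your proposal is correct and matches the paper's intended argument exactly: the paper states this lemma as an immediate consequence of Lemma~\ref{lemma:cover-tv-ball} and Lemma~\ref{lemma:extend-cover}, and the factor of $2$ in the bound $(2\gamma/\alpha)^{O(d^2)}$ comes precisely from the substitution $\alpha' = \alpha/2$ that you identify to absorb the factor-of-two accuracy loss in Lemma~\ref{lemma:extend-cover}. Your bookkeeping of the radii and the uniformity of the ball-cover size over all $N(\mu,\Sigma)$ is exactly the right (and only) thing to check.
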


\subsection{Learning GMMs}
In this subsection, we prove the first sample complexity upper bound for privately learning mixtures of unbounded Gaussians. We use the fact that Gaussians are \textit{(1)} list decodable, and \textit{(2)} admit a locally small cover. Putting this together with our main theorem for learning mixture distributions, we conclude that GMMs are privately learnable.

\maingmm*

\begin{proof}
    According to Lemma~\ref{lemma:list-decode-gaussians}, the class $\cG$ is $(L,m,\alpha,\beta,0)$-list-decodable w.r.t.~$\dtv$ for $L = (d\log(1/\beta))^{\tilde{O}(d^2\log(1/\alpha))}$ and $m=O\left(d\log(1/\beta)\right)$. Moreover, Lemma~\ref{cor:gaussians_locally_small_cover} implies that $\cG$ admits a $(t,2\alpha)$-locally small $\alpha$-cover, where $t=4^{O(d^2)}$. Using Theorem~\ref{thm:main}, we get that $\kmix(\cG)$ is $(\eps,\delta)$-DP $(\alpha,\beta)$-PAC-learnable using
    \begin{align*}
      &
\tilde{O}\left(\left(\frac{\log(1/\delta)+k\log(tL)}{\eps}+\frac{mk+k\log(1/\beta)}{\alpha\eps}\right)\cdot \left(\frac{k\log(L)}{\alpha^2}+\frac{mk+k\log(1/\beta)}{\alpha^3}\right)\right)\\
    &
=\tilde{O}\left(\left(\frac{\log(1/\delta)+kd^2}{\eps}+\frac{kd\log(1/\beta)}{\alpha\eps}\right)\cdot \left(\frac{kd^2}{\alpha^2}+\frac{kd\log(1/\beta)}{\alpha^3}\right)\right)\\
   &
 =\tilde{O}\left(\frac{kd^2\log(1/\delta)+k^2d^4}{\alpha^2\eps}+ \frac{kd\log(1/\delta)\log(1/\beta)+k^2d^3\log(1/\beta)}{\alpha^3\eps}+\frac{k^2d^2\log^2(1/\beta)}{\alpha^4\eps}\right) 
    \end{align*}
    samples.
\end{proof} 

\end{section}

\begin{section}{More on Related Work}
In this section, we go through some related works on private learning of Gaussians and their mixtures.
    \subsection{Privately learning Gaussians}
        \citet{karwa2018finite} proposed a sample and time efficient algorithm for estimating the mean and variance of bounded univariate Gaussians under pure DP. They also provide a method for the unbounded setting under approximate DP.
        Later, other methods for learning high-dimensional Gaussians with respect to total variation distance were introduced by \citet{kamath2019privately,biswas2020coinpress}. 
        They assume the parameters of the Gaussians have bounded ranges. As a result, the sample complexity of their method depends on the condition number of the covariance matrix, and the range of the mean.
        
         Afterwards, \citet{aden2021sample} proposed the first finite sample complexity upper bound for privately learning unbounded high-dimensional Gaussians, which was nearly tight and matching the lower bound of \citet{kamath2022new}.
         Their method was based on a privatized version of Minimum Distance Estimator \citep{yatracos1985rates} and inspired by the private hypothesis selection of \citet{bun2021private}. One downside of \citet{aden2021sample} is that their method is not computationally efficient. 

         There has been several recent results on computationally efficient learning of unbounded Gaussians~\citep{kamath2022private,kothari2022private,ashtiani2022private}, with the method of~\citet{ashtiani2022private} achieving a near-optimal sample complexity using a sample-and-aggregate-based technique. Another sample-and-aggregate framework that can be used for this task is FriendlyCore~\citep{tsfadia2022friendlycore}.
         The methods of \citet{ashtiani2022private,kothari2022private} also work in the robust setting achieving sub-optimal sample complexities.
         Recently, \citet{alabi2023privately} improved this result in terms of dependence on the dimension. Finally,  \citet{hopkins2023robustness} achieved a robust and efficient learner with near-optimal sample complexity for unbounded Gaussians.

         In the pure DP setting,  \citet{hopkins2022efficient} proposed a method for efficiently learning Gaussians with bounded parameters. There are some other related works on private mean estimation w.r.t.~Mahalanobis distance in \citet{brown2021covariance,duchi2023fast,brown2023fast}. 

        Several other related studies have explored the relationship between robust and private estimation, as seen in \citet{dwork2009differential,georgiev2022privacy,liu2022differential,hopkins2023robustness,asi2023robustness}. Also, there have been investigations into designing estimators that achieve both privacy and robustness at the same time \citep{liu2021robust}.

    \subsection{Parameter estimation for GMMs}
        In this setting, upon receiving i.i.d. samples from a GMM, the goal is to estimate the parameters of the mixture. In the non-private setting there is an extensive line of research for parameter learning of GMMs \citep{dasgupta1999learning,sanjeev2001learning,vempala2004spectral,achlioptas2005spectral,brubaker2008isotropic,kalai2010efficiently,belkin2009learning,hardt2014sharp,hsu2013learning,anderson2014more,regev2017learning,kothari2018robust,hopkins2018mixture,liu2022clustering,feldman2006pac,moitra2010settling,belkin2010polynomial,bakshi2022robustly,liu2021settling,liu2022learning}.

    Under the boundedness assumption there has been a line of work in privately learning parameters of GMMs \citep{nissim2007smooth,vempala2004spectral,chen2023private,kamath2019differentially,achlioptas2005spectral,cohen2021differentially}. The work of  \citet{bie2022private} approaches the same problem by taking the advantage of public data. Recently, \citet{arbas2023polynomial} proposed an efficient method for reducing the private parameter estimation of unbounded GMMs to its non-private counterpart.

    Note that in parameter estimation of GMMs (even in the non-private setting), the exponential dependence of the sample complexity on the number of components is inevitable \citep{moitra2010settling}.

      \subsection{Density estimation for GMMs}
    In density estimation, which is the main focus of this work, the goal is to find a distribution which is close to the underlying distribution w.r.t.~$\dtv$. Unlike parameter estimation, the sample complexity of density estimation can be polynomial in both the dimension and the number of components. In the non-private setting, there has been several results about the sample complexity of learning GMMs~\citep{devroye2001combinatorial, ashtiani2018sample}, culminating in the work of~\citep{ashtiani2018nearly,ashtiani2020near} which gives the near-optimal bound of~$\tilde{\Theta}(kd^2/\alpha^2)$.  

 There are some researches on designing computationally efficient learners for one-dimensional GMMs \citep{chan2014efficient,acharya2017sample,liu2022robust,wu2018improved,li2017robust}. However, for general GMMs it is hard to come up with a computationally efficient learner due to the known statistical query lower bounds \citep{diakonikolas2017statistical}.

 In the private setting, and under the assumption of bounded components, one can use the Private Hypothesis Selection of \citet{bun2021private} or private Minimum Distance Estimator of \citet{aden2021sample} to learn classes that admit a finite cover under the constraint of pure DP. \citet{bun2021private} also proposes a way to construct such finite cover for mixtures of finite classes.

 Later, \citet{aden2021privately} introduced the first polynomial sample complexity upper bound for learning unbounded axis-aligned GMMs under the constraint of approximate DP. They extended the idea of stable histograms used in \citet{karwa2018finite} to learn univariate GMMs. However, this idea can't be generalized to general GMMs, as it is not clear how to learn even a single high-dimensional Gaussian using stability-based histogram. 

     Another related work is the lower bound on the sample complexity of privately learning GMMs with known covariance matrices \citep{acharya2021differentially}.

    In an independent and concurrent work, \citet{ben2023private} proposed a pure DP method for learning general GMMs, assuming they have access to additional public samples. 
    In fact, they use sample compression (with public samples) to find a list of candidate GMMs. Since this list is created using public data, they can simply choose a good member of it using private hypothesis selection. We also create such lists in the process. However, most of the technical part of this paper is dedicated to guarantee privacy (with only access to private data). In fact, it is challenging to privatize compression-based approaches since by definition, they heavily rely on a few data points in the data set.  
    \citet{ben2023private} also study an equivalence between public-private learning, list decodable learning and sample compression schemes.

We also use sample compression in our list decoding algorithm for Gaussians. 
    
    Our work is the first polynomial sample complexity upper bound for privately learning mixtures of general GMMs without any restrictive assumptions. 
    Designing  a private and computationally efficient density estimator for GMMs remains an open problem even in the one-dimensional setting.
\end{section}

\ifthenelse{\boolean{usenatbib}}{
  \bibliographystyle{plainnat}
  \bibliography{refs}
}{
  \printbibliography
}

\section{Appendix}
\begin{lemma}\label{lemma:tv-transorm}
    Let $f$ be an arbitrary function, and $X,Y$ be two random variables with the same support. Then $\dtv(f(X),f(Y)) \leq \dtv(X,Y)$. \\
\end{lemma}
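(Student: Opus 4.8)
The plan is to use the variational (equivalently, supremum-over-events) characterization of total variation distance rather than the integral form stated in the preliminaries, since $f$ may be an arbitrary measurable map between possibly different spaces. Recall that for any two random variables $X, Y$ taking values in a common measurable space $(\Omega, \mathcal{A})$, one has $\dtv(X, Y) = \sup_{A \in \mathcal{A}} \big( \probs{}{X \in A} - \probs{}{Y \in A} \big)$. I would first invoke this identity (a standard fact; it agrees with the $\frac12 \ell_1$ definition when densities exist).

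Next, fix any measurable set $B$ in the range space of $f$. Since $f$ is measurable, $f^{-1}(B)$ is a measurable set in the common domain, and $\{f(X) \in B\} = \{X \in f^{-1}(B)\}$ as events (similarly for $Y$). Hence
\[
\probs{}{f(X) \in B} - \probs{}{f(Y) \in B} = \probs{}{X \in f^{-1}(B)} - \probs{}{Y \in f^{-1}(B)} \leq \dtv(X, Y),
\]
where the inequality is just the variational characterization applied to the event $f^{-1}(B)$. Taking the supremum over all measurable $B$ on the left-hand side yields $\dtv(f(X), f(Y)) \leq \dtv(X, Y)$, which is the claim.

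I expect there to be essentially no serious obstacle here; the only point requiring a word of care is making sure the variational identity is the right tool (so that we are not tied to the density-based definition, which would be awkward since $f(X)$ need not be absolutely continuous even when $X$ is), and noting measurability of $f^{-1}(B)$. As an alternative one-line argument I could instead invoke the optimal coupling: take $(X, Y)$ coupled so that $\probs{}{X \neq Y} = \dtv(X, Y)$; then $X = Y$ implies $f(X) = f(Y)$, so $\probs{}{f(X) \neq f(Y)} \leq \probs{}{X \neq Y} = \dtv(X, Y)$, and since TV distance is bounded above by the disagreement probability of any coupling, the result follows. I would present the first argument as the main proof since it avoids any reliance on the existence of an optimal coupling.
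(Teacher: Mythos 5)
Your main argument is correct and is essentially identical to the paper's proof: both use the supremum-over-events characterization of $\dtv$ and pull back each event $B$ to $f^{-1}(B)$ before taking the supremum. The coupling alternative you mention is also valid but is not what the paper does.
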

\begin{proof}
    \begin{align*}
        \dtv(f(X),f(Y)) = \sup_{A\in \cX} \prob{f(X)\in A} - \prob{f(Y) \in A} = \sup_{A\in \cX} \prob{X\in f^{-1}(A)} - \prob{Y \in f^{-1}(A)} \leq \dtv(X,Y) 
    \end{align*}
\end{proof}

\begin{claim}
    \label{claim:failure_prob_inequality}
    Let $x \geq 1$.
    Then $1 + \frac{\log 2}{x} + \frac{\log x}{x} < 2$.
\end{claim}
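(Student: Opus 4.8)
The plan is to clear the denominator and reduce the claim to a one-variable monotonicity fact. Multiplying the asserted inequality $1 + \frac{\log 2}{x} + \frac{\log x}{x} < 2$ through by $x \ge 1 > 0$, it becomes equivalent to $\log 2 + \log x < x$, i.e.\ to $\log(2x) < x$ for every $x \ge 1$. So it suffices to prove this last inequality.

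To do that, I would introduce the auxiliary function $\phi(x) = x - \log x - \log 2$ on $[1,\infty)$ and show it is strictly positive there. Its derivative is $\phi'(x) = 1 - 1/x$, which is nonnegative for all $x \ge 1$ (and vanishes only at $x=1$), so $\phi$ is nondecreasing on $[1,\infty)$. Hence $\phi(x) \ge \phi(1) = 1 - \log 2$ for all $x \ge 1$. Since $\log$ is the natural logarithm, $\log 2 < 1$, so $\phi(1) > 0$, and therefore $\phi(x) > 0$ throughout $[1,\infty)$. Dividing the resulting inequality $x > \log x + \log 2$ by $x$ recovers the claim.

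There is essentially no real obstacle here; the only point requiring care is the base of the logarithm. The statement is false for $\log_2$ at $x=1$ (the left-hand side then equals exactly $2$), so the argument genuinely relies on $\log 2 < 1$, i.e.\ on $\log$ being the natural logarithm, which is the convention used elsewhere in the paper (e.g.\ in the choice of $\beta'$ in the proof of Theorem~\ref{thm:main}). If one preferred a fully calculation-free write-up, one could instead invoke the standard bound $\log y \le y - 1 < y$ applied to $y = 2x$ after first noting $\log(2x) \le 2x - 1$ and then checking $2x - 1 < x$ fails for large $x$; so the cleaner route really is the monotonicity of $\phi$ above, and that is the version I would present.
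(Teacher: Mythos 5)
Your proof is correct, but it takes a different route from the paper's. You clear the denominator to reduce the claim to $\log(2x) < x$ for $x \ge 1$, and then establish that by showing $\phi(x) = x - \log(2x)$ is nondecreasing on $[1,\infty)$ with $\phi(1) = 1 - \log 2 > 0$. The paper instead works with $f(x) = 1 + \frac{\log 2}{x} + \frac{\log x}{x}$ directly: it computes $f'(x) = \frac{1-\log(2x)}{x^2}$, observes $f$ is concave with its unique critical point at $x = e/2$, and evaluates $f(e/2) = 1 + \frac{2}{e} < 2$. Your argument is the more elementary of the two (no need to locate the maximizer or evaluate $f$ there), while the paper's yields the sharp constant $1 + 2/e$ for the supremum; both suffice for the claim. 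You are also right that the inequality genuinely requires the natural logarithm (it fails at $x=1$ for $\log_2$), and the paper's evaluation $f(e/2) = 1 + 2/e$ implicitly relies on the same convention. Your closing aside about the bound $\log y \le y - 1$ is a bit muddled, but since you present it only as a discarded alternative and fall back on the monotonicity argument, it does not affect the validity of the proof.
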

\begin{proof}
    Let $f(x) = 1 + \frac{\log 2}{x} + \frac{\log x}{x}$.
    Then $f'(x) = -\frac{\log 2}{x^2} + \frac{1 - \log x}{x^2} = \frac{1 - \log(2x)}{x^2}$.
    Note that $f'(x)$ is decreasing so $f$ is concave.
    In addition, $x = e/2$ is the only root of $f'$ so $f$ is maximized at $e/2$.
    Thus, $f(x) \leq f(e/2) = 1 + \frac{2}{e} < 2$.
\end{proof}

\begin{claim} \label{claim: failure-prob}
    For $c_1,c_2,1/\beta \geq 1$, let $\beta' = \frac{\beta}{2ec_1 \log(ec_1c_2/\beta)}$, then $\beta' \leq \frac{\beta}{2ec_1} \leq \frac{1}{ec_1}$, and $c_1\beta' \log(c_2/\beta') \leq \beta$.
\end{claim}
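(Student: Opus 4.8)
The plan is to verify the three assertions in sequence, since each feeds into the next. The first claim, $\beta' \leq \frac{\beta}{2ec_1}$, is immediate: since $c_1, c_2, 1/\beta \geq 1$, we have $ec_1 c_2 / \beta \geq e$, so $\log(ec_1 c_2/\beta) \geq 1$, and dividing $\frac{\beta}{2ec_1}$ by something at least $1$ only decreases it. The bound $\frac{\beta}{2ec_1} \leq \frac{1}{ec_1}$ then follows from $\beta \leq 1$ (which is stronger than what is needed, as $\beta \leq 2$ would suffice).

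For the third claim, $c_1 \beta' \log(c_2/\beta') \leq \beta$, I would substitute the definition of $\beta'$ and reduce to showing
\[
\frac{1}{2e \log(ec_1 c_2/\beta)} \cdot \log\!\left(\frac{2ec_1 c_2 \log(ec_1 c_2/\beta)}{\beta}\right) \leq 1.
\]
Writing $x = ec_1 c_2/\beta \geq 1$ and $y = \log x \geq 0$, the numerator of the left-hand side is $\log(2 x y) = \log 2 + \log x + \log y = \log 2 + y + \log y$. So it suffices to prove $\log 2 + y + \log y \leq 2e\, y$, i.e. $\log 2 + \log y \leq (2e - 1) y$. Since $\log y \leq y$ for all $y > 0$ (and trivially when $y = 0$ the left side is $\log 2 < 1 \leq 0$... wait, need care at $y=0$: there $\log y \to -\infty$, so the inequality holds), and $\log 2 < 1$, the left-hand side is at most $1 + y$, while the right-hand side is $(2e-1)y \geq 4y$ when $y \geq 1$ — but we need it for all $y \geq 0$. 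The clean route: the desired inequality $c_1\beta'\log(c_2/\beta') \le \beta$ is exactly the statement that, after substitution, $1 + \frac{\log 2}{x'} + \frac{\log x'}{x'} \le 2$ for an appropriate $x' \geq 1$; this is precisely Claim~\ref{claim:failure_prob_inequality}. So I would massage the expression $\frac{\log(2xy)}{2ey}$ into the form $\frac{1}{2e}\left(\frac{\log 2 + \log y}{y} + 1\right)$ and apply Claim~\ref{claim:failure_prob_inequality} with $x' = y = \log(ec_1c_2/\beta)$, checking that $y \geq 1$ (which holds since $ec_1c_2/\beta \geq e$), to conclude the bracketed term is at most $2$, hence the whole thing is at most $\frac{2}{2e} < 1$.

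The main obstacle — really the only subtlety — is handling the regime where $y = \log(ec_1c_2/\beta)$ is small, and more importantly making sure that the reduction to Claim~\ref{claim:failure_prob_inequality} is set up with the right variable so that its hypothesis $x \geq 1$ is met; that is why I want to track the inequalities $c_1, c_2 \geq 1$ and $\beta \leq 1$ carefully to guarantee $ec_1c_2/\beta \geq e$ and hence $y \geq 1$. Everything else is routine algebraic rearrangement of logarithms, and using $\log 2 < 1$ together with the concavity bound already packaged in Claim~\ref{claim:failure_prob_inequality}.
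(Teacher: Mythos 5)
Your proof is correct and follows essentially the same route as the paper's: substitute the definition of $\beta'$, expand $\log(c_2/\beta')$ as $\log 2 + \log(ec_1c_2/\beta) + \log\log(ec_1c_2/\beta)$, and apply Claim~\ref{claim:failure_prob_inequality} with $x = \log(ec_1c_2/\beta) \geq 1$. The only cosmetic issue is the muddled aside about the $y=0$ case, which is moot since (as you note) $ec_1c_2/\beta \geq e$ forces $y \geq 1$.
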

\begin{proof}

\begin{align*}
    c_1\beta' \log(c_2/\beta')
    & = c_1 \frac{\beta}{2ec_1 \log(ec_1c_2/\beta)} \cdot \left[ \log(ec_1c_2/\beta) + \log(2) + \log\log(ec_1c_2/\beta) \right] \\
    & = \frac{\beta}{2e} \cdot \left[1 + \frac{\log 2}{\log(ec_1c_2/\beta)} + \frac{\log\log(ec_1c_2/\beta)}{\log(ec_1c_2/\beta)} \right] \\
    & \leq \beta/e.
\end{align*}
where in the last inequality, we used Claim~\ref{claim:failure_prob_inequality} with $x = \log(ec_1c_2/\beta) \geq 1$.
\end{proof}

\end{document}